\newcommand{\norm}[1]{\left\lVert#1\right\rVert}
\newcommand{\parent}[1]{\left(#1\right)}
\newcommand{\llbrack}[1]{\left\llbracket#1\right\rrbracket}
\newcommand{\dotprod}[1]{\left<#1\right>}
\renewcommand{\brace}[1]{\left\{#1\right\}}
\renewcommand{\det}[1]{\left|#1\right|}
\newcommand{\argmin}[1]{\underset{#1}{\text{argmin}} \,}
\newcommand{\argmax}[1]{\underset{#1}{\text{argmax}} \,}
\renewcommand{\min}[1]{\underset{#1}{\text{min}} \,}
\renewcommand{\max}[1]{\underset{#1}{\text{max}} \,}
\newcommand{\pen}{\text{pen}}
\newcommand{\R}{\mathbb{R}}
\renewcommand{\L}{\mathcal{L}}
\renewcommand{\S}{\mathcal{S}}
\newcommand{\N}{\mathcal{N}}
\newcommand{\bN}{\mathbb{N}}
\newcommand{\indic}{\mathds{1}}
\newcommand{\tr}{\text{tr}}
\newcommand{\btheta}{\boldsymbol{\theta}}
\newcommand{\bthetay}{\boldsymbol{\theta}^{(y)}}
\newcommand{\bthetax}{\boldsymbol{\theta}^{(x)}}
\newcommand{\bthetaz}{\boldsymbol{\theta}^{(z)}}
\newcommand{\bthetaex}{\boldsymbol{\theta}^{(e(x))}}
\newcommand{\bthetaexk}{\boldsymbol{\theta}_k^{(e(x))}}
\newcommand{\bthetak}{\boldsymbol{\theta}_k}
\newcommand{\bthetayk}{\boldsymbol{\theta}_k^{(y)}}
\newcommand{\bthetaxk}{\boldsymbol{\theta}_k^{(x)}}
\newcommand{\bthetazk}{\boldsymbol{\theta}_k^{(z)}}
\newcommand{\balpha}{\boldsymbol{\alpha}}
\newcommand{\bbeta}{\boldsymbol{\beta}}
\newcommand{\bsigma}{\boldsymbol{\sigma}}
\newcommand{\bmu}{\boldsymbol{\mu}}
\newcommand{\bOmega}{\boldsymbol{\Omega}}
\newcommand{\btau}{\boldsymbol{\tau}}
\newcommand{\blambda}{\boldsymbol{\lambda}}
\newcommand{\bpi}{\boldsymbol{\pi}}
\newcommand{\by}{\boldsymbol{y}}
\newcommand{\bx}{\boldsymbol{x}}
\newcommand{\bz}{\boldsymbol{z}}
\providecommand{\keywords}[1]
{
  \small	
  \textbf{\textit{Keywords---}} #1
}
\newtheorem{theorem}{Theorem}
\newtheorem{proposition}[theorem]{Proposition}%
\newtheorem{lemma}[theorem]{Lemma}
\newtheorem{corollary}{Corollary}
\begin{document}

\title{Scalable Regularised Joint Mixture Models}

\author[1]{Thomas Lartigue}
\author[1,2]{Sach Mukherjee}
\affil[1]{Statistics and Machine Learning, German Center for Neurodegenerative Diseases, Bonn, Germany}
\affil[2]{MRC Biostatistics Unit, University of Cambridge, UK}
\date{}                     
\setcounter{Maxaffil}{0}
\renewcommand\Affilfont{\itshape\small}

\maketitle

\begin{abstract}%
In many applications, data can be heterogeneous in the sense of spanning latent groups with different underlying distributions. When predictive models are applied to such data the heterogeneity can affect both predictive performance and interpretability. Building on developments at the intersection of unsupervised learning and regularised regression, we propose an approach for heterogeneous data that allows joint learning of (i) explicit multivariate feature distributions, (ii) high-dimensional regression models and (iii) latent group labels, with both (i) and (ii) specific to latent groups and both elements informing (iii). The approach is demonstrably effective in high dimensions, combining data reduction for computational efficiency with a re-weighting scheme that retains key signals even when the number of features is large. We discuss in detail these aspects and their impact on modelling and computation, including EM convergence. The approach is modular and allows incorporation of data reductions and high-dimensional estimators that are suitable for specific applications. We show results from extensive simulations and real data experiments, including highly non-Gaussian data. Our results allow efficient, effective analysis of high-dimensional data in settings, such as biomedicine, where both interpretable prediction and explicit feature space models are needed but hidden heterogeneity may be a concern. 
\end{abstract}

\keywords{Heterogeneous data; High-dimensional regression; Mixture models; Joint Learning; Clustering}

\section{Introduction}

High-dimensional regression models -- including SVMs, the lasso (and its variants), and generic neural networks -- 
 typically assume that the distribution of responses $y$ given (high-dimensional) features $x$ is the same for all samples. However, in many contemporary applications, such homogeneity does not hold: rather, data samples may span multiple latent subgroups, with the regression function itself being subgroup-specific. In settings with data that is heterogeneous in this sense, 
 naive application of high-dimensional regression models may be suboptimal with respect to prediction and moreover can lead to parameter estimates that are misleading in terms of their scientific or domain interpretation.  
 
 Motivated by these concerns, \cite{perrakis2019latent} recently introduced a class of mixture models (``regularised joint mixtures" or RJMs) that couple together the conditional regression model for responses $y$ given features $x$ with a model on the marginal distribution of the features themselves. In their model, a latent subgroup indicator $z \in \{1, \ldots, K \}$ selects between groups, each of which has its own feature distribution and regression model. That is, for each latent subgroup $k$, these models posit a regression model of the form $p(y \mid x, z = k) = N(\beta_k^t x, \sigma_k^2)$ and a feature model $p(x \mid z = k ) = f(\cdot \mid \bthetaxk)$, with the subgroup indicators $z$ treated as latent. Estimation over all model parameters is carried out via an EM algorithm and this allows both signals -- any cluster-type signal in the feature space and potential differences in regression models -- to inform subgroup learning and estimation of subgroup-specific parameters.
 
While a number of related ideas are well known in the literature, including profile regression \citep[see][]{molitor2010bayesian, liverani2015premium}, mixtures of experts \citep[see][]{dayton1988concomitant, jacobs1991adaptive, jordan1994hierarchical, jacobs1997bias} and standard mixture models \citep{McLachlan_Peel_2000}, RJMs differ from these models in important respects. For example, the graphical model and conditional independence structure of the model is different from profile regression and the treatment of the two aspects of the model differs from simply carrying out clustering on a stacked vector of the form $(x,y)$ (RJM typically outperforms such models). As a consequence of these elements, RJM can simultaneously learn subgroup structure in a wide range of settings that challenge classical mixture models. (For a detailed comparison with related models in the literature we refer the interested reader to the RJM paper). 
 
 However, RJM does not scale to high-dimensional problems. This is due to two factors. First, in a high-dimensional regime (i.e. where 
 $x \in \mathbb{R}^p$, with $p$ large)  absent further assumptions the parameters 
 $\beta_k$ and $\bthetaxk$ are themselves high dimensional and, within the framework of RJM, must be estimated for each latent subgroup at each iteration. For computationally demanding high-dimensional estimators this quickly becomes infeasible for large $p$. Second, in the RJM model, as $p$ grows, the relative {\it influence} of the regression part of the model on the overall latent subgroup allocation shrinks: this behaviour is inevitable under the RJM model and means in practice that the influence of the response $y$ is essentially lost in the large-$p$ regime. Given that in most applications the response $y$ captures a particularly important output (and is not ``just another variable"), this can be undesirable and, depending on the nature and location of subgroup-defining signals, may mean that latent subgroup identification fails in the high-dimensional regime.

Motivated by these concerns, in this paper we present a scalable extension to the RJM model (``Scalable RJM" or S-RJM), which is geared towards high-dimensional problems. In a nutshell, the idea is to eschew working exclusively in the high-dimensional, ambient space and instead combine operations in a lower-dimensional space (via a data reduction step) with operations in the original space. This allows us to deliver high-dimensional parameter estimates but to side-step much of the computational burden of RJM. Furthermore, we introduce explicit weighting 
that controls the relative importance of the response $y$ (even as $p$ grows). The data reduction step is computationally convenient but a natural question is whether it can retain signals relevant for subgroup identification. Drawing on recent results in projection \citep{taschler2019model,lartigue2022unsupervised}, we consider a simple PCA step and show that this is indeed effective, provided the projection dimension $q <p$ is chosen appropriately (an aspect we also discuss and for which we propose an adaptive strategy). 
The underlying assumption is that identifying the cluster structure does not require operating in the ambient $p$-dimensional space but can instead be done via a suitably constructed lower dimensional summary. Scalable RJM is modular and allows users to leverage projections or embeddings that may be suitable in the application domain; to illustrate this generality we include also results using a nonlinear autoencoder. 

 Combining these elements gives an overall procedure that is capable of identifying latent structure in a range of settings and of effective high-dimensional parameter learning specific to latent subgroups, whilst scaling well computationally with the number of features $p$. We study the behaviour of the proposed Scalable RJM via extensive experiments on simulated data and real, biomedical data. These results demonstrate the effectiveness of the proposed approach and the fact that it can be used with essentially all tuning parameters set automatically (including number of groups/clusters and the target dimension $q$ of the projection).

Thus, the key contributions of this paper are as follows:
\begin{itemize}
\item We put forward methodology for joint learning of subgroup-specific regression and feature models that is scalable to very high-dimensional problems. The overall procedure in the end returns high-dimensional estimates, including sparsity patterns and feature distributions specific to latent groups (as well as estimates of the group labels) but requires only the same inputs (namely a feature matrix and response vector) as a conventional, homogeneous regression model.
\item We address the issue of balancing the relative contributions of the regression models and feature distributions to subgroup identification and allocation: this ensures that even for large $p$, the effect of the response and regression models is not lost.
\item We present extensive empirical results on both simulated and real, biomedical data. The simulated results include both Gaussian and non-Gaussian examples and the real examples involve high-dimensional, non-Gaussian data from cancer biology. These results show how the proposed methodology can be used in an ``out of the box" manner (with hyper-parameters, including projection dimension $q$ set automatically)
in complex, high-dimensional problems, where group-defining signals may lie in one of both of the regression models and feature distributions.
\end{itemize}

The code for the S-RJM algorithm, as well as the simulation framework that can reproduce the results of this paper, is publicly available at \url{https://github.com/tlartigue/Scalable-Regularised-Joint-Mixture-Models}.

The remainder of the paper is organised as follows: in Section \ref{sec:summary}, we provide a high-level summary of the problem and proposed methodology. In Section \ref{sec:model}, we present in more detail the RJM model of \cite{perrakis2019latent}, including regularisation, with some modifications needed for the current setting. In Section \ref{sec:algorithm}, we describe the EM algorithm developed to maximise the RJM likelihood, including the scalability modifications. In Section \ref{sec:theory} we prove a theoretical convergence result for the S-RJM-EM. Finally, in Section \ref{sec:experiments} we present experimental results to study the performance of the various S-RJM variants in comparison with several existing approaches. Different datasets are considered, ranging from simulated Gaussian, non-Gaussian and mixed data to real biomedical data (RNA-sequencing count data). 

\section{Summary of the model and methodology}\label{sec:summary}
In this section, we present a very high-level overview of the context and motivation for our work. We start by providing a very high level description of the mixed regression model of \cite{perrakis2019latent}. We then introduce the specific challenges that emerge in high dimensions, going on to sketch our proposed solutions to these issues.

\paragraph{Problem Statement.}
We consider a setting where we have an available data matrix $(\bx, \by) \in \R^{n \times (p+1)}$, comprising $n$ samples of a $p-$dimensional \textit{feature vector} $x$ and a corresponding real-valued \textit{response} or output $y$ (we focus on real-valued responses, but our approaches could be straightforwardly modified for discrete $y$). We want to learn a (linear) regression model between $x$ and $y$, as well as a 
multivariate distribution on $x$. Motivated by applications in biomedicine, where graphical models on features such as gene or protein levels can be important, we allow for graphical models describing covariance structure among the components of $x$. However, we suspect that there are latent subgroups in the data. A hierarchical model is necessary to properly account for these latent subgroups. Moreover, since the group labels are unknown at the outset, they need to be estimated via unsupervised learning techniques. To address this, we use a penalised mixture-likelihood maximisation problem which yields estimates for both the latent labels and the group-specific model parameters.

\paragraph{Generative model.}
Suppose there are $K \in \bN^*$ latent subgroups in the data.
To allow for subgroup-specific parameters, we consider a collection of $K$ sets of model parameters: 
$$(\bmu, \bOmega, \balpha, \bbeta, \bsigma) := (\mu_k, \Omega_k, \alpha_k, \beta_k, \sigma_k)_{k=1}^K \, ,$$ 
which will be more properly introduced in Section \ref{sec:model}. For each data point, a \textit{group-label} $z \in \llbrack{1, K}$ is generated from a probability vector $\bpi = (\pi_1, ..., \pi_K)$. An \textit{input} feature vector $x \in \R^p$ is then generated as a multivariate normal $x \sim \N(\mu_z, \Omega_z^{-1})$. Finally, a \textit{response} is generated as $y \sim \N(\alpha_z + \beta_z^t x, \sigma_z^2)$. See \figurename~\ref{fig:generative_model} for a graphical model summarizing this dependence.
\begin{figure}[tbhp]
    \centering
    \includegraphics[width=0.6\linewidth]{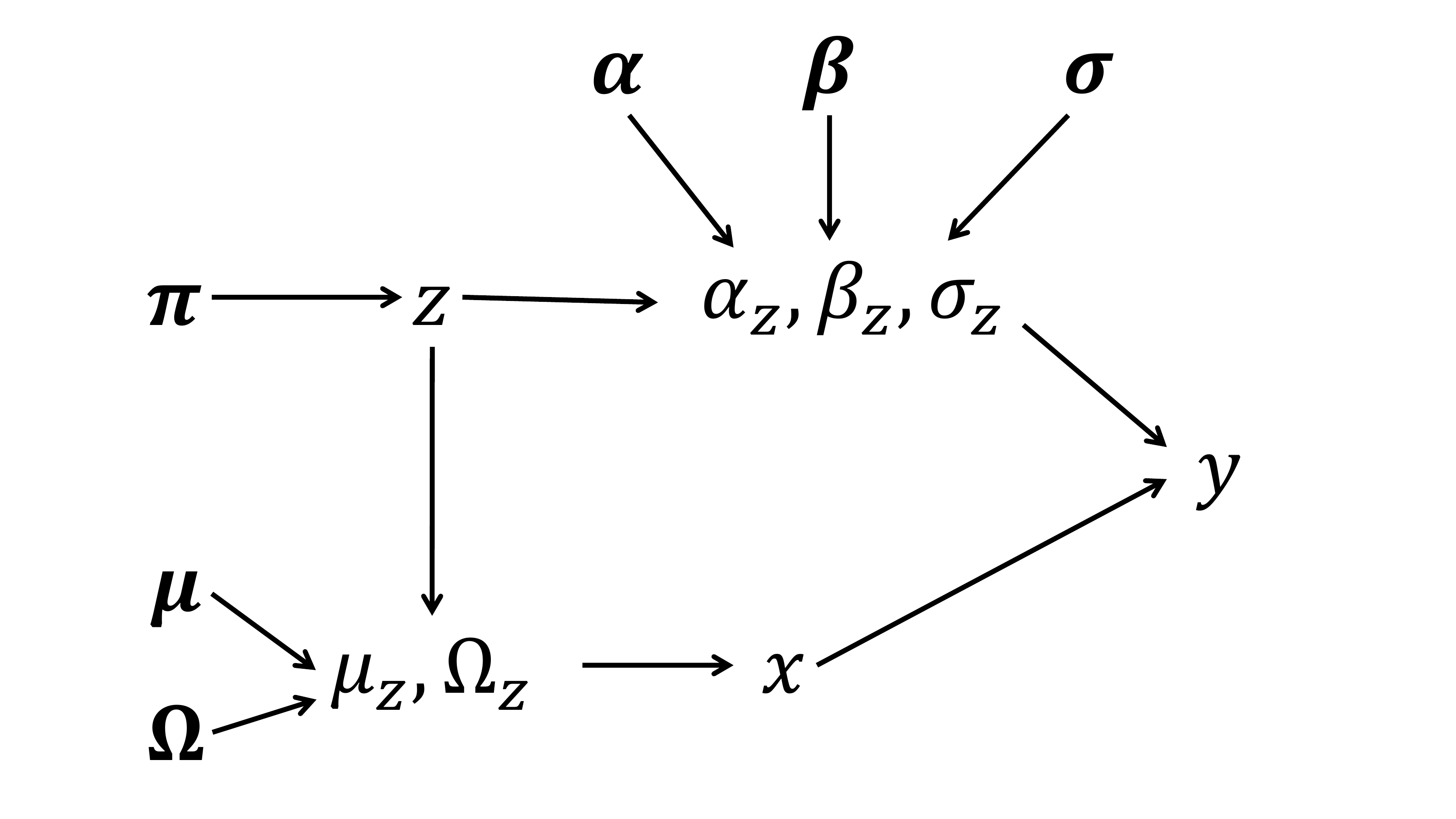}
    \caption{Generative graphical model.}
    \label{fig:generative_model}
\end{figure}

\paragraph{Regularised Joint Mixtures.}
The Regularised Joint Mixtures (RJM) approach of \citet{perrakis2019latent} proposes an algorithm to estimate the model parameter $\btheta := (\bpi, \bmu, \bOmega, \balpha, \bbeta, \bsigma)$ from an observed dataset of size $n\in \bN^*$, $\by:= (y_i)_{i=1}^n$, $\bx := (x_i)_{i=1}^n$. The labels $\bz := (z_i)_{i=1}^n$ are assumed latent (and not seen at all at the outset). Hence RJM maximises in $\btheta$ the observed likelihood $p(\by, \bx ; \btheta)$ with an EM algorithm.
\begin{equation}\label{eq:observed_likelihood}
\begin{split}
    p(\by, \bx ; \btheta) &= \prod_{i=1}^n p(y_i, x_i ; \btheta)\\
    &= \prod_{i=1}^n \sum_{k=1}^K  \varphi_1(y_i ; \alpha_k + \beta_k^t x_i, \sigma_k^2) \varphi_p(x_i; \mu_k, \Omega_k^{-1}) \tau_k  \, .
\end{split}
\end{equation}
Where $\varphi_l(\cdot \, ; \mu, \Sigma)$ is the probability density function (pdf) of a $l-$dimensional normal distribution with mean $\mu$ and covariance $\Sigma$. 

\paragraph{High-dimensional concerns.} Within the RJM model, the density on $(y, x)\in \R^{p+1}$ is jointly Gaussian. We draw attention to two issues that arise in the large $p$ regime. First, 
for the purpose of learning latent class labels 
the response $y$ 
is treated in a way 
as ``just another feature'' (alongside $p$ others). Hence, as $p \! = \! \mathrm{dim}(x)$ grows, the contribution of the term in $y$ to the likelihood \eqref{eq:observed_likelihood} becomes gradually overshadowed by the term in $x$ and for large $p$, $y$ becomes irrelevant in the latent class discovery process. On the other hand, in applied problems, $y$ is not ``just another feature" but rather an output that holds a special status in the given application. As a result, it may be desirable to ensure that the contribution of $y$ (and the associated regression models and differences between them) are not lost in the high dimensional regime. 
A second issue is computational. Since we are interested in problems, notably in biomedicine, where group-specific graphical models on $x$ are of interest, we need to estimate these group-specific models. However, in the RJM formulation this involves calls to 
high-dimensional estimators (e.g. a suitable graphical model estimator) for each group at each M-step. For large $p$ this can quickly become prohibitively expensive.

\paragraph{Scalable Regularised Joint Mixtures.} In this article, 
building on the RJM formulation,
we introduce several modifications that, among other things, 
allow for computationally efficient estimation, 
re-balancing of the contribution of $x$ and $y$ to the likelihood
and the ability to cope with non-Gaussian data. 
First, we map data $x$ into a lower dimensional space via an embedding $e : \R^p \rightarrow \R^q, q \ll p$.
This replaces the data matrix $\bx \in \R^{n \times p}$ by the reduced data $\bx^{(q)} \in \R^{n \times q}$ in the EM.
The output of this EM are the estimated labels $\widehat{\bz}=\widehat{\bz}(\bx^{(q)}, \by)$. Then, we go back to the ambient space $\R^p$, where we estimate the ambient parameter $\widehat{\btheta}$ through penalised maximum likelihood from the labelled dataset $(\bx, \by, \widehat{\bz})$. Alternatively, in the spirit of the EM algorithm, the estimated labels $\widehat{\bz}$ can be replaced by estimated label-probability for this final step. In addition to reducing the computational burden, this data reduction step also reduces the weight of $x$ in the computation of the likelihood. The assumption behind this reduction step is that working with the full dimensional $x$ is not necessary to properly identify the cluster structure, and that a lower dimension summary will carry enough group information (if the function $e$ and dimension $q$ are suitably chosen). When the latent subgroups are mean-separated, random linear projections are known to provide an effective reduction, with theoretical support via the well-known Johnson-Lindenstrauss lemma 
 \citep[see for instance][and subsequent works]{dasgupta1999learning}. 
Even without mean separation, covariance signal can still be recovered after projection using classical PCA, including in high dimensions, as studied in \citet{taschler2019model} and \citet{lartigue2022unsupervised}.
Following these works, we mainly focus on PCA as the reduction $e$ but emphasise that the scheme sketched above is modular and hence essentially any embedding suitable for a given application could be used (we show also below results from S-RJM using a nonlinear autoencoder).

As mentioned, the projection step reduces the weight of $x$ in the likelihood, but unless $q=1$ (which would usually be too restrictive) $x$ may still dominate $y$. The second modification of the algorithm we introduce is to add a re-balancing exponent $1/T$ on the likelihood in $x$. As a default, taking $T\! = \! q$ (with $q=p$ if $x$ was not projected)  ensures that there is equitable balance between $x$ and $y$ in the loss. Other exponents can be chosen in order to control the relative importance of the two terms. With data reduction $e$ 
and balancing, the new observed likelihood takes the form:
\begin{equation*}
 p(\by, \bx; \btheta) = \prod_{i=1}^n \sum_{k=1}^K  \varphi_1(y_i ; \alpha_k + \beta_k^t x_i, \sigma_k^2) \varphi_q^{\frac{1}{T}}(e(x_i); \mu_k, \Omega_k^{-1}) \tau_k  \, .
\end{equation*}
With a combination of data reduction and re-balancing, we can more efficiently deal with high-dimensional $x$, by reducing the computational complexity and enforcing a proper balance between $x$ and $y$ in the EM.
As noted above, inspired by recent results we focus mainly on PCA as a linear projection that can retain covariance signals. Even with this choice, the ideal embedding size $q$ is not obvious since lowering it can have a beneficial regularisation effect on $x$, but at the cost of a loss of information (this type of bias-variance trade-off under embedding is discussed in \cite{lartigue2022unsupervised} and we direct the interested reader to the reference for a fuller discussion).
Hence, we also propose an adaptive scheme that jointly tunes the number of latent subgroups $K$ and the embedding dimension $q$.

\section{RJM model}\label{sec:model}
In this section, we present in full details the RJM model, which we adapt from \cite{perrakis2019latent} with a some modifications in the model parametrisation and regularisation. 
\subsection{Basic model}
Let $E$ and $F$ be two sets, and $K$ be a positive integer. RJM is a parametric hierarchical model that describes the co-dependency between three random variables: an \textit{input} variable $x \in E$, a \textit{response} variable $y\in F$ and a \textit{group-label} variable $z\in \llbrack{1, K}$. The parameter space $\Theta$ of the RJM is a subset of $\R^d$ for a certain $d\in \bN^*$. Since there is a natural block structure on the parameter, the space $\Theta$ can itself be decomposed into three subsets: $\Theta = \Theta^{(x)} \times \Theta^{(y)} \times \Theta^{(z)}$. Then, the model parameter $\btheta \in \Theta$ can be written: $\btheta := (\bthetax, \bthetay, \bthetaz)$, with $\bthetax \in \Theta^{(x)}$, $\bthetay \in \Theta^{(y)}$ and $\bthetaz \in  \Theta^{(z)}$. Because of the hierarchical structure of RJM, each of block of the parameter can be also decomposed into $K$ components: $\bthetax:=(\bthetaxk)_{k=1}^K$, $\bthetay:=(\bthetayk)_{k=1}^K$ and $\bthetaz:=(\bthetazk)_{k=1}^K$. Within the RJM with parameter $\btheta$, the joint probability density function (pdf) on $(y, x, z)$ is:
\begin{equation} \label{eq:hierarchical_conditional_model}
\begin{split}
    p(y, x, z ; \btheta) &= p(y | x, z; \bthetay) p(x | z; \bthetax) p(z ; \bthetaz) \\
    &= \sum_{k=1}^K p(y | x, z=k; \bthetayk) p(x | z=k; \bthetaxk) p(z=k; \bthetazk) \indic_{z=k} \, .
\end{split}
\end{equation}
Throughout this paper, we consider that the variables $(x, y)$ are observed and that the group-label $z$ is hidden, as is the case in many applications. Hence, we are most interested in the marginalised version of \eqref{eq:hierarchical_conditional_model}, the \textit{mixture probability density function}:
\begin{equation} \label{eq:mixture_conditional_model}
    p(y, x; \btheta) = \sum_{k=1}^K p(y | x, z=k; \bthetayk) p(x | z=k; \bthetaxk) p(z=k; \bthetazk)\, .
\end{equation}
Let $n\in \bN^*$ be a sample size. Consider a dataset $(\bx, \by) \in E^n \times F^n$ made of $n$ pairs of inputs and responses: $\bx := (x_i)_{i=1}^n$ and $\by:= (y_i)_{i=1}^n$. Under the independent identically distributed (iid) assumption, we can define for any $\btheta\in \Theta$ the RJM \textit{observed likelihood} with this dataset:
\begin{equation*}
\begin{split}
    p(\by, \bx ; \btheta) &= \prod_{i=1}^n p(y_i, x_i ; \btheta)\\
    &= \prod_{i=1}^n \sum_{k=1}^K p(y_i | x_i, z_i=k; \bthetayk) p(x_i | z_i=k; \bthetaxk) p(z_i=k; \bthetazk) \, .
\end{split}
\end{equation*}
Where $\bz := (z_i)_{i=1}^n$ are the supposed labels associated with each pair $(x_i, y_i)$ in the hierarchical model. In most of this paper, we consider the particular case of a Hierarchical Conditional Gaussian Model where $y \in \R$, $\exists p \in \bN^*$ such that $x \in \R^p$, and the parameters can be written:
\begin{itemize}
    \item $\bthetay := (\balpha, \bbeta, \bsigma) := \parent{ \alpha_k, \beta_k, \sigma_k}_{k=1}^K \in \R^K \times \R^{K \times p} \times \R^K $,
    \item $\bthetax := (\bmu, \bOmega) := \parent{\mu_k, \Omega_k}_{k=1}^K \in \R^{K\times p} \times S_p^{++}(\R)^K$. Where $S_p^{++}(\R)$ is the space of real symmetric positive definite matrices, and
    \item $\bthetaz := \btau := (\tau_k)_{k=1}^K \in \S_K$. Where $\S_K := \brace{\btau \in [0, 1]^K | \sum_k \tau_k = 1}$ is the space of stochastic vectors of size $K$. 
\end{itemize}   
The conditional distributions are as follows:
\begin{itemize}
    \item $y | x, z = k$ follows a $\N(\alpha_k + \beta_k^t x, \sigma_k^2)$,
    \item $x | z = k$ follows a $\N(\mu_k, \Omega_k^{-1})$, and
    \item $z$ follows a categorical distribution on $\llbrack{1, K}$ with parameter $\btau$.
\end{itemize}
For any $q\in \bN^*$, we call $\varphi_q(. ; \mu, \Sigma)$ the probability density function of a $q-$dimensional Gaussian with mean $\mu$ and covariance $\Sigma$. Then, in the absence of any additional regularisation, we have: 
\begin{equation} \label{eq:hierarchical_conditional_gaussian_model}
    p(y, x ; \btheta) = \sum_{k=1}^K \varphi_1(y ; \alpha_k + \beta_k^t x, \sigma_k^2) \varphi_p(x; \mu_k, \Omega_k^{-1}) \tau_k \, ,
\end{equation}
and
\begin{equation} \label{eq:hierarchical_conditional_gaussian_density}
    p(\by, \bx; \btheta) = \prod_{i=1}^n \sum_{k=1}^K \varphi_1(y_i ; \alpha_k + \beta_k^t x_i, \sigma_k^2) \varphi_p(x_i; \mu_k, \Omega_k^{-1}) \tau_k \, .
\end{equation}

\subsection{Regularisation}\label{sec:regularisation}
RJM estimates the model parameter $\widehat{\btheta}$ by maximising the observed likelihood $p(\by, \bx; \btheta)$ \eqref{eq:hierarchical_conditional_gaussian_density}. As usual with mixture models the negative log-likelihood function $\btheta\mapsto - \ln p(\by, \bx; \btheta)$ is non-convex. In this work, we tackle this issue with the help of an Expectation Maximisation (EM) algorithm \citep{dempster1977maximum}. We add a penalty term $\pen(\btheta)$ to the negative log-likelihood. This regularisation is a useful tool to enforce desired properties and structure on the solution $\widehat{\btheta}$, as well as to ensure that the EM procedure is well behaved. The resulting objective of the RJM-EM can be written:
\begin{equation}\label{eq:penalised_objective}
    l(\btheta) := -\sum_{i=1}^n \ln p(y_i, x_i; \btheta) +\pen(\btheta)  \, .
\end{equation}
The final output of our method is an estimate $\widehat{\btheta}$, in the ambient space, with sparse structure on $\bbeta$ and $\bOmega$. However, running at each M-step a structured, sparse estimation of the matrix parameter $\bOmega$ is computationally costly, especially when the dimension $p$ grows. Since scalability with $p$ is the main focus of our work, we avoid costly sparsity-inducing penalties within the EM. Instead, we split the estimation in two steps:
\begin{itemize}
    \item \textbf{Lightweight iterative computations.} First, we run the EM to completion with only very cheap penalties. The goal of this step is solely to get the estimated group-labels. We conduct this step regardless of whether the EM is run on $x$ in the ambient space $\R^p$ or on an embedded $e(x)$ in a lower dimensional space $\R^q$. 
    \item \textbf{Final parameter estimation.} Once we have the estimated labels at hand, we run in the ambient space $\R^p$ a single final ``M-step", with any desired structure-inducing penalties, to get the definite estimate $\widehat{\btheta}$.
\end{itemize}
We now describe each of the regularisation terms, deferring presentation of exact details for the full algorithm to Section \ref{sec:algorithm}. First we discuss the ``cheap" penalties used within the EM, then the structure-inducing penalties used in the final estimation step. Throughout this section, we slightly abuse notation by using the function name``pen'' for different penalty functions. We do this with the understanding that the name of the argument makes it clear which penalty function we refer to.

\paragraph{Lightweight iterative computations.} 
For the block $\bthetaz = (\btau)$, we use a penalty function that prevents vanishing cluster (see Eq.~\eqref{eq:M_step_tau}) to avoid pathological behaviour of the EM. With $\rho \in \R^*_+$:
\begin{equation}\label{eq:penalty_tau}
    \pen(\tau_k|\rho) = -\rho \ln \tau_k  \, .
\end{equation}
For the bloc $\bthetay = (\balpha, \btheta, \bsigma)$, with $\gamma \in \R, \lambda_k \in \R^*_+$, we use the penalty functions:
\begin{equation}\label{eq:penalty_sigma}
     \pen(\sigma_k^2|\gamma) = (p+\gamma) \ln \sigma_k^2 \, ,
\end{equation}
\begin{equation}\label{eq:penalty_beta}
    \pen(\beta_k|\lambda_k) = \frac{\lambda_k}{\sigma_k^2} \norm{\beta_k}_1 \, .
\end{equation}
To put them in perspective with the Bayesian approach of \cite{perrakis2019latent}, these penalties can be viewed as corresponding to the following prior $p(\beta_k, \sigma_k | \lambda_k) = p(\beta_k| \sigma_k, \lambda_k) p(\sigma_k)$ on $\beta_k$ and $\sigma_k$:
\begin{equation*}\label{eq:prior_beta_sigma}
\begin{split}
    &p(\beta_k| \sigma_k, \lambda_k) = \parent{\frac{\lambda_k}{2 \sigma_k^2}}^p exp\parent{-\frac{\lambda_k}{\sigma_k^2}\norm{\beta_k}_1}\, , \\
    &p(\sigma_k) = \parent{\frac{1}{\sigma_k^2}}^{\gamma}  \, .
\end{split}
\end{equation*}
Where $p(\beta_k| \sigma_k, \lambda_k)$ is a proper density, but $p(\sigma_k)$ is not.

For the bloc $\bthetax = (\bmu, \bOmega)$, we use no regularisation. When we run RJM-EM in an embedding space of size $q\ll p$, 
further regularisation may not be needed for $\Omega_k$ (since the dimension is now greatly reduced, i.e. 
the projection already carries a regularisation effect). For the ambient space RJM-EM, we have to shrink the singular $\Omega_k$ at each step to avoid computational issues; for this purpose we use the \textit{Optimal Approximation Shrinkage} of \cite{chen2010shrinkage}. See Appendix \ref{app:oas} for more details on this shrinkage.

The total penalty can be written:
\begin{equation}\label{eq:full_penalty_function}
\begin{split}
    \pen(\btheta) &= - \ln p(\bbeta, \bsigma, \bOmega, \btau |\blambda, \delta, \rho)  \\
    &= \sum_{k=1}^K\parent{ \frac{\lambda_k}{\sigma_k^2}\norm{\beta_k}_1 + (p+\gamma) \ln \sigma_k^2  - \rho \ln \tau_k} + \text{cst} \\
    &\equiv \sum_{k=1}^K \pen(\bthetak) \, .
\end{split}
\end{equation}
We omit from $\pen(\bthetak)$ the constant term that does not intervene in the optimisation.

\paragraph{Final parameter estimation.} 
We now discuss the regularisation used in the final, ambient space, parameter estimation step.
Once the labels (or group-probabilities) have been estimated through the EM, we run one single estimation step, in the ambient $p-$dimensional space, to obtain the final estimates $\widehat{\bbeta}$ and $\widehat{\bOmega}$. At this stage, we are looking for structured sparse estimates of these parameters. Since the cluster discovery part of the problem has already been dealt with, what remains is a supervised hierarchical model estimation problem. There exists an abundant literature on this topic, both for the estimation of regression coefficients $(\beta_1, ..., \beta_K)$, see for instance Joint Lasso \citep{dondelinger2020}, as well as for the estimation of precision matrices $(\Omega_1, ...\Omega_K)$, such as the Joint Graphical Lasso \citep{danaher2014joint}, node-based Graphical Lasso \citep{mohan2014node} and others. Any of these methods can be plugged in at this stage, according to the preferences of the user. 

For the sake of simplicity, we do not consider in our experiments any method that enforces a common structure between latent subgroups through the use of joint penalties. Hence, the $K$ estimations of the group-specific parameters $\beta_k$ and $\Omega_k$ are separable and are run independently from one another. Call $\widehat{p}_{i, k}$ the estimated probability that data point $i$ belongs in the subgroup $k$ at the end of the EM. We introduce a group-weight $\widehat{w}_{i, k}$ that can be either continuous, $\widehat{w}_{i, k} := \widehat{p}_{i, k}$, or discrete, $\widehat{w}_{i, k} := \indic_{ \argmax{l= 1, ..., K} \widehat{p}_{i, l} = k }$. For the regression parameter $\beta_k$, we use the Lasso \citep{tibshirani1996regression} solution:
\begin{equation*} \label{eq:final_M_step_beta}
    \forall k \in \llbrack{1, K}, \quad \widehat{\alpha}_k, \widehat{\beta}_k := \argmin{\alpha_k, \beta_k} \sum_{i=1}^n \widehat{w}_{i, k} (y_i-\alpha_k-\beta_k^t x_i)^2 + 2\lambda_k \norm{\beta_k}_1 \, .
\end{equation*}
Here, the level of sparsity is dictated by the penalty level $\lambda_k$. For the precision matrix $\Omega_k$, we consider the Graphical Lasso \citep{friedman2008sparse} solution:
\begin{equation*} \label{eq:final_M_step_Omega}
    \forall k \in \llbrack{1, K}, \quad \widehat{\Omega}_k := \argmin{\Omega_k}\,  \tr(S_k \Omega_k) - \ln \det{\Omega_k} + \delta_k \norm{\Omega_k}_1  \,, 
\end{equation*}
where $S_k$ is the empirical covariance of group $k$. Depending on the nature of data, or our assumptions thereof, there can be different natural ways to estimate $S_k$. We consider three in this paper.
\begin{itemize}
    \item \textbf{Gaussian data.} When the data is assumed to be Gaussian-like, the most natural $S_k$ is the naive empirical covariance estimator:
    \begin{equation*} \label{eq:final_covariance_estimator}
        S_k := \frac{1}{n_k} \sum_{i=1}^n  \widehat{w}_{i, k} (x_i -\widehat{\mu}_k) (x_i -\widehat{\mu}_k)^t \, , 
    \end{equation*}
    with $n_k :=\sum_{i=1}^n \widehat{w}_{i, k}$ and $\widehat{\mu}_k :=  \frac{1}{n_k} \sum_{i=1}^n \widehat{w}_{i, k} x_i$. In this case, the estimate $\widehat{\Omega}_k$ is exactly the traditional Graphical Lasso estimator of \cite{friedman2008sparse}.
    \item \textbf{Non-Gaussian continuous data.}  When we suspect that the data may come from a more heavy-tailed distribution, we can replace $S_k$ by the non-paranormal estimator of \cite{liu2009nonparanormal} instead.
    \item \textbf{Mix of binary and continuous data.} Likewise, when we observe that the data is either fully binary or a mix of binary and continuous features, then the binary/mixed-data estimator of \cite{fan2017high} is a natural candidate for $S_k$.

\end{itemize}

\subsection{A short word on fully data-adaptive variants.}
The RJM model is a hierarchical model with $K$ subgroups. Because of the unsupervised nature of our applications, the number $K$ is actually unknown. Since RJM is a likelihood-based model, we argue that adaptive selection of $K$ can be carried out using classical criteria such as the Akaike Information Criterion \citep[AIC,][]{akaike1974new} or the Bayesian Information Criterion \citep[BIC,][]{schwarz1978estimating}. Additionally, when the input vector $x\in \R^p$ is embedded in a lower dimension $\R^q$, the hyper-parameter $q$ also needs to be set. Following \citet{taschler2019model} we set the embedding size $q$ with the help of stability metrics \citep{lange2004stability, hennig2007cluster}. In the experiments of Section \ref{sec:exp_mod_sel} we therefore propose a combined AIC-plus-stability scheme for the {\it joint} selection of $(K, q)$. Looking ahead to results, we find that the resulting adaptive RJM is the better performing of all tested methods. This scheme provides an ``out-of-the-box" solution for users and is the one we recommend in practice.

\section{The RJM-EM algorithm}\label{sec:algorithm}
In this section, we describe the EM algorithm implemented to optimised the observed likelihood of our model \eqref{eq:penalised_objective}. Then, we introduce two variants to address the problem of the scalability with ambient dimension $p$: the projected RJM, and the balanced RJM. Although these variants are introduced independently for the sake of clarity, they are fully compatible, and will be combined in the experiments.
\subsection{Basic EM}\label{sec:basic_EM}
We implement a typical EM algorithm, mostly identical to the one found in \cite{perrakis2019latent}. One crucial difference is that we change the regularisation in $\bbeta$ from $\frac{\lambda_k}{\sigma_k} \norm{\beta_k}_1$ to $\frac{\lambda_k}{\sigma_k^2} \norm{\beta_k}_1$. With this modification, we solve the M-step jointly in all parameters instead of doing a block-wise M-step. Hence our algorithm is a regular EM, not a Expectation/Conditional Maximisation (ECM). Another difference, as mentioned in the previous section, is that we do not use any $l_1$ penalty on $\bOmega$ inside the EM (but do so in the final step outside the EM).
The penalisation level $\lambda_k$ on $\beta_k$ starts with the value $\frac{n}{K}$. Then, after the first EM step where the estimated classification does not change, a new value for each $\lambda_k$ is set according to the \textit{fixed-penalty lasso} (FLasso) rule of \citep{perrakis2019latent}. We direct the reader to their work for an analysis of this procedure. Regarding the stopping criterion of the algorithm: the EM stops when the magnitude of the relative shift in both observed likelihood and parameter values go below a pre-determined small threshold, or alternatively after a pre-determined large number of steps.

The EM objective function \eqref{eq:penalised_objective} can be written:
\begin{equation*} \label{eq:objective}
    l(\btheta)
    = -\sum_{i=1}^n \ln\bigg( \sum_{k=1}^K  p(y_i, x_i, z_i=k; \bthetak) \bigg)  +\pen(\btheta) \, .
\end{equation*}
The associated function $Q$ optimised at the M-step:
\begin{equation}\label{eq:Q_function}
     Q(\btheta| \btheta^{(t)}) = -\sum_{i=1}^n \sum_{k=1}^K p_{i, k}^{(t)} \ln p(y_i, x_i, z_i=k; \bthetak) +\pen(\btheta) \, .
\end{equation}
The E-step is as in \cite{perrakis2019latent}:
\begin{equation} \label{eq:E_step}
    p_{i, k}^{(t)} := p(z_i=k|y_i, x_i;\widehat{\btheta}^{(t)}) =  \frac{p\parent{y_i| x_i,z_i=k; \widehat{\btheta}_k^{(y, t)}} p\parent{x_i|z_i=k; \widehat{\btheta}_k^{(x, t)}} \widehat{\tau}_k^{(t)}}{\sum_{l=1}^K p\parent{y_i| x_i,z_i=l; \widehat{\btheta}_l^{(y, t)}} p\parent{x_i|z_i=l;  \widehat{\btheta}_l^{(x, t)}} \widehat{\tau}_l^{(t)}} \, .
\end{equation}
The M-step is $\btheta^{(t+1)}= \argmin{\btheta} Q(\btheta| \btheta^{(t)})$. When the penalty is separable into blocks: $pen(\btheta) = \sum_{k=1}^K \big(\pen(\bthetayk) + \pen(\bthetaxk) + \pen(\bthetazk)\big)$, we have: 
\begin{equation*}
\begin{split}
    Q(\btheta| \btheta^{(t)}) = \sum_{k=1}^K \Bigg(&-\sum_{i=1}^n p_{i, k}^{(t)} \ln p(y_i | x_i, z_i=k; \bthetayk) + \pen(\bthetayk)\\
    &-\sum_{i=1}^n p_{i, k}^{(t)}\ln p(x_i | z_i=k; \bthetaxk)  + \pen(\bthetaxk) \\
    &-\sum_{i=1}^n p_{i, k}^{(t)}\ln \tau_k+ \pen(\bthetazk)\Bigg) \, .
\end{split}
\end{equation*}
Then, the M-step can be decomposed in several independent optimisation problems. In the MoCG case, the M-step can be written:
\begin{gather*}
    \forall k \in \llbrack{1, K}, \quad \min{\alpha_k, \beta_k} \sum_{i=1}^n p_{i, k}^{(t)} (y_i-\alpha_k-\beta_k^t x_i)^2 + 2\lambda_k \norm{\beta_k}_1 \, ,\\
    \forall k \in \llbrack{1, K}, \quad \min{\sigma_k}  \frac{\sum_{i=1}^n p_{i, k}^{(t)}(y_i-\widehat{\alpha}_k-\widehat{\beta}_k^t x_i)^2+2 \lambda_k \norm{\widehat{\beta}_k}_1}{\sigma_k^2} +(n_k^{(t)}+2p+2\gamma) \ln \sigma_k^2  \, ,\\
    \mu_k \text{: Gaussian MLE}\, ,\\
    \Omega_k \text{: Gaussian MLE + \textit{Optimal Approximation Shrinkage}}\, ,\\
    \max{\btau \in \S_K} \sum_{k=1}^K (n_k^{(t)} +\rho) \ln \tau_k \, . 
\end{gather*}
The M-step is a exact maximisation, not a block coordinate update. Indeed, even though the solution $\widehat{\sigma}_k$ depends on $(\widehat{\alpha}_k, \widehat{\beta}_k)$, $\widehat{\alpha}_k$ and $\widehat{\beta}_k$ are not themselves dependent on $\widehat{\sigma}_k$. Hence, the joint maximisation in $(\alpha_k, \beta_k, \alpha_k)$ is possible by first computing $\widehat{\alpha}_k, \widehat{\beta}_k$ and then $\widehat{\sigma}_k$ from them. The solutions to the M-step are:
\begin{gather}
    \widehat{\alpha}_k, \widehat{\beta}_k := \text{Lasso}(\by, \bx, \text{weights} = \{p_{i, k}^{(t)}\}_{i=1}^n, \text{penalty} = 2 \lambda_k) \, , \label{eq:M_step_alpha_beta}\\
    \widehat{\sigma}_k^2 := \frac{\sum_{i=1}^n p_{i, k}^{(t)}(y_i-\widehat{\alpha}_k-\widehat{\beta}_k^t x_i)^2+2 \lambda_k \norm{\widehat{\beta}_k}_1}{n_k^{(t)}+2p+2\gamma} \, , \label{eq:M_step_sigma} \\
    \widehat{\mu}_k := \frac{\sum_{i=1}^n p_{i, k}^{(t)} x_i}{n_k^{(t)}}  \, ,  \label{eq:M_step_mu}\\
    \widehat{\Omega}_k^{-1} = (1-\hat{\delta}) S_k  + \hat{\delta} \frac{\tr(S_k)}{p} I_P\, , \text{ with } S_k := \frac{\sum_{i=1}^n p_{i, k}^{(t)} (x_i-\widehat{\mu}_k) (x_i-\widehat{\mu}_k)^t}{n_k^{(t)}}  \, , \label{eq:M_step_Omega}\\
    \widehat{\tau}_k := \frac{n_k^{(t)}+\rho}{n+K\rho} \, . \label{eq:M_step_tau}
\end{gather}
Where in Eq.~\eqref{eq:M_step_Omega}, the coefficient $\hat{\delta}$ is the \textit{Optimal Approximation Shrinkage} defined from $S_k$, $p$ and $n$ \cite[Eq. 23]{chen2010shrinkage}.

\subsection{Projected-EM} \label{sec:projected_EM}
In this section, we introduce the \textit{projected} variant of RJM-EM, which is meant to make the whole procedure scalable with the dimension $p$.

\paragraph{Concept.} As the number of features $p$ in the \textit{input} variable $x$ grows, so does the computational burden of 
covariance-related estimation in the M-step \eqref{eq:M_step_Omega}. 
Additionally, with a widening gap between the sample size $n$ and the dimension $p$, a stronger regularisation on $\bOmega$ can be necessary to avoid pathological computations. To address these issues, we introduce a \textit{projected} variant of the RJM-EM. In a similar fashion to \cite{taschler2019model}, we start by mapping $x$ from its ambient space $\R^p$ into a lower dimension embedding space $\R^q$, with $q \ll p$. Then, we perform the EM routine within the embedding space, with $q-$dimensional parameters. As a result, 
$q \ll p$ controls the computational cost of the EM. Moreover, the gap between the embedding size $q$ and the sample size $n$ can also be controlled in order to avoid pathological matrix computations. As a side effect, the projection also re-balances the relative contribution of $x$ and $y$ to the E-step (although not as effectively as the \textit{balanced} RJM-EM variant introduced in Section \ref{sec:balanced_EM}). Finally, as studied in \cite{diaconis1984asymptotics}, linear projection steps can make high dimensional variables more ``Gaussian-like", which, 
if using a Gaussian mixture in the low-dimensional space (as we do here), means a better specified model. Naturally, despite these benefits, reducing the data into a lower dimensional space loses information and we discuss the trade-off below.

\paragraph{New objective function.} 
To begin with, we formulate the objective function of the \textit{projected} RJM in the most generic terms, without the Gaussian assumption. We start from the unspecified hierarchical model on $(x, y, z)$ formulated in Eq.~\eqref{eq:hierarchical_conditional_model}. Consider an embedding function $e: \R^p \rightarrow \R^q$. We assume that the embedded random variable $( e(x)|z)$ inherits from $x$ a parametric pdf in the embedding space. We define $\bthetaex$, a reduced version of the ambient space parameter $\bthetax$, that is sufficient to describe the distribution of $(e(x)|z)$, and call $p(e(x) | z; \bthetaex)$ the associated pdf. To define the new objective function, we simply replace in the \textit{mixture density} \eqref{eq:mixture_conditional_model} the pdf $p(x | z; \bthetax)$ on the ambient $x$ by the pdf $p(e(x) | z; \bthetaex)$ on the embedded $e(x)$. The resulting \textit{projected mixture density} is written:
\begin{equation}\label{eq:projected_mixture_conditional_model}
    p(y, x; \btheta) := \sum_{k=1}^K p(y | x, z=k; \bthetayk) p(e(x) | z=k; \bthetaexk) p(z=k; \bthetazk) \, .
\end{equation}
Assume for instance that the embedded data follows a Hierarchical Gaussian model: $(e(x) | z=k) \sim \N(\mu_k, \Omega_k^{-1})$, where now the mean vectors $\bmu := (\mu_k)_{k=1}^K \in \R^{K\times q}$ and the precision matrices $\bOmega := (\Omega_k)_{k=1}^K \in  S_q^{++}(\R)^K$ live in the embedding space. Then, the most natural parametrisation choice is $\bthetaex := (\bmu, \bOmega)$, such that: 
\begin{equation*}
    p(e(x) | z, \bthetaex) := \sum_{k=1}^K \indic_{z = k}\,  \varphi_q(e(x); \mu_k, \Omega_k^{-1}) \, ,
\end{equation*}
and
\begin{equation}\label{eq:projected_mixture_conditional_model2}
    p(y, x; \btheta) := \sum_{k=1}^K \varphi_1(y ; \alpha_k + \beta_k^t x, \sigma_k^2) \varphi_q(e(x); \mu_k, \Omega_k^{-1}) \tau_k  \, .
\end{equation}
Consider for example the special case of a linear embedding $e(x) = W^t x$, for a certain $W \in \R^{p \times q}$. If the ambient space data $x$ follows a Hierarchical Gaussian model in $\R^p$, then, the embedded $W^t x$ also follows a Hierarchical Gaussian model in $\R^q$, inherited from the ambient one.

\paragraph{EM computations.} Consider $n$ samples $(x_i, y_i)_{i=1}^n$. The penalised negative log-likelihood corresponding to the mixture density \eqref{eq:projected_mixture_conditional_model} is:
\begin{equation}  \label{eq:objective_projected}
    l(\btheta) = -\sum_{i=1}^n \ln\bigg( \sum_{k=1}^K p(y_i | x_i, z_i=k; \bthetayk) p(e(x_i) | z_i=k; \bthetaexk) \tau_k\bigg)  +\pen(\btheta) \, , 
\end{equation}
The E-step is changed accordingly:
\begin{equation} \label{eq:E_step_projected}
    p_{i, k}^{(t)} :=  \frac{p\parent{y_i| x_i,z_i=k; \widehat{\btheta}_k^{(y, t)}} p\parent{e(x_i)|z_i=k; \widehat{\btheta}_k^{(e(x), t)}} \widehat{\tau}_k^{(t)}}{\sum_{l=1}^K p\parent{y_i| x_i,z_i=l; \widehat{\btheta}_l^{(y, t)}} p\parent{e(x_i)|z_i=l;  \widehat{\btheta}_l^{(e(x), t)}} \widehat{\tau}_l^{(t)}} \, .
\end{equation}
For the M-step, only the optimisation in $\bthetaxk$ is different from the ambient space case. In the Gaussian case, with no regularisation on either $\bmu$ or $\bOmega$, the M-step solutions are simply the Maximum Likelihood Estimates:
\begin{gather*}
    \widehat{\mu}_k := \frac{\sum_{i=1}^n p_{i, k}^{(t)} e(x_i)}{n_k^{(t)}}  \, ,  \\
    \widehat{\Omega}_k^{-1} := \frac{\sum_{i=1}^n p_{i, k}^{(t)} (e(x_i)-\widehat{\mu}_k) (e(x_i)-\widehat{\mu}_k)^t}{n_k^{(t)}}   \, .
\end{gather*}

\paragraph{Discussion.}
When defining the \textit{projected} pdf \eqref{eq:projected_mixture_conditional_model2}, 
we choose to keep a generative relation for $y$ of the form $x \mapsto y$ instead of $e(x) \mapsto y$. This is because in practice a large part of the computational cost for large $p$ comes from the optimisation of $p(x | z, \bthetax)$, which, in the Gaussian case, means an optimisation over $S_p^{++}(\R)$, the space of $p \times p$ covariance matrices. Whereas the model $p(y | x, z, \bthetay)$ only involves a $p-$dimensional vector parameter. In our applications, we consider that preserving the $p-$dimensional information present within the model $x \in \R^p \mapsto \beta_k^t x \in \R$ is worth paying the cost of solving a $p-$dimensional regression problem at each M-step.
In other applications, e.g. with very large $p$, or where computational efficiency is still more important, then the generative model $(y | x, z = k) \sim \N(\beta_k^t e(x), \sigma_k^2)$ could be used instead. 
Also note that, since the function $\cdot \mapsto \varphi_q(\cdot; \mu_k, \Omega_k^{-1})$ is a pdf over $\R^q$, then the integral of $\varphi_q(e(x); \mu_k, \Omega_k^{-1})$ over $x\in \R^p$ is not 1 (it is infinite). This means that the observed pdf $p(y, x; \btheta)$ \eqref{eq:projected_mixture_conditional_model} is not a proper density function over $(x, y)$. This has no bearing on the EM computations or theory. As discussed in \cite{delyon1999convergence}, the observed variables $(x, y)$ are treated as fixed constants. The property that actually matters is that the complete pdf $p(y, x, z; \btheta)$ is integrable with $z$.

\paragraph{Embedding trade-off.} With a preliminary projection step, we naturally incur a loss of information. The success of an embedded EM relies on the assumption that enough of the signal in $x$ is preserved after data reduction. It is well known that random projection 
can preserve mean signals
\citep[see][and many subsequent works]{johnson1984extensions, dasgupta1999learning}, as also PCA projection \citep{fradkin2003experiments, deegalla2006reducing}. More recent works \citep{taschler2019model,lartigue2022unsupervised} have studied projections in the absence of any mean difference (i.e. latent groups co-located in the ambient space), 
arguing that 
differential {\it covariance signal} can still be preserved, especially under PCA projection. 
All these works support the notion that signals relevant to detection of the latent subgroup structure in $x$ can be preserved by the projection step. 
Given all these considerations, the chosen embedding size $q$ should strike a good balance between computational efficiency, preservation of information and regularisation \citep[see][for a more detailed analysis of this trade-off]{lartigue2022unsupervised}. In this work, we propose an empirical procedure to set $q$ adaptively in Section \ref{sec:exp_mod_sel}. 

\subsection{Balanced-EM} \label{sec:balanced_EM}
In this section, we introduce the \textit{balanced} variant of RJM-EM which allows the user to control the relative influence of the signals in $y$ and $x$ via the E-step. The \textit{balanced} and \textit{projected} RJM-EM are fully compatible with each other.

\paragraph{The balance problem.} The complete log-likelihood of the model \eqref{eq:hierarchical_conditional_model} is decomposed as a sum of three conditional log-likelihoods: $\ln p(y_i | x_i, z_i=k; \bthetayk)$, $\ln p(x_i | z_i=k; \bthetaxk)$ and $\ln p(z_i=k; \bthetazk)$. The magnitude of each of these terms depends on the dimension of the corresponding variable. In particular, when $p$ is large, the term in $x$ completely dominates the sum and the contribution of $y$ is lost.
This in turn means that the term in $x$ dominates the E-step \eqref{eq:E_step}, hence the subgroup assignment within the EM. In applications, this behaviour may be undesirable, because, as noted previously, the response $y$ usually captures a particularly important aspect of an applied problem and in that sense is not ``just another variable". Hence, we are interested in solutions that allow us to control the balance between the influence of $x$ and $y$ on subgroup attribution. 

\paragraph{Relation with projection.} In Section \ref{sec:projected_EM}, we introduced a projection on $x$ in order to control computational burden. As a side effect, reducing the ``block'' of $x$ from a $p-$ to $q-$dimensions also reduces its prominence in the E-step \eqref{eq:E_step}. Hence, the cluster identification is less dominated by $x$. Despite this effect, we argue that the role of controlling the balance between $y$ and $x$ in the E-step should not fall to the embedding size $q$, since there are already several other competing factors that play into choosing $q$. 
For instance, to obtain an even balance between the influence of $y$ and $x$ for cluster assignment in the E-step would imply $q=1$, but this would imply a massive loss of information (reducing the feature signal to univariate). Rather, we propose to preserve information on $x$ by choice of $q$ (discussed below) whilst simultaneously controlling the E-step contribution.

\paragraph{Balanced RJM.} To address the balance problem without loss of information on $x$, we introduce the \textit{balanced} variant of the RJM EM algorithm. In a similar fashion to the \textit{projected} RJM, we introduce a new, improper, density function, the \textit{balanced mixture density}:
\begin{equation}\label{eq:balanced_mixture_conditional_model}
    p(y, x; \btheta) := \sum_{k=1}^K p(y | x, z=k; \bthetayk) p^{\frac{1}{T}}(x | z=k; \bthetaxk) p(z=k; \bthetazk) \, ,
\end{equation}
where $T \in \R_+^*$ is a \textit{balancing parameter}. By adjusting $T$ we can re-balance the contribution of $x$ to the likelihood (relative to those of $y$ and $z$). Typically, by taking $T>1$, we reduce the impact of $x$ on the group-labels estimation of the E-step. In addition, if $pen(\bthetaxk)\neq0$, we also replace $\pen(\bthetayk) + \pen(\bthetaxk) + \pen(\bthetazk)$ by $\pen(\bthetayk) + \frac{1}{T} \pen(\bthetaxk) + \pen(\bthetazk)$. This way, since the optimisation in $\bthetax, \bthetay$ and $\bthetaz$ are separable, the balancing parameter $T$ affects only the E-step and not the M-step.\\ 
\\
The new objective function of the EM can be written:
\begin{equation*} \label{eq:objective_tmp}
    l(\btheta)= -\sum_{i=1}^n \ln\bigg( \sum_{k=1}^K p(y_i | x_i, z_i=k; \bthetayk) p^{\frac{1}{T}}(x_i | z_i=k; \bthetaxk) \tau_k\bigg)  +\pen(\btheta) \, , 
\end{equation*}
The E-step becomes:
\begin{equation} \label{eq:E_step_tmp}
    p_{i, k}^{(t)} :=  \frac{p\parent{y_i| x_i,z_i=k; \widehat{\btheta}_k^{(y, t)}} p^{\frac{1}{T}}\parent{x_i|z_i=k; \widehat{\btheta}_k^{(x, t)}} \widehat{\tau}_k^{(t)}}{\sum_{l=1}^K p\parent{y_i| x_i,z_i=l; \widehat{\btheta}_l^{(y, t)}} p^{\frac{1}{T}}\parent{x_i|z_i=l;  \widehat{\btheta}_l^{(x, t)}} \widehat{\tau}_l^{(t)}} \, .
\end{equation}
And the function $Q$:
\begin{equation*}
\begin{split}
    Q(\btheta| \btheta^{(t)}) = \sum_{k=1}^K \Bigg(&-\sum_{i=1}^n p_{i, k}^{(t)} \ln p(y_i | x_i, z_i=k; \bthetayk) + \pen(\bthetayk)\\
    &-\frac{1}{T}\sum_{i=1}^n p_{i, k}^{(t)}\ln p(x_i | z_i=k; \bthetaxk)  + \frac{1}{T} \pen(\bthetaxk) \\
    &-\sum_{i=1}^n p_{i, k}^{(t)}\ln \tau_k+ \pen(\bthetazk)\Bigg) \, .
\end{split}
\end{equation*}
Hence, the M-step is unchanged. The \textit{balanced} RJM and the \textit{projected} RJM can be freely combined into the \textit{balanced} \textit{projected} RJM by replacing $p(x | z=k; \bthetaxk)$ with $p(e(x) | z=k; \bthetaexk)$ in the equations of this section.

\paragraph{Remarks.} As was the case for the \textit{projected} RJM, with the addition of the term $\frac{1}{T}$, the new objective function does not integrate to 1 anymore, hence cannot formally be interpreted as a likelihood. Adding the corresponding normalisation term would resolve this, but would also counteract the effect of the re-balancing and modify the dynamic of the optimisation in unwanted ways. From an optimisation point of view, losing the probabilistic interpretation is not problematic; as already mentioned, the EM optimisation theory does not actually require the optimised function to have an integral over $(y, x)$ of 1 \citep[see for instance][]{delyon1999convergence}. Moreover, the addition of $T$ does not affect the interpretation of the model parameters $\bthetax, \bthetay$ and $\bthetaz$ (and the M-step is unaffected). 

\section{Convergence of the RJM-EM algorithm}\label{sec:theory}
In this section, we state a convergence Proposition for the \textit{balanced projected} RJM-EM. With certain conditions on the regularisation, we can guarantee that the EM sequence will converge towards the set of stationary points of the objective function. This is an application of a convergence Theorem from \citet{delyon1999convergence}, which itself builds on the seminal works of \citet{wu1983convergence} and \citet{lange1995gradient}, and tailored for models within the exponential family of distributions.

Within the \textit{balanced projected} RJM model, the unpenalised complete likelihood is:
\begin{equation*}
    p(\by, \bx, \bz; \btheta) = \prod_{i=1}^n \sum_{k=1}^K  \varphi_1(y_i ; \alpha_k + \beta_k^t x_i, \sigma_k^2) \varphi_q^{\frac{1}{T}}(e(x_i); \mu_k, \Omega_k^{-1}) \tau_k \indic_{z_i=k} \, ,
\end{equation*}
and the corresponding observed likelihood is:
\begin{equation*}
     p(\by, \bx; \btheta) = \prod_{i=1}^n \sum_{k=1}^K  \varphi_1(y_i ; \alpha_k + \beta_k^t x_i, \sigma_k^2) \varphi_q^{\frac{1}{T}}(e(x_i); \mu_k, \Omega_k^{-1}) \tau_k   \, .
\end{equation*}
Borrowing notations from \cite{delyon1999convergence}, we can express the penalised complete likelihood as:
\begin{equation*}
\begin{split}
    f(\bz; \btheta) &= p(\by, \bx, \bz; \btheta) exp\parent{-\pen(\btheta)} \\
    &= exp\parent{-\pen(\btheta) + \sum_{i=1}^n \sum_{k=1}^K \indic_{z_i=k} \ln p(y_i, x_i, z_i=k ; \btheta)} \\
    &= exp\parent{-\psi(\btheta) + \dotprod{\tilde{S}(z), \phi(\btheta)}}\\
    &= exp (L(s, \btheta))\, .
\end{split}
\end{equation*}
Where $\tilde{S}(z) \! := \! \parent{\indic_{z_i=k}}_{i, k \in \llbrack{1, n}\times\llbrack{1, K}} \in \R^{n\times K}$, $\phi(\btheta) \! :=\!  \parent{\ln p(y_i, x_i, z_i=k ; \btheta)}_{i, k \in \llbrack{1, n}\times\llbrack{1, K}} \in \R^{n\times K}$, $\psi(\btheta) \! := \! \pen(\btheta)$, and $L(s, \btheta) \! := \!  -\psi(\btheta) + \dotprod{\tilde{S}(z), \phi(\btheta)}$. The sufficient statistic function $\tilde{S}(z)$ takes its values inside the open and convex set $\S := \R_+^{n \times K}$. With $\nu$ a combination of Dirac measures: $\nu(d\bz) = \prod_{i=1}^n \sum_{k=1}^K d \delta_{k}(z_i)$, the E-step of the EM algorithm can be formally expressed as:
\begin{equation*}
    \bar{s}(\btheta) := \int_{\bz} \tilde{S}(\bz) \frac{p(\by, \bx, \bz; \btheta)}{p(\by, \bx; \btheta)} \nu(d\bz) \in \S \, .
\end{equation*}
Such that $(\bar{s}(\btheta))_{i, k} =p(z_i=k|y_i, x_i; \btheta)$. For a given $s\in \S$, the M-step can be formally expressed as:
\begin{equation*}
    \widehat{\btheta}(s) := \argmax{\btheta \in \Theta} L(s, \btheta)  \, .
\end{equation*}
Then, a full EM-step is:
\begin{equation*}
    \btheta^{(t+1)} := \widehat{\btheta}(\bar{s}(\btheta^{(t)})) \, .
\end{equation*}
We call $\L := \brace{\btheta \in \Theta \mid \btheta := \widehat{\btheta} \circ \bar{s}(\btheta)}$ the set of fixed points of the EM, and consider $d(\cdot, \cdot)$ a point to set distance between elements of $\Theta$ and closed sets of $\Theta$. With $g(\btheta) := p(\by, \bx; \btheta) exp\parent{-\pen(\btheta)}$ the penalised observed likelihood, the objective function of the EM is $l(\btheta):=-\ln g(\btheta)$.
The following Proposition is an application of Theorem 1 from \cite{delyon1999convergence}.

\begin{proposition}[Convergence of the \textit{balanced projected} RJM-EM]\label{thm:convergence_EM}
    Assume that 
    \begin{itemize}
        \item[\textbf{(C1)}] The function $\pen(\btheta)$ is twice continuously differentiable.
        \item[\textbf{(C2)}] The M-step function $\widehat{\btheta}(s)$ is well defined and is continuously differentiable.
        \item[\textbf{(C3)}] The EM sequence $(\btheta^{(t)})_{t \in \bN}$ remains within a compact of the parameter space $\Theta$.
    \end{itemize} 
    Then, the sequence $\parent{l(\btheta^{(t)})}_{t \in \bN}$ is decreasing and $\underset{t \to \infty}{lim} d(\btheta^{(t)}, \L) \longrightarrow 0$. Moreover, the set of fixed points $\L$ is equal to the set of stationary points of $l(\btheta)$: $\L = \brace{\btheta \in \Theta \mid \nabla_{\btheta} l(\btheta) = 0}$.
\end{proposition}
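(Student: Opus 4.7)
The plan is to recognise the \textit{balanced projected} RJM as a curved exponential family model in the sense of \cite{delyon1999convergence} and then verify the three hypotheses of their Theorem~1. The paper's statement already exhibits the decomposition $f(\bz; \btheta) = \exp\parent{-\psi(\btheta) + \dotprod{\tilde{S}(\bz), \phi(\btheta)}}$, with sufficient statistic $\tilde{S}(\bz) \in \{0,1\}^{n\times K}$ taking values inside the open convex set $\S = \R_+^{n\times K}$, canonical map $\phi(\btheta) = \parent{\ln p(y_i, x_i, z_i=k; \btheta)}_{i,k}$, and $\psi(\btheta)=\pen(\btheta)$. This casts the complete-data model inside the exponential-family framework of Delyon et al., so that their theorem applies as soon as one checks their regularity assumptions.

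The first concrete step is to verify the smoothness requirements. Each coordinate of $\phi$ is the sum of a log-Gaussian density in $y_i$, a $1/T$-scaled log-Gaussian density in $e(x_i)$, and $\ln \tau_k$, all of which are $C^\infty$ functions of $(\balpha, \bbeta, \bsigma, \bmu, \bOmega, \btau)$ on the open parameter domain where $\sigma_k>0$, $\Omega_k$ is positive definite, and $\tau_k>0$; in particular $\phi$ is $C^2$ on $\Theta$. Condition (C1) supplies the matching $C^2$ regularity of $\psi$. The E-step map $\bar{s}(\btheta)$ is a ratio of strictly positive smooth functions, hence is $C^1$ on $\Theta$ and takes values in $\S$, and (C2) gives the required regularity of the M-step map $\widehat{\btheta}$. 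Combined with the compactness assumption (C3), these are precisely the ingredients of Theorem~1 in \cite{delyon1999convergence}, directly yielding that $(l(\btheta^{(t)}))_{t \in \bN}$ is monotonically decreasing and that $d(\btheta^{(t)}, \L) \to 0$.

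The final step is to identify the fixed-point set $\L$ with the stationary set $\brace{\btheta \in \Theta \mid \nabla_{\btheta} l(\btheta) = 0}$. Differentiating $g(\btheta) = \int f(\bz;\btheta)\, \nu(d\bz)$ under the integral sign, justified by the smoothness and strict positivity of the integrand combined with the compactness from (C3), yields Fisher's identity $\nabla l(\btheta) = -\nabla_{\btheta} L(\bar{s}(\btheta), \btheta)$. At any fixed point $\btheta^* = \widehat{\btheta}(\bar{s}(\btheta^*))$, the first-order optimality of the M-step at $s = \bar{s}(\btheta^*)$ gives $\nabla_{\btheta} L(\bar{s}(\btheta^*), \btheta^*) = 0$, so $\nabla l(\btheta^*) = 0$. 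Conversely, any stationary $\btheta^*$ of $l$ satisfies $\nabla_{\btheta} L(\bar{s}(\btheta^*), \btheta^*) = 0$, and the unique interior maximisation of the M-step, guaranteed by (C2), then forces $\btheta^* = \widehat{\btheta}(\bar{s}(\btheta^*))$. The main subtlety in the proof is precisely this last equivalence: one must legitimately exchange differentiation and integration and rule out boundary optima of the M-step. Both are standard, but they rely in an essential way on (C2) and (C3); without them the correspondence between EM fixed points and stationary points of $l$ would break down.
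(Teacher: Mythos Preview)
Your proof is correct and follows the same route as the paper: recognise the model as a curved exponential family and verify the hypotheses M1--M5 of Theorem~1 in \cite{delyon1999convergence}. Two minor points where the paper's argument is tighter. First, the interchange of differentiation and integration (their M4, and your justification via ``compactness from (C3)'') is handled more directly in the paper: since $\nu(d\bz) = \prod_i \sum_k d\delta_k(z_i)$ is a finite sum of Dirac masses, the integral over $\bz$ is literally a finite sum and interchange is automatic; invoking (C3) here is both unnecessary and slightly off, as (C3) only constrains the EM trajectory rather than providing uniform bounds over $\Theta$. Second, your third paragraph is redundant: the identification $\L = \brace{\btheta \in \Theta \mid \nabla_{\btheta} l(\btheta) = 0}$ is already part of the conclusion of Delyon et al.'s Theorem~1, so the paper simply cites it rather than re-deriving Fisher's identity and the first-order argument.
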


\begin{proof}
    This is a direct application of Theorem 1 of \cite{delyon1999convergence}, which we recall in Appendix \ref{app:thm_dlm}. By property of the Gaussian pdf, the likelihood $p(\by, \bx, \bz; \btheta)$ of our model is $C^{2}$ ($C^{\infty}$ even) in the parameter $\btheta$, and thanks to assumption \textbf{C1}, so is $\pen(\btheta)$. Moreover, since this is a finite mixture model, the integrals over $\bz\in \R^n$ are actually finite sums over $\llbrack{1, K}^n$. 
    Hence, the regularity conditions M1-4 of \cite{delyon1999convergence} are all easily verified. Then, our assumption \textbf{C2} implies their condition M5. Finally, with assumption \textbf{C3} verified, their Theorem 1 applies.
\end{proof}

\begin{corollary}
    Proposition \ref{thm:convergence_EM} is immediately extended to the \textit{balanced} RJM-EM, \textit{projected} RJM-EM and the regular RJM-EM by taking $W=I_p$ and/or $T=1$.
\end{corollary}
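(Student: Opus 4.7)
The plan is to exhibit each of the three named variants as a particular instance of the balanced projected RJM-EM covered by Proposition \ref{thm:convergence_EM}, and then note that the hypotheses (C1)--(C3) are preserved under each specialisation, so that no further work is required.

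First I would make explicit the correspondence between parameter choices and variants. Taking the embedding to be the identity, i.e.\ $e(x) = I_p^t x = x$ (so that the embedded dimension satisfies $q = p$), while keeping $T$ general, the density \eqref{eq:projected_mixture_conditional_model2} collapses to the \textit{balanced} mixture density \eqref{eq:balanced_mixture_conditional_model}; taking instead $T = 1$ while keeping $e$ general, the exponent $1/T$ disappears and the density collapses to the \textit{projected} mixture density \eqref{eq:projected_mixture_conditional_model2}; taking both $W = I_p$ and $T = 1$ we recover the basic mixture \eqref{eq:hierarchical_conditional_gaussian_model} of Section \ref{sec:basic_EM}. In each case, the E-step equations \eqref{eq:E_step_projected} or \eqref{eq:E_step_tmp} syntactically reduce to those of the relevant variant, and the M-step decouples in exactly the same way since the balancing factor $1/T$ multiplies both $\pen(\bthetaxk)$ and the data term in $\bthetaxk$, leaving its optimiser unchanged.

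Next I would check that the assumptions of Proposition \ref{thm:convergence_EM} transfer verbatim. Assumption (C1) concerns only $\pen(\btheta)$, which does not depend on $(W,T)$ and hence remains $C^2$. For (C2), the closed-form M-step updates for $(\balpha,\bbeta,\bsigma,\btau)$ are identical across all variants, while the updates for $(\bmu,\bOmega)$ are smooth functions of the sufficient statistics evaluated on $e(x_i)$; setting $W = I_p$ simply evaluates them on $x_i$ itself, which preserves continuous differentiability. Assumption (C3) is a compactness condition that is imposed, not derived, and it makes sense identically in each variant's parameter space. With (C1)--(C3) thus verified for each of the three special cases, Proposition \ref{thm:convergence_EM} applies directly, yielding the monotone decrease of $l(\btheta^{(t)})$, convergence of $d(\btheta^{(t)}, \L)$ to zero, and the identification of $\L$ with the stationary points of $l$. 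There is no genuine obstacle here; the only point requiring care is confirming that the substitutions $W = I_p$ and $T = 1$ really do reproduce the E-step, M-step and objective of the targeted variant rather than an off-by-a-normalising-constant relative, which the display equations of Sections \ref{sec:basic_EM}--\ref{sec:balanced_EM} allow one to verify by inspection.
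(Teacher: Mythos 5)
Your proposal is correct and matches the paper's intended argument: the paper treats the corollary as immediate precisely because each variant is the specialisation $W=I_p$ and/or $T=1$ of the \textit{balanced projected} model, with conditions \textbf{(C1)}--\textbf{(C3)} unaffected by these substitutions. Your spelled-out verification that the densities, E-step and M-step all reduce syntactically to the targeted variant is simply the detail the paper leaves to inspection, so nothing further is needed.
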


\paragraph{Verifying the conditions.} Condition \textbf{C2} can be verified on a case by case basis with specific penalty functions. When the M-step is available in closed form, the function $\widehat{\btheta}(s)$ can be studied explicitly. For instance, the block M-steps \eqref{eq:M_step_sigma}, \eqref{eq:M_step_mu} and \eqref{eq:M_step_tau}, are all clearly continuously differentiable ($C^{\infty}$ even) in the sufficient statistic $(p_{i, k}^{(t)})_{i, k \in \llbrack{1, n}\times\llbrack{1, K}}$. When the M-step is not explicit, then, as discussed in \cite{delyon1999convergence}, the regularity of $\widehat{\btheta}$ can usually be established through the implicit function theorem.

A sufficient condition that guarantees that \textbf{C3} is verified for all EM sequences is that the level sets $\brace{\btheta \mid l(\btheta) \leq c}$ are compacts of $\Theta$ for any $c \in \R$, see \cite{delyon1999convergence}. Accordingly, we provide in the following Lemma conditions on $pen(\btheta)$ that are sufficient to verify \textbf{C3}. 

\begin{lemma}[Sufficient conditions for \textbf{(C3)}] 
    Assume that the M-step is well defined, that $\pen(\btheta)$ is continuous, and that there exists a positive constant $\epsilon > 0$ such that:
    \begin{equation*}
       \mathrm{pen}(\btheta)\geq \epsilon \sum_{k=1}^{K} \parent{\log \tau_k^{-1} + \norm{\mu_k} + \norm{\Omega_k} + \log \det{\Omega_k^{-1}}  + \sigma_k^{-1} + \ln \sigma_k + \det{\alpha_k} + \norm{\beta_k}}\, .
    \end{equation*}
    Where the norms $\norm{\cdot}$ do not need to be specified since the dimension is finite.\\
    \\
    Then the level sets $\brace{\btheta \mid l(\btheta) \leq c}$ are compacts of $\Theta$ for all $c \in \R$.
\end{lemma}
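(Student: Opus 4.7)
The plan is to show that the level set $\brace{\btheta \in \Theta \mid l(\btheta) \leq c}$ is closed in $\Theta$, bounded in $\R^d$, and bounded away from the boundary of the open parameter space $\Theta$. Closedness is immediate from the continuity of $\btheta \mapsto l(\btheta)$ (which combines smoothness of the Gaussian density with the assumed continuity of $\pen$). The substantive task is therefore to show that $l(\btheta) \to +\infty$ whenever some coordinate of $\btheta$ diverges to infinity or approaches $\partial \Theta$, i.e.\ in any of the modes $\sigma_k \to 0^+$, $\tau_k \to 0^+$, $\lambda_{\min}(\Omega_k) \to 0^+$, $\sigma_k \to +\infty$, $\lambda_{\max}(\Omega_k) \to +\infty$, $\norm{\mu_k} \to +\infty$, $\det{\alpha_k} \to +\infty$, or $\norm{\beta_k} \to +\infty$.

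The key technical step is an explicit lower bound on $-\ln p(\by, \bx; \btheta)$. Writing $f_{ik} := \varphi_1(y_i; \alpha_k + \beta_k^t x_i, \sigma_k^2) \varphi_q^{1/T}(e(x_i); \mu_k, \Omega_k^{-1})$, the inequality $\sum_k \tau_k f_{ik} \leq K \max_k f_{ik}$ together with the pointwise bound $\ln f_{ik} \leq -\ln \sigma_k + \frac{1}{2T}\ln\det{\Omega_k} + C_0$ (obtained by discarding the non-positive Gaussian quadratic forms) gives, after summing over $i$,
\begin{equation*}
    -\ln p(\by, \bx; \btheta) \geq -n\,\max_k(-\ln\sigma_k) - \frac{n}{2T}\max_k \ln\det{\Omega_k} - C_1,
\end{equation*}
for a constant $C_1$ depending only on the data and on $n, K, q, T$. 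Adding the assumed penalty lower bound then yields
\begin{equation*}
\begin{split}
    l(\btheta) \geq\;& \epsilon \sum_{k=1}^K \parent{\log \tau_k^{-1} + \norm{\mu_k} + \norm{\Omega_k} + \log\det{\Omega_k^{-1}} + \sigma_k^{-1} + \ln \sigma_k + \det{\alpha_k} + \norm{\beta_k}} \\
    &- n \max_k \log \sigma_k^{-1} - \frac{n}{2T}\max_k \log\det{\Omega_k} - C_1,
\end{split}
\end{equation*}
after which each divergence mode can be excluded by comparing the divergent penalty term against the (at most logarithmic) matching term on the right.

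Carrying out this case analysis, I would note that every such growth comparison holds for any $\epsilon > 0$, no matter how small. If $\sigma_{k^*} \to 0$, then $\epsilon \sigma_{k^*}^{-1} - n \log\sigma_{k^*}^{-1} \to +\infty$ because $1/\sigma \gg \log(1/\sigma)$ as $\sigma \to 0^+$. If $\lambda_{\max}(\Omega_{k^*}) \to +\infty$, the elementary bound $\ln\det{\Omega_k} \leq q \log\lambda_{\max}(\Omega_k) \leq q\,\norm{\Omega_k}$ gives $\epsilon \norm{\Omega_{k^*}} - \frac{nq}{2T}\log\norm{\Omega_{k^*}} \to +\infty$. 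If $\lambda_{\min}(\Omega_{k^*}) \to 0^+$, then $\log\det{\Omega_{k^*}^{-1}}\to +\infty$ drives $\pen \to +\infty$ while $\max_k \log\det{\Omega_k}$ stays controlled by the non-degenerating components. The remaining modes ($\tau_{k^*} \to 0$; $\sigma_{k^*} \to +\infty$; $\norm{\mu_{k^*}}, \det{\alpha_{k^*}}, \norm{\beta_{k^*}} \to +\infty$) are handled analogously via the corresponding linear penalty terms; the last three are in fact especially simple, since those parameters do not appear in the $\ln p$ upper bound at all. The main obstacle is precisely this rate matching: it is why the hypothesis requires $\sigma_k^{-1}$ (rather than only $\log \sigma_k$) in order to defeat the classical ``degenerate perfect-fit'' blow-up of the Gaussian mixture likelihood, and analogously $\norm{\Omega_k}$ (rather than only $\log\det{\Omega_k}$) to dominate the Gaussian normaliser. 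Once these comparisons are secured, every divergent sequence forces $l(\btheta) \to +\infty$, contradicting membership in $\brace{\btheta \mid l(\btheta) \leq c}$ and thereby establishing compactness.
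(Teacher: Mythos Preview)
Your approach is correct and is essentially the standard coercivity argument the paper has in mind: the paper does not spell out a proof here but simply states that, up to parametrisation changes, it is identical to the proof of Theorem~2 in \cite{perrakis2019latent}, which proceeds by exactly this route --- bounding the unpenalised negative log-likelihood below by the Gaussian normalising constants and then letting the penalty terms dominate at each boundary of $\Theta$. Your derivation of the key inequality and the subsequent rate comparisons match that template; the only mild looseness is in the case analysis, where simultaneous divergence modes (e.g.\ $\lambda_{\min}(\Omega_{k^*})\to 0$ together with $\lambda_{\max}(\Omega_{k^*})\to\infty$) are not treated explicitly, but this is easily patched by the bound $\max_k a_k \le \sum_k (a_k)_+$ to decouple the likelihood contribution across $k$, after which each summand $\epsilon g_k(\bthetak) - n(\ln\sigma_k^{-1})_+ - \tfrac{n}{2T}(\ln\det{\Omega_k})_+$ is individually coercive.
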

Up to a minor parametrisation changes, the proof of the Lemma is identical to the proof of Theorem 2 in \cite{perrakis2019latent}.

{We note that the penalties discussed in Section \ref{sec:regularisation} and used in the experiments of this paper do not verify the conditions of Proposition \ref{thm:convergence_EM}. In particular, the $l_1$ penalty on $\beta_k$ is not twice continuously differentiable. This is neither surprising nor undesired: with the inclusion of an $l_1$ penalty, the algorithm is expected to converge towards singular points of the likelihood, not stationary points. For our applications, especially when $p$ is large, we judge that having a sparse linear regression between $x$ and $y$ within the EM is worth forgoing the formal convergence guarantee offered by Proposition \ref{thm:convergence_EM}.}

\section{Experiments}\label{sec:experiments}

\subsection{Protocol and compared methods.}
We carried out a series of experiments to study the behaviour of the different variants of RJM proposed.
The empirical results span 
a range of scenarios 
involving both simulated and real data.

To put the performances of the RJM methods in perspective, we compare them to a collection of well-known mixture and clustering methods. In particular, these include: 
\begin{itemize}
    \item \textbf{KMeans(X).} KMeans on the \textit{input} $x \in \R^p$ alone.
    \item \textbf{KMeans(X, Y).} KMeans on both the \textit{input} $x$ and the \textit{response} $y$ concatenated into one vector $(x, y)\in \R^{p+1}$.
    \item \textbf{GaussianMixture(X).} EM with Mixture of Gaussian (MoG) model (elliptical Gaussians) on $x \in \R^p$ alone. 
    \item \textbf{GaussianMixture(X, Y).} EM with MoG model on the concatenated $(x, y)\in \R^{p+1}$.
    \item \textbf{MoE.} Mixture of Experts, a method which estimates a hierarchical model $x \longrightarrow y$. The \textit{FlexMix} package \citep{Leisch2004FlexMix, Leisch2007Fitting, Leisch2008FlexMix} was used (the \textit{mixtools} package \citep{Benaglia2009mixtools} was also tried and yielded similar results).
\end{itemize}

\medskip

\noindent
{\it RJM variants.} 
For RJM, we consider the variants mentioned in the previous sections: $x$ can either remain in the ambient space $\R^p$ or be projected in an embedding space of dimension $q$, and the likelihood of $x$ can either be balanced to alter its weight in the E-step or not. Combining these possibilities leads to the definition of four methods:
\begin{itemize}
    \item \textbf{RJM.} $x$ in ambient space ($W=I_p$), no re-balancing ($T=1$).
    \item \textbf{bal.-RJM.} $x$ in ambient space, balancing in play ($T\neq 1$).
    \item \textbf{proj.-RJM.} $x$ projected unto $\R^q$ ($W\in \R^{p\times q}$), no re-balancing.
    \item \textbf{bal. proj.-RJM.} $x$ projected unto $\R^q$, balancing in play.
\end{itemize}
For the two projected RJM, we consider the linear embedding $e(x) = W^t x$, where $W$ is the $q-$dimensional PCA projection on the full data $x$. PCA projection was shown in \cite{lartigue2022unsupervised} to adequately preserve subgroup information. In most of the following experiments, in the interest of simplicity, we will fix the values of $q$ and $T$. For the balancing coefficient $T$, the values we use are $T=p$ if $x$ remains in the ambient space $\R^p$ and $T=q$ if $x$ is projected unto $\R^q$ (this means that the contribution of $x$ and $y$ to the E-step are equally weighted). In some experiments of Appendix \ref{app:exp_mod_sel}, we explore a grid of values for $q$ and $T$ to understand their effect on results. In Section \ref{sec:exp_mod_sel}, we study a fully adaptive scheme to set $q$ (and $K$).

\medskip

\noindent
{\it Data types considered and evaluation.} 
Throughout this section, we compare these methods on different data types:
\begin{itemize}
    \item Synthetic Gaussian data in Section \ref{sec:exp_gaussian}, Appendix \ref{app:exp_gaussian} and small parts of Appendix \ref{app:exp_mod_sel}.
    \item Synthetic non-Gaussian continuous data and synthetic mixed binary/Gaussian data in Appendix \ref{app:exp_non_gaussian}
    \item Real RNA sequencing data (count data, hence non-continuous) in Section \ref{sec:exp_tcga} and \ref{sec:exp_mod_sel}, and in Appendix \ref{app:exp_tcga} and \ref{app:exp_mod_sel}.
\end{itemize}
To evaluate the performance of each method with respect to learning the latent groups, we use the Adjusted Rand Index (that we will often simply refer to as ``Rand Index") between the learned and true labels (the latter are of course available for evaluation purposes in the synthetic data context). 
For the high-dimensional parameters, we consider sparsity pattern recovery metrics. We are interested in case where the true $\bbeta$ and $\bOmega$ are sparse, and want to understand whether 
sparsity patterns specific to latent groups can be effectively recovered. To do so, we compare the estimated $\widehat{\bbeta}$ and $\widehat{\Omega}$ to the true, data-generating parameters, with the former being the outputs of the ``final M-step in the ambient space" (i.e. the output of sparse estimators of $\bbeta$ and $\bOmega$ using the labels estimated by the respective procedure).

\subsection{Gaussian simulations} \label{sec:exp_gaussian}
This experiment is a preliminary study of the compared methods under familiar settings (moderate dimensionality and Gaussian data). We study the evolution of the performances with the sample size. Across a wide range of scenarios, we find that the RJM methods, combining the various signals, 
 are often the most data efficient.

We consider a synthetic dataset where the input $x$ follows a hierarchical Gaussian distribution $(x | z = k) \sim \N(\mu_k, \Omega_k^{-1}) $, 
with $K=2$ balanced classes and 
the response $y$ verifies: $(y | x, z = k) \sim \N(\beta_k^t x, \sigma_k^2)$ (no intercept), with 
$\sigma_1 {=} \sigma_2{=}0.5$.
 Since the parameters of this generative model are fully known to us, we can define a \textit{parameter-Oracle} method (simply called \textit{Oracle}) to serve as reference. This Oracle estimates labels from data 
with a maximum likelihood classifier using the knowledge of the real model parameters $\tau, \mu, \Omega, \alpha, \beta, \sigma$.

\paragraph{Definition and control of the group-specific signals.} 
With the above settings, there are three sources that constitute the latent group-specific signal:
\begin{itemize}
    \item mean signal in $x$, captured by the difference between $\mu_1$ and $\mu_2$,
    \item covariance signal in $x$, captured by the difference between $\Omega_1$ and $\Omega_2$,
    \item and signal in the regression for $y$, captured by the difference between $\beta_1$ and $\beta_2$.
\end{itemize}
In the simulation, we can of course control each of these sources. At first, we impose that the precision matrices of the two latent subgroups are the same: $\Omega_1 = \Omega_2$. For each simulation, a common precision matrix is generated randomly. With no difference between second order parameters, the group-signal is only expressed in the difference between the $\beta_k$'s (signal that we commonly call ``in $\beta$'' or ``in $y$'') and/or in the difference between the $\mu_k$'s (signal that we commonly call ``in $\mu$'' or ``in $x$''). 

For simplicity, in this Section we set parameters according to the following scheme: we generate each $\beta_k \in \R^p$ with a limited number of non-zero coefficients (here $p/10$), all of which have the same absolute value (denoted ``$|\beta|$") and a random sign. We refer to $|\beta|$ as the \textit{amplitude} of $\beta_k$. By default, there is no overlap between the non-zero coefficient sets of the respective $\beta_k$'s. Hence, larger amplitude $|\beta|$ implies more distinct subgroups and the quantity $|\beta|$ quantifies the amount of signal in $\beta$. For the signal in $x$, we independently generate $K$ vectors $(\mu_k)_{k=1}^K$, then, we control their spacing with an affine transformation $\tilde{\mu}_k := \mu_k + \lambda (\mu_k - \overline{\mu})$, where $\overline{\mu} := \frac{1}{K} \sum_k \mu_k$ is the barycenter. This transformation does not change the barycenter, $\overline{\mu} = \frac{1}{K} \sum_k \tilde{\mu}_k$, and is such that:
\begin{equation*}
    \frac{1}{K} \sum_k \norm{\tilde{\mu}_k - \overline{\mu}}_2^2 = (1+\lambda)^2 \frac{1}{K} \sum_k \norm{\mu_k - \overline{\mu}}_2^2 \, .
\end{equation*}
Let $\delta_{\mu}$ be the desired magnitude of the spacing between the $\mu_k$. We set 
$\lambda = \frac{\delta_{\mu}}{\sqrt{\frac{1}{K} \sum_k \norm{\mu_k - \overline{\mu}}_2^2}} - 1$, 
such that:
\begin{equation*}
    \sqrt{\frac{1}{K} \sum_k \norm{\tilde{\mu}_k - \overline{\mu}}_2^2} = \delta_{\mu} \, .
\end{equation*}
In the particular case where $K=2$, this results in $\norm{\frac{\tilde{\mu}_1-\tilde{\mu}_2}{2}}_2 = \delta_{\mu}$. Clearly, as $\delta_{\mu}$ increases, the signal in $x$ increases as well. Hence, we adopt $\delta_{\mu}$ as our $x$-signal metric.

\paragraph{Latent class assignment results.} In the first experiment, we study the various methods by exploring cases with different forms of signal, drawing $n$ iid samples ($n$ varying from 100 to 2000) from the generating model above with $p=100$. The data $x$ and $y$ are available to the various algorithms but the latent indicator $z$ is not.
For proj.-RJM, the embedding size is $q=5$. The balanced RJM (bal.-RJM) applies to the likelihood of $x$ in the E-step the exponent $1/p$ for the ambient RJM and $1/q$ for the projected RJM (proj.-RJM). 

Results appear in \figurename~\ref{fig:rand_index_vs_n}. The colour convention adopted here is consistent throughout the paper. 
The first configuration (leftmost panel) has $|\beta|=5$ but full overlap ($\beta_1=\beta_2$), hence no group-specific signal in the regression model. Instead the signal is purely in $x$, with $\delta_{\mu} = 0.2$. As expected, with a mean 
difference and no regression signal, KMeans($x$) and MoG($x$) 
perform relatively well (matched by RJM).
The second configuration (middle panel) has no signal in $x$ ($\delta_{\mu} = 0$), but a strong signal in $\beta$: $|\beta| = 5$ with no overlap on the non-zero coefficient sets between the two latent subgroups. When the data is scarce, the balanced and projected RJM are well behaved, with bal. proj.-RJM being the better of the three variants. When the data is abundant, MoE is the best method, and close to the optimal. On the other hand, the MoE computations fail when the sample size is too small, in particular when $n<p$. Among the remaining methods, KMeans($x, y$) and MoG($x, y$) are acceptable. For the last configuration (rightmost panel), we have signal in both $x$, with $\delta_{\mu} = 0.2$, and in $y$, with $|\beta| = 5$, no overlap between subgroups. In this case, RJM, proj.-RJM and bal. proj.-RJM 
can exploit both signals and 
reach good performances at all the sample sizes tested. RJM is the best of the three and is matched by MoE when the data is abundant. KMeans($x$) and MoG($x$) are also quite effective overall.

\figurename~\ref{fig:rand_index_vs_n_flipped_signs} in Appendix \ref{app:exp_gaussian} shows additional results for the case
where the sparsity patterns of $\beta_1$ and $\beta_2$ are equal instead of being disjoint. The corresponding coefficients all have the same amplitude $|\beta|$ but opposed signs between latent subgroups. We call this the \textit{flipped signs} convention, as opposed to the \textit{no overlap} convention used in \figurename~\ref{fig:rand_index_vs_n}. In \figurename~\ref{fig:grid_signal}, Appendix \ref{app:exp_gaussian} we study a grid of signal intensities, for a fixed sample size $n=500$. 

\begin{figure}[tbhp]
    \centering
    \includegraphics[width=\linewidth]{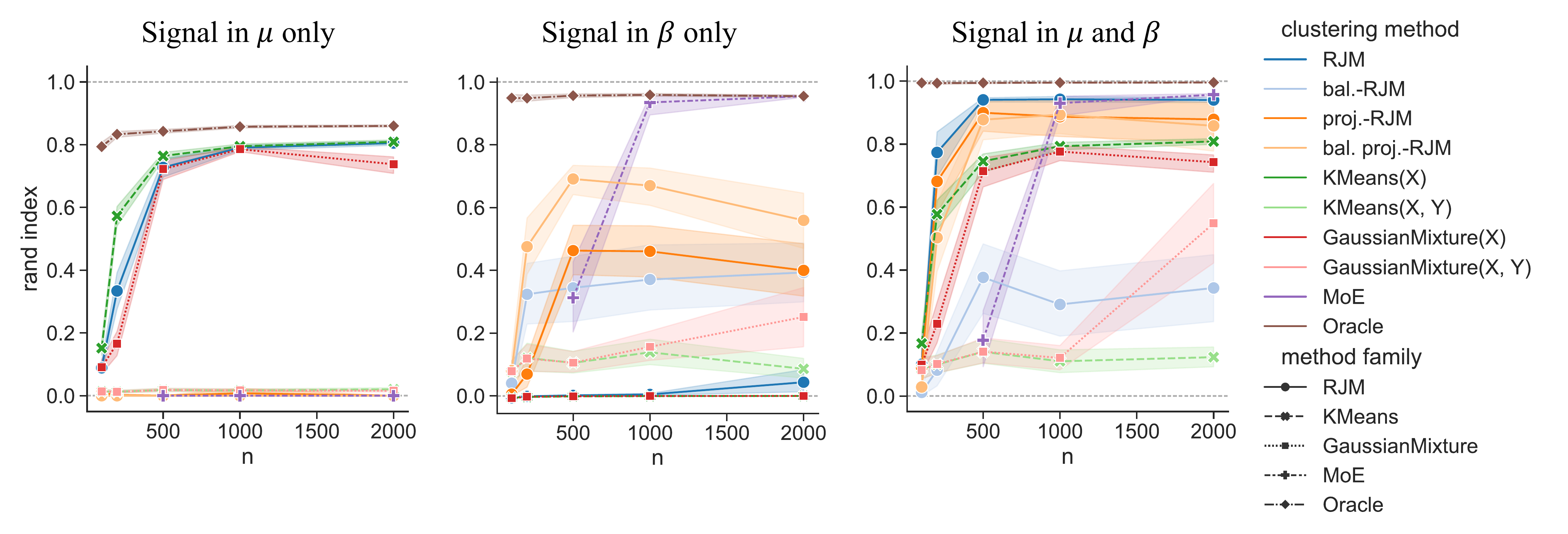}
    \caption{Simulated data, Rand Index vs sample size $n$. The problem dimension is fixed to $p=100$ (see text for details). (Left) Signal in $\mu$ only, $\delta_{\mu} = 0.2$. $\beta_1=\beta_2$. (Middle) Signal in $\beta$ only, $|\beta| = 5$ (with disjoint sparsity patterns between the two latent subgroups). (Right) Signal in both $\mu$ and $\beta$ with $\delta_{\mu} = 0.2, |\beta| = 5$.}
    \label{fig:rand_index_vs_n}
\end{figure}

\paragraph{Recovery of latent group-specific sparsity patterns.} 
Sparsity patterns 
can be useful for scientifically interpretable high-dimensional modelling. S-RJM returns sparse regression coefficients $\bbeta$ and sparse precision matrices $\bOmega$ that are specific to the latent groups. 
In addition to the Rand Index, we therefore also investigate recovery of these parameters.
For the initial experiments in \figurename~\ref{fig:rand_index_vs_n}, we kept $\Omega_1{=}\Omega_2$ to isolate the effect of the mean signals. For this next experiment, we 
move to the more general setting of $\Omega_1\neq\Omega_2$.
In the following we have $p=100$, exactly 10 non-zero coefficients 
in the the (true) $\beta_k$'s and 
$\sim$200 non-zero off-diagonal coefficients in the $\Omega_k$'s.

While S-RJM is a joint learning method, a common workflow in applications is to cluster the data followed by parameter estimation at the group level. To understand performance in this setting and allow for a fair comparison, we use the same sparse estimation methodology for all the methods, i.e. applying the sparse learner to the data, using the cluster labels returned by the respective method. 
This is done as follows. 
First, 
to deal with invariance to label permutation, we reorder the labels with the Hungarian algorithm \citep{kuhn1955hungarian} to match them as best as possible with the true ones according to the 0-1 loss. Then, as discussed in Section \ref{sec:regularisation}, we apply a sparse estimator
in the ambient space to obtain $\widehat{\beta}_k$ and $\widehat{\Omega}_k$. 
The specific estimators used are the Lasso \citep{tibshirani1996regression} for $\bbeta$ and Graphical-Lasso \citep{friedman2008sparse} for $\bOmega$.
We compute Precision-Recall (PR) curves (with respect the true, data-generating sparsity pattern in respectively $\beta_k$ or $\Omega_k$) and compute its Area Under the Curve (AUC). 
We display the evolution of the PR-AUC with the sample size on \figurename~\ref{fig:PR_AUC_beta_Omega_vs_n_different_Omegas_more_edges} (for the ROC-AUC version of this Figure, see \figurename~\ref{fig:AUC_beta_Omega_vs_n_different_Omegas_more_edges} in Appendix \ref{app:exp_gaussian}). The recovery of $\bOmega$ is mediocre for all methods. The recovery of $\bbeta$ is very good for bal. proj.-RJM in all scenarios, and for proj.-RJM when there is signal in $\beta$. As with the Rand Index, MoE becomes dominant when the data is abundant.

Appendix \ref{app:exp_gaussian} includes a number of variants of this experiment, including: \figurename~\ref{fig:AUC_beta_Omega_vs_n_different_Omegas}, with much sparser $\Omega_1$ and $\Omega_2$ (only $\sim 6$ non-zero off-diagonal coefficients); \figurename~\ref{fig:AUC_beta_Omega_vs_n}, with $\Omega_1=\Omega_2$ (same experiment as \figurename~\ref{fig:rand_index_vs_n}); \figurename~\ref{fig:AUC_beta_Omega_vs_n_flipped_signs}, with $\Omega_1=\Omega_2$ and \textit{flipped signs} convention on the $(\beta_k)_{k=1, 2}$ (same experiment as \figurename~\ref{fig:rand_index_vs_n_flipped_signs}).

\begin{figure}[tbhp]
    \centering
    \includegraphics[width=\linewidth]{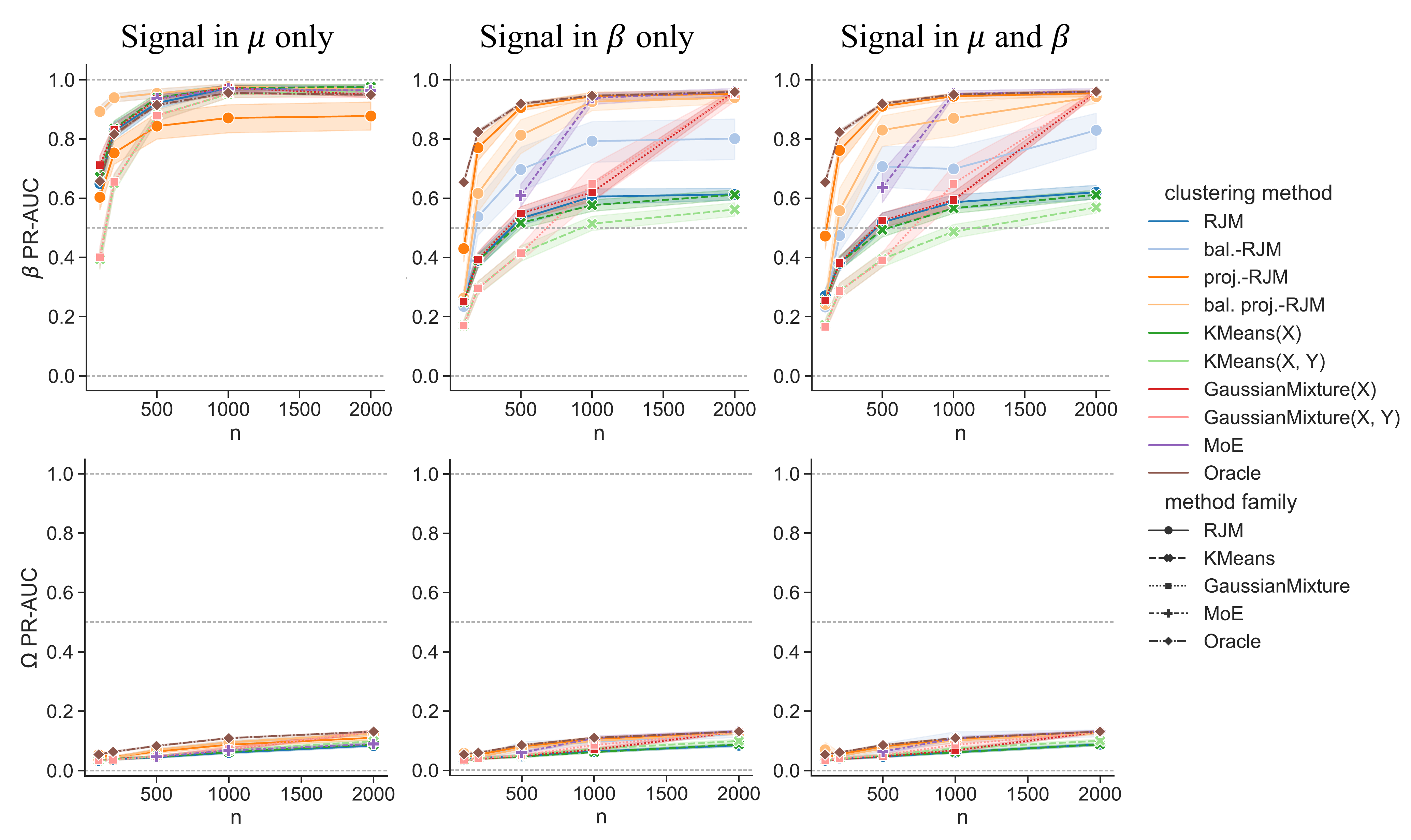}
    \caption{Area under the Precision-Recall curve (PR-AUC)  with respect to recovery of non zeros in $\bbeta$ (top) or $\bOmega$ (bottom) vs the sample size $n$. Problem dimension (i.e. size of ambient space) fixed to $p=100$. (Left) Signal in $\mu$: $\delta_{\mu} = 0.2$. No signal in $\beta$: $\beta_1=\beta_2$. (Middle) Signal in $\beta$: $|\beta| = 5$, with disjoint sparsity patterns between the latent subgroups. No signal in $\mu$: $\delta_{\mu}=0$ (Right) Signal in both $\mu$ and $\beta$. $\delta_{\mu} = 0.2, |\beta| = 5$. Unlike \figurename~\ref{fig:rand_index_vs_n}, for all of these experiments, there is a covariance signal in $x$, such that $\Omega_1 \neq \Omega_2$.}
    \label{fig:PR_AUC_beta_Omega_vs_n_different_Omegas_more_edges}
\end{figure}

\subsection{Non-Gaussian simulated data} \label{sec:exp_non_gaussian}
In this section and the following one, we explore the performances and behaviours of the methods in the non-Gaussian case. We start with two simple non-Gaussian simulations, the results of which can be found in Appendix \ref{app:exp_non_gaussian} (figures \ref{fig:student_rand_index} to \ref{fig:mixed_binary_pr_auc}). In the first simulation, the data is sampled from a heavy tailed student-t distribution. In the second simulation, the data is a mix of binary and continuous features, following the model of \citet{fan2017high}. The Rand Index analysis of the student data (\figurename~\ref{fig:student_rand_index}) displays some differences from the Gaussian case (\figurename~\ref{fig:grid_signal}), with for instance relatively better performances of KMeans and MoG. Although, as in the Gaussian case, the two balanced RJM perform the best in the high $y-$signal, low $x-$signal settings. Results with mixed (binary and continuous) data (\figurename~\ref{fig:mixed_binary_rand_index}) are broadly similar to the Gaussian case. The sparsity pattern recovery analyses (\figurename~\ref{fig:student_pr_auc} for the student data, \figurename~\ref{fig:mixed_binary_pr_auc} for the mixed data) highlight the relevance of using non-Gaussian estimators for $\bOmega$.

\subsection{Non-Gaussian biomedical data} \label{sec:exp_tcga}
Next, we turn to real data experiments. We use an RNA-sequencing dataset from The Cancer Genome Atlas (TCGA). Here, the features are gene expression levels, i.e. the rows in the feature matrix correspond to patient samples and the columns $p_{\text{total}} = 60488$ specific transcripts (i.e. counts of different RNAs; the data are non-Gaussian in nature). 
Each sample belongs to one of four classes: bronchus-lungs ($n_1=1146$ observations), breast ($n_2 = 1223$), kidney ($n_3 = 1023$), brain ($n_4 = 705$) and it is these that play the role of subgroups in the analysis.

As the data is real, we do not know the true population distributions and cannot control all aspects of the signals. However, we can control and vary some aspects of the problem by sub-sampling $p \leq p_{\text{total}}$ genes (to vary dimensionality), $K \leq 4$ tissues (to vary the number of latent groups) 
and $n \leq \sum_{k=1}^K n_k$ patient samples (to vary sample size) for each simulation. The resulting datasets of size $n \times p$ play the role of $x$.
To study responses whilst retaining the ability to provide unambiguous, ground-truth results, we create a synthetic response variable $y$. First, we generate a sparse vector $\bbeta\in \R^{K \times p}$ (with disjoint class-specific sparsity patterns), as in the previous section. Then we sample: $(y | z=k)\sim  \N(\beta_k^t x, \sigma_k^2)$. Additionally, we control the (mean) signal present in $x$ by shifting 
the group-specific data (as we did for the previous experiment). In the below, we either keep the original mean differences (no change, i.e. mean levels as in the original dataset) or remove all mean differences (i.e shift to ensure equal group specific empirical means). The latter results in a situation where the {\it only} signal left in $x$ is in the (real, unknown) covariance (and higher moments). 
On the resulting dataset $(x, y)$
we run the same methods as in the previous experiments. For the \textit{parameter-Oracle}, we keep essentially the same definition as in synthetic experiments but replace the true parameters (not known here) with estimates $\widehat{\bmu}$, $\widehat{\bOmega}$, $\widehat{\bbeta}$ and $\widehat{\bsigma}$ learned with knowledge of the true labels. In particular, $\widehat{\bOmega}$ is estimated with a $l_2$ regularisation \citep[using the \textit{Optimal Approximation Shrinkage} of][]{chen2010shrinkage}. The hyper-parameter settings of the RJM methods are the same as in Section \ref{sec:exp_gaussian}, namely: embedding size $q=5$ for proj.-RJM and bal. proj.-RJM, balancing parameter $T=\frac{1}{p}$ for bal.-RJM, and $T=\frac{1}{q}$ for bal. proj.-RJM.

\medskip 
\noindent
{\it Varying sample size}.
We start by varying sample size $n$, for a fixed small $p=100$. \figurename~\ref{fig:TCGA_vs_n_K4} displays the results (for $K=4$, i.e. all tissue types above). See Appendix \ref{app:exp_tcga} (figures \ref{fig:TCGA_vs_n_K2} to \ref{fig:TCGA_vs_n_K3_no_brain}) for similar results with different $K$'s and tissue combinations. We see that, when there is signal in $y$, bal. proj.-RJM, bal.-RJM and proj.-RJM arer the best performers at all sample sizes. MoE matches these approaches when the sample size is very large: $n>10p=1000$ for $K=2$ and $n>20p=2000$ for $K= 3$ or 4, while MoG($(x, y)$) and KMeans($x, y$) are competitive (relative to the other methods) when the sample size is small ($n=p=100$). When there is no signal in $y$, the problem is harder, and even the Oracle performance is poor. However, we see that proj.-RJM is similar to the Oracle. The MoG on $(x, y)$ and $(x)$ can also match the Oracle for larger $n$.

\begin{figure}[tbhp]
    \centering
    \includegraphics[width=\linewidth]{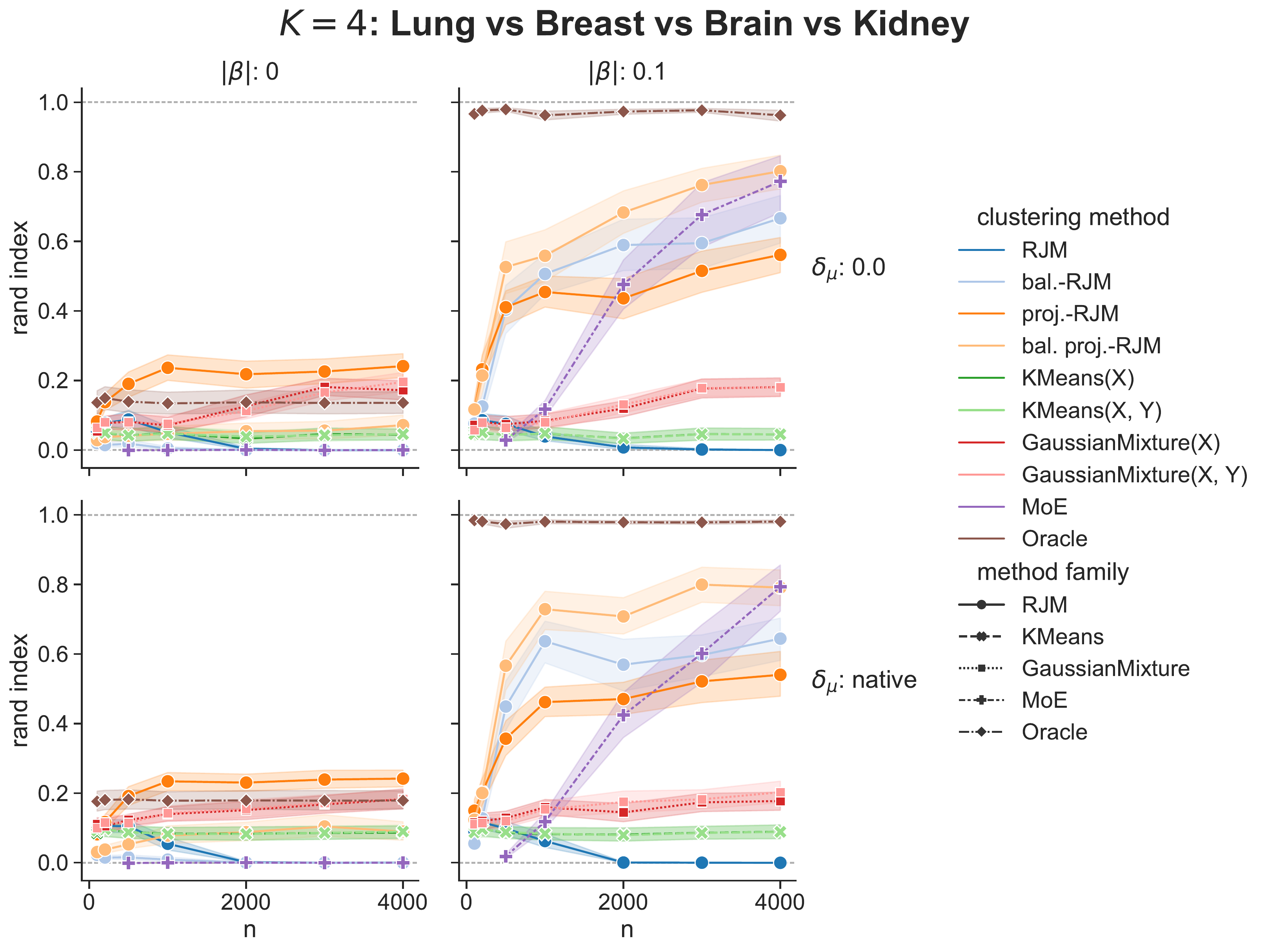}
    \caption{Cancer data, Rand Index vs. sample size $n$. RNA sequencing data from The Cancer Genome Atlas (TCGA) with dimension (i.e. size of ambient space) fixed to $p=100$ (see text for details). There are $K=4$ latent subgroups (corresponding to bronchus-lung, breast, brain and kidney samples).} 
    \label{fig:TCGA_vs_n_K4}
\end{figure}

\medskip 
\noindent
{\it Higher-dimensional setting}.
Next, for a fixed sample size $n = 500$, we increase the problem dimension $p$ from $100$ to $10000$. Regarding the signal potentially added by these new dimensions, we explore several scenarios. \figurename~\ref{fig:TCGA_vs_p_K4_one_coeff} displays the results in a ``worst case scenario" in which there is only a single non-zero coefficient in each $\beta_k$ (with none added as $p$ increases). 
The signal in $x$ is controlled as described above, i.e. either left as in the original data or removed entirely. We observe that, with signal in $y$ and no mean difference (top right), the two balanced RJMs perform the best, followed by proj.-RJM, MoG($x, y$) and KMeans($x, y$). As expected, all methods get worse as $p$ increases, since by design there is no new signal as the dimension increases, only more noise (with the exception of some potential second order or higher signal added in $x$). When there is no signal in the means and no signal in $y$ (only covariance signal, top left), even the Oracle does not perform well (however, there is still some detectable signal). Here, projected RJM is the only method that maintains a relatively acceptable level of performance, and outperforms the other methods for all values of $p$. 

With the original mean difference (bottom row; i.e. mean difference as in the original data), the Oracle 
does not degrade so strongly with $p$ (likely due to mean signal in additional components of $x$). With no signal in $\beta$ (bottom left figure), even the Oracle does not perform well. Nevertheless, some methods still recover signal. Projected RJM maintains performance when $p$ is not too large. For larger $p$, the two MoG models, which maintain consistent levels of performances, take over. When there is signal in $y$ (bottom right), we observe that the three regularised RJMs perform well for low $p=100$ (worsening greatly as $p$ increases). When $p$ is very large, the two projected RJMs match the MoGs. (Note that in all these cases, MoE was only computed for the smallest $p=100$, due to computational issues when $p$ is too large, in particular when $p>n$.)

\figurename~\ref{fig:execution_times} shows the execution times of the different methods. This figure demonstrates the computational benefits of the projection step in the RJMs. As the dimension grows, the computation times for the ambient space RJMs 
become long whereas those for the projected RJMs grow much more slowly.

In Appendix \ref{app:exp_tcga}, Figures \ref{fig:TCGA_vs_p_K4} and \ref{fig:TCGA_vs_p_K4_fixed_fraction} explore two alternative scenarios, where there are respectively 10 and $1\% p$ non-zero coefficients in each $\beta_k$. For these two experiments in the Appendix, we also re-scale $\bbeta$ by the number of non-zero coefficients, hence the amplitudes are actually $|\beta|/10$ and $|\beta|/(1\% p)$ respectively. This re-scaling reduces the signal in $y$. The observations and conclusions are similar across all three experiments.

\begin{figure}[tbhp]
    \centering
    \includegraphics[width=\linewidth]{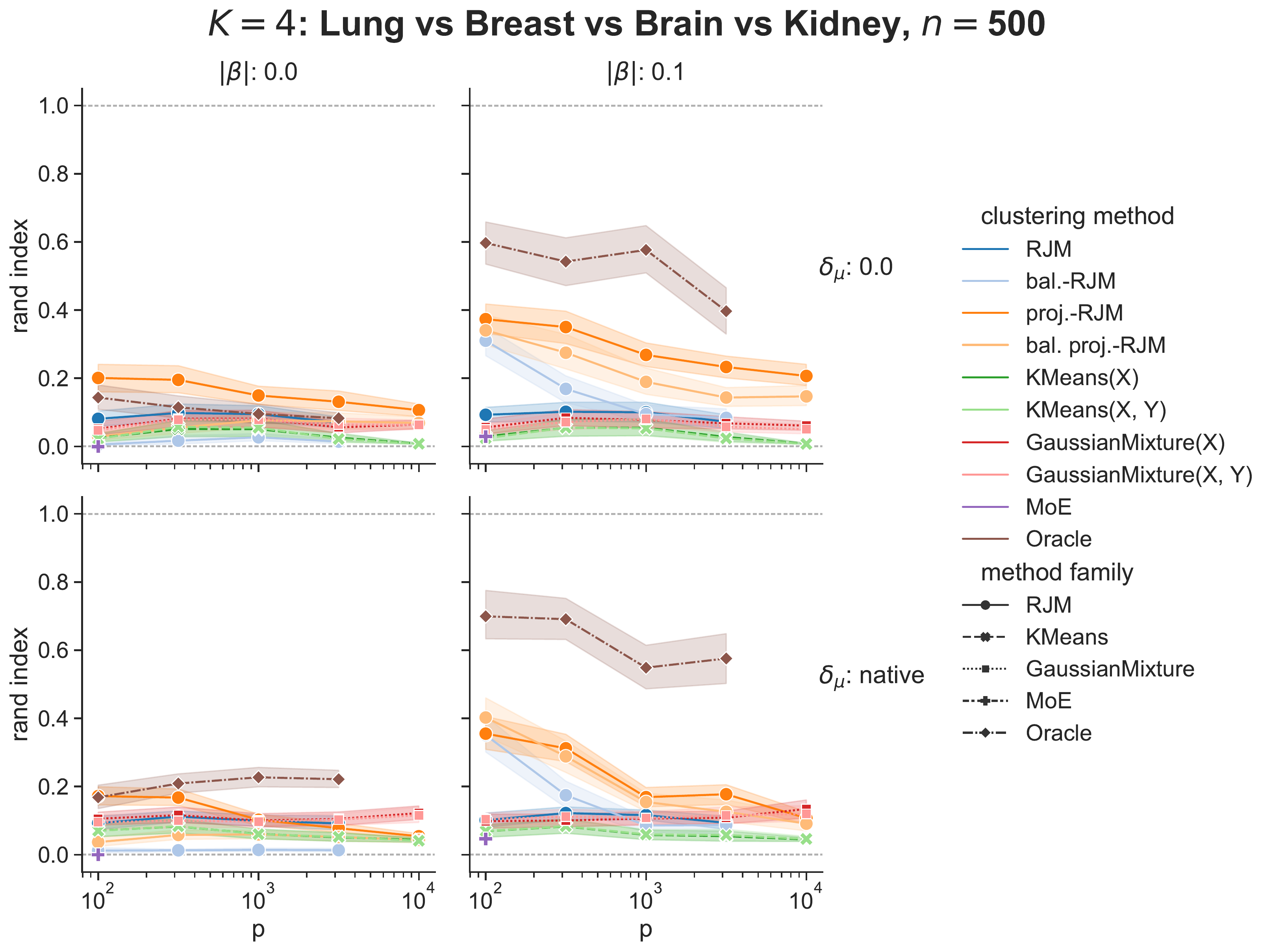}
    \caption{Cancer data, Rand Index vs dimension $p$ (log scale). RNA sequencing data from TCGA, with sample size fixed to $n=500$ and $K=4$ throughout.
    For all $p$, there is exactly one non-zero coefficient in $\beta_k$. }
    \label{fig:TCGA_vs_p_K4_one_coeff}
\end{figure}

\begin{figure}[tbhp]
    \centering
    \includegraphics[width=0.75\linewidth]{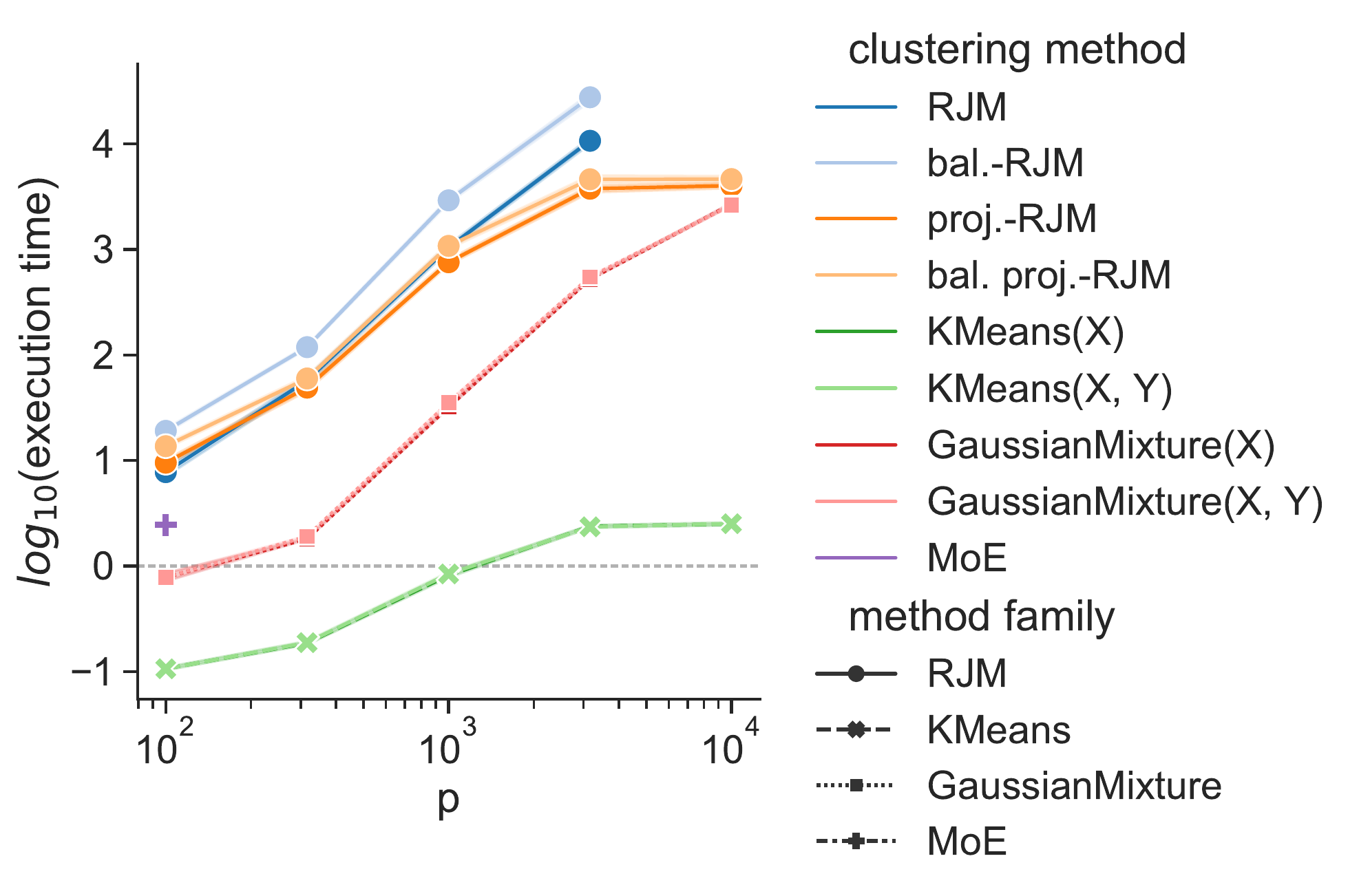}
    \caption{Execution (wall clock) time vs problem dimension $p$ with TCGA data (both axes on logarithmic scale). $K=4$, experimental settings as in \figurename~\ref{fig:TCGA_vs_p_K4_one_coeff}. Due to the projection step, the computational cost of the projected RJMs levels off relative to the ambient space approaches.}
    \label{fig:execution_times}
\end{figure}

\medskip 
\noindent
{\it Autoencoder-RJM}.
Following \citet{taschler2019model} and \cite{lartigue2022unsupervised},
the projected RJMs above use PCA as the data reduction step. 
Due to the modularity of the overall scheme, other embeddings could be used, including nonlinear embeddings tailored to specific applications.
For example, given a neural embedding in a given applied setting that transforms high-dimensional data into low-dimensional, more Gaussian-like form, RJM could be run on the embedded data as 
an interpretable and computationally lightweight step within a more complex overall workflow. 
As a basic illustration of this modularity, we run proj.-RJM and bal. proj.-RJM on the TCGA dataset ($K=4$) using a nonlinear autoencoder. We consider the highest dimensional setting of \figurename~\ref{fig:TCGA_vs_p_K4_one_coeff} ($p{=}10000$) and use a basic architecture with one hidden layer of size 512 (i.e. in dimensional terms: $10000 \rightarrow 512 \rightarrow q \rightarrow 512 \rightarrow 10000$). 
The experimental settings are as in the lower right quadrant of \figurename~\ref{fig:TCGA_vs_p_K4_one_coeff}: mean difference $\delta_{\mu}$ is as in the original data and $|\beta|=0.1$, with only one non-zero coefficient among the $p=10000$. We also explore different embedding sizes $q$, from 4 to 32. For bal. proj.-RJM, the balancing parameter is $T=q^{0.5}$. \figurename~\ref{fig:autoencoder1} shows Rand Index vs. sample size $n$. Autoencoder-RJM performs well, especially for embedding sizes higher than $q=4$. Additionally, we observe that the best embedding size
can vary, echoing the point previously made about embedding size trade-offs, as discussed in more detail in \cite{lartigue2022unsupervised}.

\begin{figure}[tbhp]
    \centering
    \includegraphics[width=\linewidth]{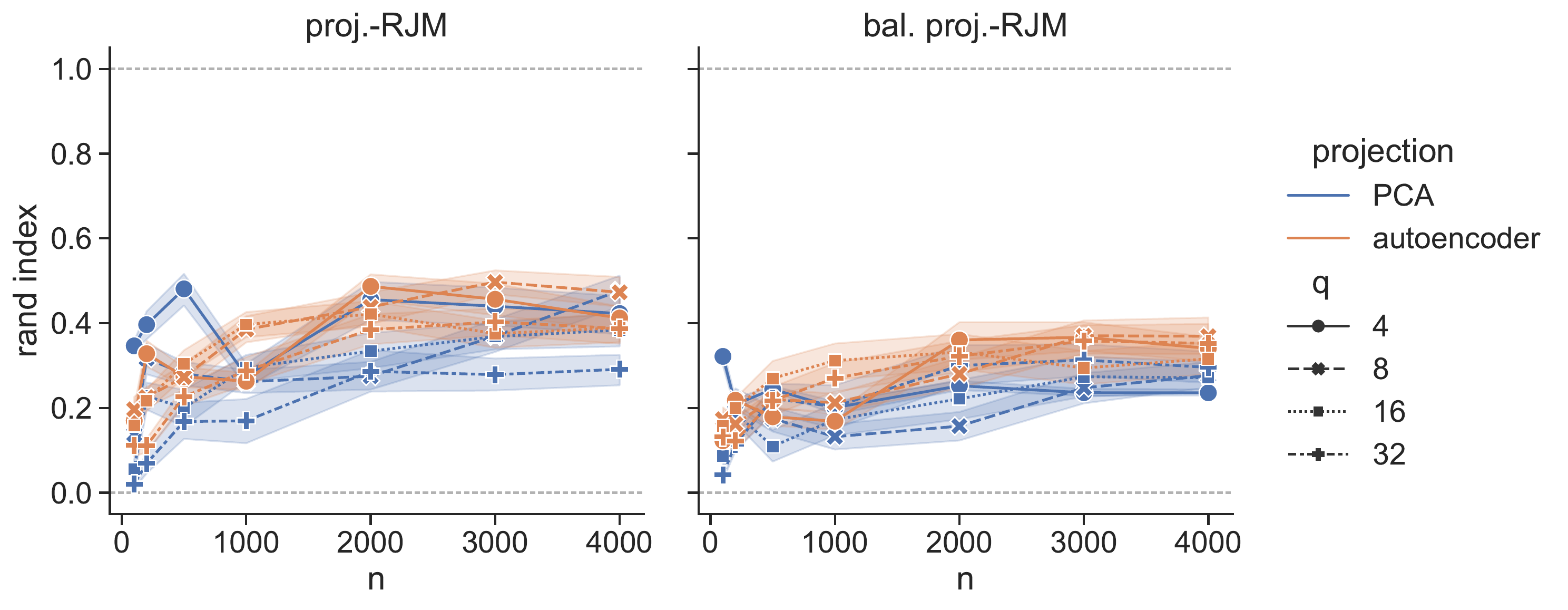}
    \caption{Cancer data, Rand index vs. samples size $n$, comparison between PCA and autoencoder embeddings. TCGA RNA sequencing data as above, with problem dimension (number of genes) fixed to $p=10000$. The original mean difference in $x$ is retained; signal in $y$ is of magnitude $|\beta|=0.1$ (see text for details). For each of the $K=4$ latent classes, only one coefficient (among the 10000) in $\beta_k$ is non-zero. Two projected RJM variants are: proj.-RJM (left) and bal. proj.-RJM (right).}
    \label{fig:autoencoder1}
\end{figure}

\subsection{Fully adaptive S-RJM} \label{sec:exp_mod_sel}
In the experiments above, we used fixed values for two of the RJM hyper-parameters: the embedding size $q$ of the projected RJM and the balancing exponent $\frac{1}{T}$ of the balanced RJM. As shown in Appendix \ref{app:exp_mod_sel}, different values of these parameters can be important. Likewise, the true number of latent subgroups $K$ is usually unknown and must be estimated by the clustering method and in general these aspects are coupled. 
In this section, we propose an adaptive scheme to set $q$ and $K$ (but not $T$) for RJM. The parameter $T$ controls the relative importance of $x$ and $y$ and we think this should be defined by the nature of the applied problem.
Default values such as $T=1$ (no re-balancing), $T=q$ in the projected space, or $T = p$ in the ambient space are all reasonable candidates that could fulfil different needs. 

\medskip
\noindent 
{\it Data-adaptive procedure}.
We set $K$ and $q$ jointly using a combination of classical Information Criteria (IC) and a stability measure. 
For any specific combination $(q,K)$ we define these quantities as described below. 
First, the IC, which can be either the Akaike Information Criterion \citep[AIC,][]{akaike1974new} 
   $ \text{AIC} = - 2\ln p(\by, \bx ; \widehat{\btheta}) + 2  df$, 
or the Bayesian Information Criterion \citep[BIC,][]{schwarz1978estimating}
$\text{BIC} =- 2 \ln p(\by, \bx ; \widehat{\btheta}) + df \ln(n)$,
where $\widehat{\btheta}$ is the model parameter estimated by the algorithm, $p(\by, \bx ; \widehat{\btheta})$ is the corresponding observed likelihood without regularisation \eqref{eq:hierarchical_conditional_gaussian_density}, and $df=K(3 + p +q(q+3)/2)$ is the number of degrees of freedom in RJM. Second, a stability criterion taken from \cite{taschler2019model}. 
This is described in detail in the reference and summarized in the following. The general idea is to quantify stability 
of the label assignments under data perturbation whilst controlling overall computational costs. To do so, we sample the data to produce 5 (overlapping) folds each containing 75\% of the total observations. Then, we carry out latent class learning on each fold. For each of the 10 different pairs of folds, we compute the Rand index between the two estimated label assignments (for only those samples shared between the respective folds) and return the average as the stability score (higher scores indicate more stability).

With these elements, the procedure is as follows. First, we define a grid of values of $(K, q)$ to assess. For each $q$, we get the 
IC-optimal 
number of latent subgroups $\widehat{K}(q)$. Then, we select among the candidate pairs $(q, \widehat{K}(q))$ to maximise the stability score. The result is the pair $(\widehat{q}, \widehat{K}(\widehat{q}))$. This selection methodology is designed for projected RJM. To adapt it to the ambient space RJM, the $q-$selection step is omitted. See Appendix \ref{app:exp_mod_sel}, for an illustrated analysis of the different steps of the selection procedure.

\medskip
\noindent 
{\it Experimental configuration}.
We use the TGCA dataset with all $K^*=4$ subgroups. Data and experimental settings are as in Section \ref{sec:exp_tcga}: we sub-sample $p=100$ genes and $n=500$ observations, keep the original $\mu$, and generate responses $y$ with a $\bbeta$ of magnitude $|\beta|=0.1$. Each $\beta_k$ has 10 non zero coefficients (with mutually disjoint sparsity patterns). We include the adaptive version (using AIC and stability, as outlined above) of all four RJM variants 
(for the two ambient RJMs, $q{=}p$ is fixed and only $\widehat{K}$ is estimated). We take $T=p$ for the balanced version. For the two projected RJMs, both $\widehat{K}$ and $\widehat{q}$ are estimated. We take $T=\widehat{q}$ for the balanced version. We also run the non-adaptive version of each of the four RJMs. To see whether adaptive methods can compete with knowledge of the correct $K$ for the non-adaptive methods we set $K=K^*=4$. Additionally, we run the non-adaptive projected RJMs with all values of $q$ from the grid of candidate embedding sizes of adaptive projected RJM. We also run all the other clustering methods from the state of the art, all of them with a fixed number of estimated subgroups equal true value: $K=K^*=4$. 
We run 50 independent simulations.


\medskip
\noindent 
{\it Results}.
\figurename~\ref{fig:model_selection_full_rand_index} compares adaptive RJM variants to the other methods. 
The adaptive proj.-RJM (pink) reaches the same Rand Index as the non-adaptive proj.-RJM with the better value of $q$ ($q=5$ here). Interestingly, adaptive balanced projected RJM (light pink) is actually better than its non-adaptive version regardless of the value of $q$. This is striking, given that the non-adaptive variants have access to 
the true number of clusters and a label oracle-informed procedure for setting $q$, while the adaptive methods have to proceed in an entirely data-driven manner. This suggests that 
setting 
$\widehat{q}$ on a case-by-case basis can actually improve on a embedding size $q$ that is {\it fixed} for all realisations.
However, both versions of the adaptive ambient space RJM (turquoise and light turquoise) under-perform. 


\begin{figure}[tbhp]
    \centering
    \includegraphics[width=\linewidth]{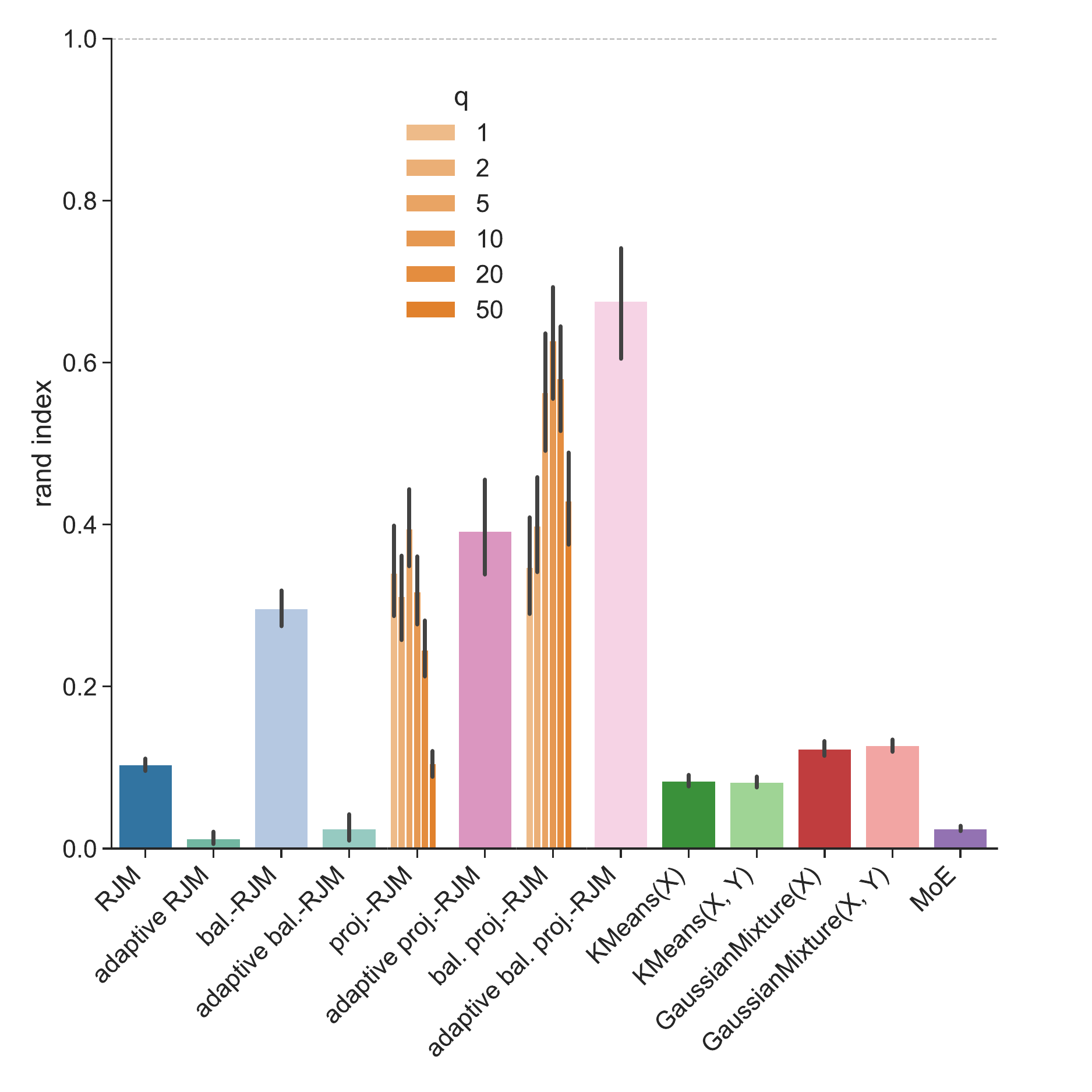}
    \caption{Rand Index of the adaptive RJM compared to the other methods (which know and use the true value $K^*=4$). Adaptive RJM selects $\widehat{K}$ with AIC, and proj.-RJM selects $(\widehat{K}, \widehat{q})$ with AIC combined with stability (see text for details). The non-adaptive projected RJM is run with several different values for the embedding size $q$, represented by an orange gradient (the set of values for $q$ is the same as for the adaptive projected RJM).}

    \label{fig:model_selection_full_rand_index}
\end{figure}

\section{Discussion and conclusion}
We introduced a scalable framework for the joint mixture modelling of feature distributions and regression models specific to latent subgroups. We built on the Regularised Joint Mixtures (RJM) model and algorithm of \cite{perrakis2019latent},
addressing in particular issues that arise in high dimensions via a combination of data reduction and re-weighting of the regression and feature model components of the joint model. 
The schemes we introduced are modular, in the sense of allowing different data reductions to be combined with high-dimensional learners and we showed examples including PCA and an autoencoder as the data reduction and a range of sparse learners, including non-Gaussian models. Taken together, our results provide a way to learn interpretable high-dimensional regression models and sparse feature distributions under conditions of latent heterogeneity that could otherwise severely confound the models.

In the interests of making S-RJM as user-friendly as possible, we also put forward a fully adaptive scheme (that sets automatically the embedding dimension $q$ and number of clusters $K$). We found that this fully adaptive scheme 
was highly effective on the real RNA-sequencing data, matching the performance of variants with oracle-like information used to set the hyper-parameters. 
We proposed a number of variants of the general S-RJM scheme, but would recommend in particular the balanced projected RJM for most situations. This combines data reduction with a default re-weighting and we think provides a good default.

Our work was motivated by, and illustrated using real data from, the biomedical domain. In many areas of biomedicine, latent heterogeneity is of crucial interest in defining disease subtypes. Such subtypes may differ in subtle ways, for example at the level of covariance or graphical model structure \citep{stadler2017} rather than the kind of large mean difference that classical K-means and related models are geared towards. At the same time, due to the fact that disease subtypes involve differences in the underlying molecular biology, predictive models themselves may differ between subtypes, hence the need to also account for the possibility of non-identical sparsity patterns \citep{dondelinger2020}.
Results on several real and simulated datasets, for a wide range of class signal types and intensities, support the notion that S-RJM is capable of learning latent structure and latent group-specific parameters under challenging circumstances, often outperforming standard models. 
Hence, we think S-RJM will be a relevant tool in dissecting heterogeneity in stratified and personalized medicine, as it allows efficient high-dimensional modelling with interpretable outputs. 

Another related, but more general, setting in which we think S-RJM can contribute is as a lightweight first stage tool for out-of-distribution (OOD) learning. OOD learning has been studied in the context of causality \citep[see e.g][]{peters2016,arjovsky2019} and also in biomedical applications \citep[e.g.][]{warnat2020}, where concerns arise due to the fact that in practical use cases distributions at  deployment time may differ from the training distribution, such that non-robust high-dimensional learners may perform unreliably in practice. 
Since S-RJM, and in particular the balanced variants, place an emphasis on learning latent structure associated with differences in regression models and explicit differences in feature distributions, it could be used as a pre-processing step to identify distributional regimes (corresponding to the latent groups) using which OOD methods could be trained and tested (i.e. in a cross-regime sense). 

\bigskip
\noindent
{\bf Code.} Code to run S-RJM and reproduce the simulation results is publicly available at \url{https://github.com/tlartigue/Scalable-Regularised-Joint-Mixture-Models}.

\bigskip
\noindent
{\bf Acknowledgements.} {The results of this paper are in part based upon data generated by the TCGA Research Network: \url{https://www.cancer.gov/tcga}.
Supported by the Bundesministerium f\"{u}r Bildung und Forschung (BMBF) project ``MechML”.}

\newpage

\appendix
\section{Shrinkage on the precision matrix within the EM}\label{app:oas}
In this appendix, we give more details on the shrinkage on $\bOmega$ used in the ambient space RJM-EM.

When the EM is run on the projected $W^t x\in \R^q$, and the embedding size $q$ is small compared to the sample size $n$, then, it is computationally feasible to have no penalty at all on $\bOmega$ in the M-step of the EM. However, when the EM is performed in the ambient space, this regularisation effect is absent. A natural way to deal with singular covariance matrices in high dimension is with a penalty on the nuclear norm: $\pen(\Omega_k|\delta) = \frac{\delta}{2} \tr(\Omega_k)$, with a $\delta \in \R^*_+$. This is a proper penalty, with which the RJM-EM runs smoothly. For our experiments however, we have made the choice to adopt an adaptive shrinkage intensity $\delta$ in order to avoid unnecessary loss of information through excessive shrinkage when the empirical covariance estimate is already close to regular. To that end, instead of a simple penalty term, we apply at each M-step a state of the art adaptive shrinkage on $\Omega_k$: the \textit{Optimal Approximation Shrinkage} of \cite{chen2010shrinkage}. This shrinkage takes the form:
\begin{equation*}
     \widehat{\Omega}_k^{-1} = (1-\hat{\delta}) S_k  + \hat{\delta} \frac{\tr(S_k)}{p} I_p \, .
\end{equation*}
Where $S_k$ is the current empirical covariance estimate, and $\hat{\delta}$ is set adaptively from $S_k$ at each M-step \cite[see][Eq. 23]{chen2010shrinkage}. To fit the penalised EM formalism, we can write this shrunk $\widehat{\Omega}_k$ as the solution of the penalised M-step:
\begin{equation*}
    \widehat{\Omega}_k^{-1} := \argmin{\Omega_k}\brace{ \frac{1}{2} \tr(\Omega_k S_k) - \frac{1}{2} \ln \det{\Omega_k} + \pen(\Omega_k |S_k)} \, ,
\end{equation*}
with the penalty:
\begin{equation*}
    \pen(\Omega_k| S_k) = \frac{\hat{\delta}}{2} \tr\parent{\parent{\frac{\tr(S_k)}{p}I_p-S_k}\Omega_k} \, .
\end{equation*}
However, this penalty is technically improper in the context of an EM algorithm. Indeed, because of its dependency of $S_k$, which is re-estimated at each new step, this penalty causes the overall objective function $l(\btheta)$ to change slightly at each step. Despite this, we observe in practice that the ambient space EM is well behaved with this adaptive shrinkage. For any user that prefers a more theoretically sound objective function, we recommend the classical non-adaptive nuclear norm regularisation.

\section{EM Convergence Theorem} \label{app:thm_dlm}
In this appendix, we recall Theorem 1 of \cite{delyon1999convergence}, which is the basis for our own Proposition \ref{thm:convergence_EM} in Section \ref{sec:theory}.

For $n\in \bN^*$, \cite{delyon1999convergence} consider a $\sigma$-finite positive Borel measure $\nu$ on $\R^n$ and a family of positive integrable Borel functions on $\R^n$: $\brace{\bz \in \R^n \mapsto f(\bz ; \btheta) \mid \btheta \in \Theta}$. Where the parameter set $\Theta$ is a subset of $\R^l$ for some $l \in \bN^*$. They define the functions: $g(\btheta) = \int_{\bz\in \R^n} f(\bz, \btheta) \nu(d\bz)$, $l(\btheta) := -\ln g(\btheta)$ and 
\begin{equation*}
    p(\bz; \theta) :=
  \begin{cases}
    f(\bz, \btheta)/g(\btheta)       & \quad \text{if } g(\btheta) \neq 0\\
    0  & \quad \text{if } g(\btheta) = 0 \, .
  \end{cases}
\end{equation*}
They define the following regularity conditions:
\begin{itemize}
    \item \textbf{M1.} $\Theta$ is an open set of $\R^l$, and the function $f$ can be written: 
    \begin{equation*}
        f(\bz; \btheta) = exp\parent{-\psi(\btheta) + \dotprod{\tilde{S}(\bz), \phi(\btheta)}} \, .
    \end{equation*}
    Where $\dotprod{.,.}$ denotes the scalar product in $\R^m$ for some $m\in \bN^*$. $\tilde{S}$ is a Borel function from $\R^n$ to an open set $\S \subseteq \R^m$ such that the convex hull of $S(\R^n) \subseteq \S$. Additionally, $\forall \btheta \in \Theta:$
    \begin{equation*}
        \int_{\bz \in \R^n} \det{\tilde{S}(\bz)} p(\bz; \btheta) \nu(d\bz) < \infty \,.
    \end{equation*}
    
    \item \textbf{M2.} $\phi$ and $\psi$ are twice continuously differentiable on $\Theta$.
    
    \item \textbf{M3.} $\overline{s} : \Theta \longrightarrow \S$ defined as:
    \begin{equation*}
       \overline{s}(\btheta) := \int_{\bz \in \R^n} \tilde{S}(\bz) p(\bz; \btheta) \nu(d\bz)\,,
    \end{equation*}
    is continuously differentiable in $\btheta$.
    
    \item \textbf{M4.} The function $l$ is continuously differentiable on $\Theta$ and
    \begin{equation*}
        \nabla_{\btheta} \int_{\bz \in \R^n} f(\bz; \btheta) \nu(d\bz) = \int_{\bz \in \R^n} \nabla_{\btheta} f(\bz; \btheta) \nu(d\bz)
    \end{equation*}
    
    \item \textbf{M5.} Let $L(s, \btheta) := -\psi(\btheta) + \dotprod{s, \phi(\btheta)}$ There exists a function $\widehat{\btheta} : \S \longrightarrow \Theta$ continuously differentiable such that:
    \begin{equation*}
        \forall \btheta \in \Theta, \quad \forall s\in \S, L(s, \widehat{\btheta}(s)) \geq L(s, \btheta) \, .
    \end{equation*}
\end{itemize}

With this formalism, the EM sequence is defined from an initial point $\btheta^{(0)}$ with the recurrence formula: $\btheta^{(t+1)} := \widehat{\btheta} \circ \bar{s}(\btheta^{(t)})$. \cite{delyon1999convergence} define $\L(\btheta)$ the set of limit points of the sequence $\btheta^{(t)}$ initialised with $\btheta^{(0)}=\btheta$. Let clos($A$) denote the closure of a set $A$ and $d(x, A)$ the distance between a point $x$ and a closed set $A$. The following statement is an amalgamation of Theorem 1 and related results from \cite{delyon1999convergence}:
\begin{theorem}[Theorem 1 of \cite{delyon1999convergence}]
    Assume that (M1)–(M5) hold. Then, $l(\widehat{\btheta} \circ \bar{s}(\btheta))\leq l(\btheta)$ with equality if and only if $\widehat{\btheta} \circ \bar{s}(\btheta) =\btheta$. Moreover, the set of fixed points of the EM is equal to the set of stationary points of $l(\btheta)$: $\L:=\brace{\btheta \in \Theta \mid \btheta = \widehat{\btheta} \circ \bar{s}(\btheta)} = \brace{\btheta \in \Theta \mid \nabla_{\btheta} l(\btheta) = 0}$. If we assume in addition that for any $\btheta \in \Theta$, clos($\L(\btheta)$) is a compact subset of $\Theta$, then, for any initial point $\btheta^{(0)}=\btheta$, $\L(\btheta)\subseteq\L$ 
    and $\underset{t \to \infty}{lim} d(\btheta^{(t)}, \L(\btheta)) \longrightarrow 0$. 
\end{theorem}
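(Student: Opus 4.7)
The plan is to apply Theorem 1 of \cite{delyon1999convergence} (DLM), recalled in Appendix \ref{app:thm_dlm}, by verifying its regularity conditions M1--M5 in our setting. I would begin from the exponential-family decomposition
\[
f(\bz;\btheta) = \exp\parent{-\psi(\btheta) + \dotprod{\tilde S(\bz),\phi(\btheta)}}
\]
with sufficient statistic $\tilde S(\bz) = (\indic_{z_i=k})_{i,k}$, natural parameter $\phi(\btheta) = (\ln p(y_i,x_i,z_i=k;\btheta))_{i,k}$, and $\psi(\btheta) = \pen(\btheta)$ that is exhibited immediately before the statement; this reformulation is the scaffolding on which all five DLM conditions are checked. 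The three conclusions of the proposition (monotonicity of $l(\btheta^{(t)})$, convergence $d(\btheta^{(t)},\L)\to 0$, and the identification of $\L$ with the stationary points of $l$) will all follow as separate clauses of the DLM theorem once M1--M5 hold.

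For M1, I would observe that $\tilde S(\R^n) \subseteq \brace{0,1}^{n\times K}$, so the convex hull lies inside the open set $\S = \R_+^{n\times K}$, and the integrability $\int \norm{\tilde S(\bz)}\, p(\bz;\btheta)\,\nu(d\bz)<\infty$ is immediate because $\nu$ is a finite sum of Diracs over $\llbracket 1,K\rrbracket^n$ and $\tilde S$ is bounded. For M2, each $\ln p(y_i,x_i,z_i=k;\btheta)$ is $C^{\infty}$ on $\Theta$ as a log-Gaussian density composed with smooth parameter maps, so together with \textbf{(C1)} we get $\phi,\psi\in C^2$. For M4, because the integral over $\bz$ collapses to a finite sum, $l(\btheta)$ is differentiable term-by-term and the exchange of gradient and integral holds trivially. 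For M3, $\bar s(\btheta)_{i,k} = p(z_i=k\mid y_i,x_i;\btheta)$ is a ratio of strictly positive $C^{\infty}$ functions of $\btheta$ (the denominator is a positive mixture weight), hence continuously differentiable in $\btheta$. M5 is precisely assumption \textbf{(C2)}.

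With M1--M5 in hand, DLM's Theorem 1 immediately delivers the monotonicity $l(\btheta^{(t+1)})\le l(\btheta^{(t)})$ and the characterisation $\L = \brace{\btheta\in\Theta\mid \nabla_{\btheta} l(\btheta)=0}$. The convergence $d(\btheta^{(t)},\L)\to 0$ is also supplied by that theorem, conditional on the EM sequence's limit points lying in a compact subset of $\Theta$---which is exactly assumption \textbf{(C3)}. The main obstacle here is bookkeeping rather than genuine mathematical difficulty: correctly identifying the pieces $(\tilde S,\phi,\psi,\S,\nu)$ and confirming each DLM hypothesis against the finite-mixture Gaussian structure. The only non-routine point is the smoothness required in M3, which relies on the E-step denominator being uniformly positive on compact parameter subsets; this holds because on any compact of $\Theta$ the parameters $\tau_k$, $\sigma_k^2$, and the eigenvalues of $\Omega_k$ stay away from their singular boundaries, so the mixture weights are bounded below and the ratio defining $\bar s$ is smooth in $\btheta$.
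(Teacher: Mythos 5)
There is a genuine gap here, and it is a gap of target rather than of technique. The statement you were asked to prove is Theorem 1 of \cite{delyon1999convergence} itself: a \emph{general} result, stated conditionally on M1--M5 for an arbitrary model of the exponential form $f(\bz;\btheta)=\exp\parent{-\psi(\btheta)+\dotprod{\tilde S(\bz),\phi(\btheta)}}$, asserting the descent property of the EM map, the identification $\L=\brace{\btheta\in\Theta\mid\nabla_{\btheta}l(\btheta)=0}$, and the limit-set convergence under the compactness hypothesis. Your proposal instead verifies M1--M5 for the particular balanced projected RJM model and then, ``with M1--M5 in hand,'' invokes DLM's Theorem 1 to conclude --- that is, you assume as a black box precisely the result to be proved, which is circular as a proof of this statement. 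What you have actually written is, in substance, the paper's proof of Proposition \ref{thm:convergence_EM} in Section \ref{sec:theory} (which likewise checks that the Gaussian complete likelihood is $C^{\infty}$, that the integrals over $\bz$ collapse to finite sums over $\llbrack{1,K}^n$ so that M1--M4 hold, that \textbf{C2} supplies M5 and \textbf{C3} the compactness); your rendition of that application is even somewhat more careful than the paper's, e.g.\ the uniform positivity of the E-step denominator underpinning M3. But the paper offers no proof of the appendix Theorem beyond citation --- it explicitly recalls it as an amalgamation of Theorem 1 and related results from \cite{delyon1999convergence} --- so a blind proof of the statement must reproduce the DLM argument, not re-derive the Proposition.

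Concretely, the missing content is threefold. First, monotonicity: one writes $l(\btheta)=-L(\bar s(\btheta),\btheta)+H(\bar s(\btheta))$-type decompositions so that, since the M-step maximises $\btheta'\mapsto L(\bar s(\btheta),\btheta')$ and the discarded term is a Kullback--Leibler divergence (nonnegative, zero iff the posteriors coincide), one gets $l(\widehat{\btheta}\circ\bar s(\btheta))\leq l(\btheta)$ together with the equality case. Second, the fixed-point characterisation rests on the Fisher-type identity $\nabla_{\btheta}l(\btheta)=-\nabla_{\btheta}L(s,\btheta)\vert_{s=\bar s(\btheta)}$, which is exactly where the interchange of gradient and integral (M4) and the smoothness in M2--M3 do real work; a fixed point maximises $L(\bar s(\btheta),\cdot)$, hence annihilates this gradient, and conversely. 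Third, the asymptotic claim $\underset{t\to\infty}{lim}\, d(\btheta^{(t)},\L(\btheta))\longrightarrow 0$ and $\L(\btheta)\subseteq\L$ require a Lyapunov/limit-set argument combining the continuity of the EM map $\widehat{\btheta}\circ\bar s$ (from M3 and M5) with the assumed compactness of $\mathrm{clos}(\L(\btheta))$. None of these steps appears in your proposal. As a minor aside, even read as a proof of the Proposition, your check of M1 inherits a wrinkle from the paper: $\tilde S(\bz)$ is $\brace{0,1}$-valued, so its convex hull $[0,1]^{n\times K}$ is not contained in the \emph{open} positive orthant $\S=\R_+^{n\times K}$ as literally required; this is harmless in practice because the E-step outputs $\bar s(\btheta)$ are strictly positive posterior probabilities, but it deserves a remark rather than the claim that the containment is ``immediate.''
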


\newpage
\section{Gaussian simulations} \label{app:exp_gaussian}
In this appendix, we provide additional figures (\ref{fig:rand_index_vs_n_flipped_signs} to \ref{fig:AUC_beta_Omega_vs_n_flipped_signs}) to expand on the analysis of the synthetic Gaussian data experiments (Section \ref{sec:exp_gaussian}) of the main paper.

\begin{figure}[tbhp]
    \centering
    \includegraphics[width=\linewidth]{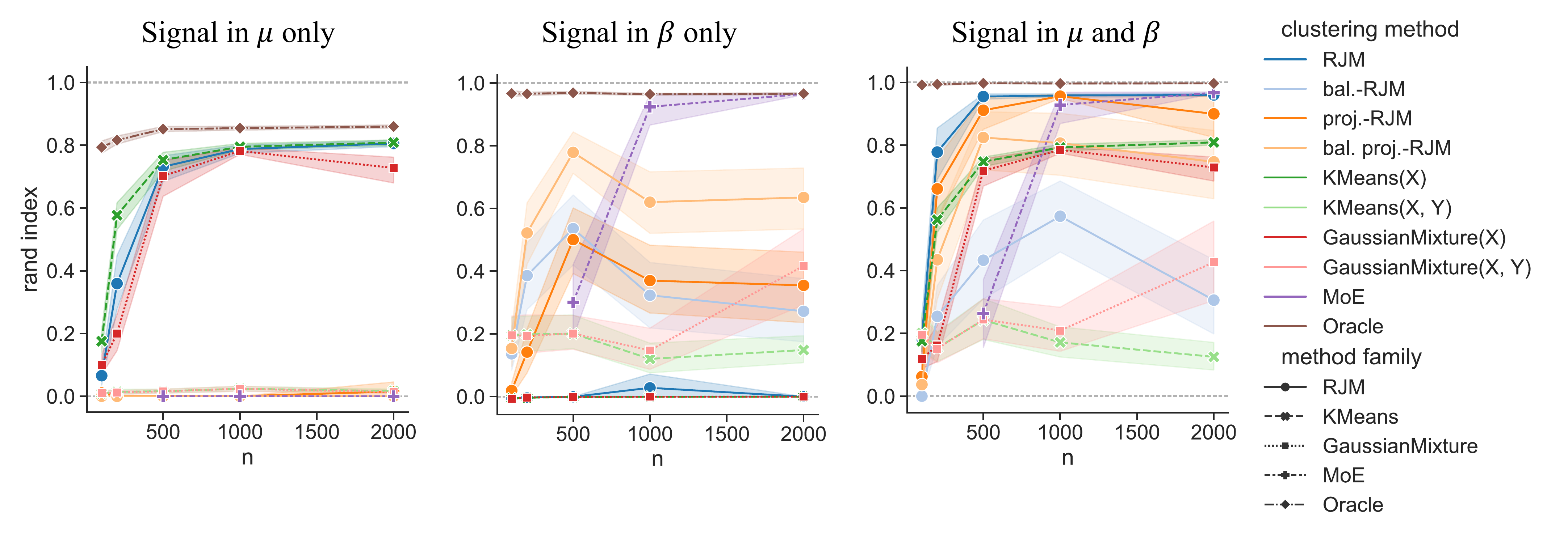}
    \caption{\textbf{Flipped signs}. Rand Index vs the sample size. Ambient space size $p=100$. (Left) Signal in $\mu$ only. $\delta_{\mu} = 0.2$. $\beta_1=\beta_2$. (Middle) Signal in $\beta$ only. $|\beta| = 5$. No overlap between the positions of non zeros coefficients between the two latent subgroups. (Right) Signal in both $\mu$ and $\beta$. $\delta_{\mu} = 0.2, |\beta| = 5$.}
    \label{fig:rand_index_vs_n_flipped_signs}
\end{figure}

\begin{figure}[tbhp]
    \centering
    \includegraphics[width=\linewidth]{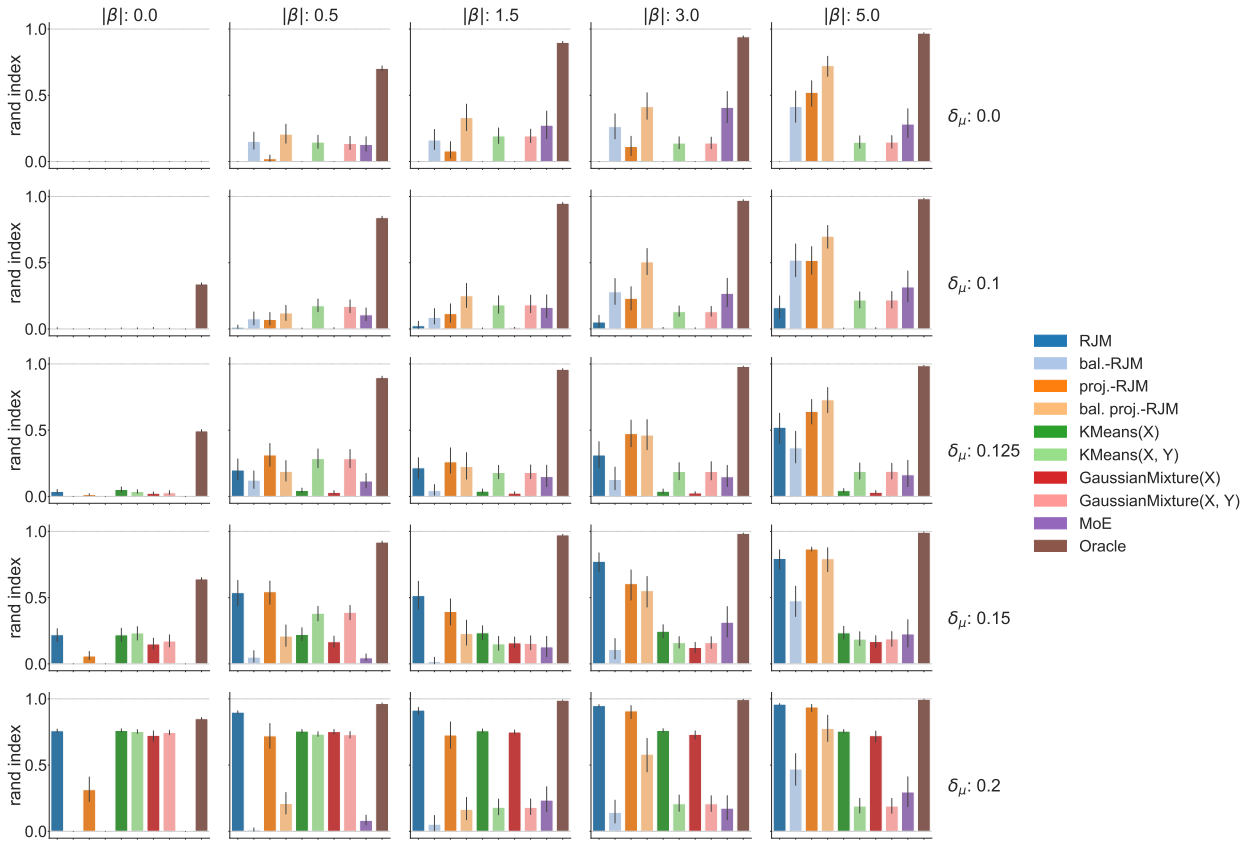}
    \caption{Rand Index of several methods over a grid of different forms and magnitudes. For each regime, we can identify the preferable methods. Ambient space size $p=100$. Sample size $n=500$. When low/no signal in $\mu$ and high signal in $\beta$, we observe that bal. proj.-RJM is better than proj.-RJM which is better than bal.-RJM which is better than MoE. ``Gradation" effect of the regularisation in RJM: each layer of regularisation of $x$ increases the performances. When the signal in $x$ increases, the regular RJM gradually becomes the preeminent method. With KMeans($x$) and MoG($x$) getting better as well. For some reason, when there is a strong signal in $x$, KMeans($x, y$) and MoG($x, y$) get worse as the signal in $y$ increases. Maybe the geometry gets confusing for these two model based classifiers when the signal is strong in both $x$ and $y$.}
    \label{fig:grid_signal}
\end{figure}

\begin{figure}[tbhp]
    \centering
    \includegraphics[width=0.8\linewidth]{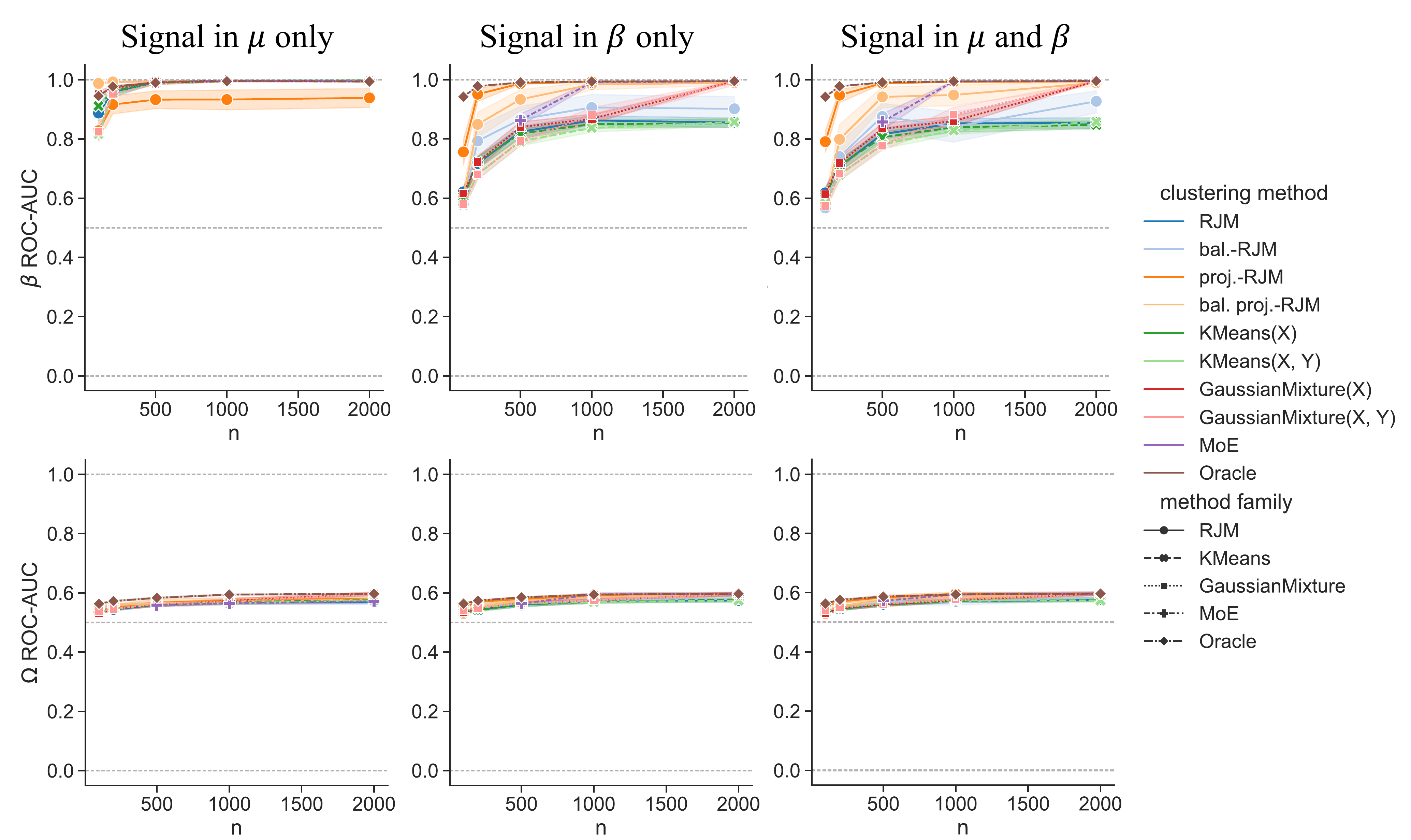}
    \caption{\textbf{ROC-AUC} on the recovery of the non zeros in $\bbeta$ (top) or $\bOmega$ (bottom) vs the sample size. Ambient space size $p=100$. (Left) Signal in $\mu$: $\delta_{\mu} = 0.2$. No signal in $\beta$: $\beta_1=\beta_2$. (Middle) Signal in $\beta$: $|\beta| = 5$, with no overlap between the non-zeros of the two latent subgroups. No signal in $\mu$: $\delta_{\mu}=0$ (Right) Signal in both $\mu$ and $\beta$. $\delta_{\mu} = 0.2, |\beta| = 5$. For all of these experiments, there is a covariance signal in $x$: $\Omega_1 \neq \Omega_2$. Where these two sparse precision matrices are generated independently. ROC-AUC version of \figurename~\ref{fig:PR_AUC_beta_Omega_vs_n_different_Omegas_more_edges}.}
    \label{fig:AUC_beta_Omega_vs_n_different_Omegas_more_edges}
\end{figure}

\begin{figure}[tbhp]
    \centering
    \includegraphics[width=0.8\linewidth]{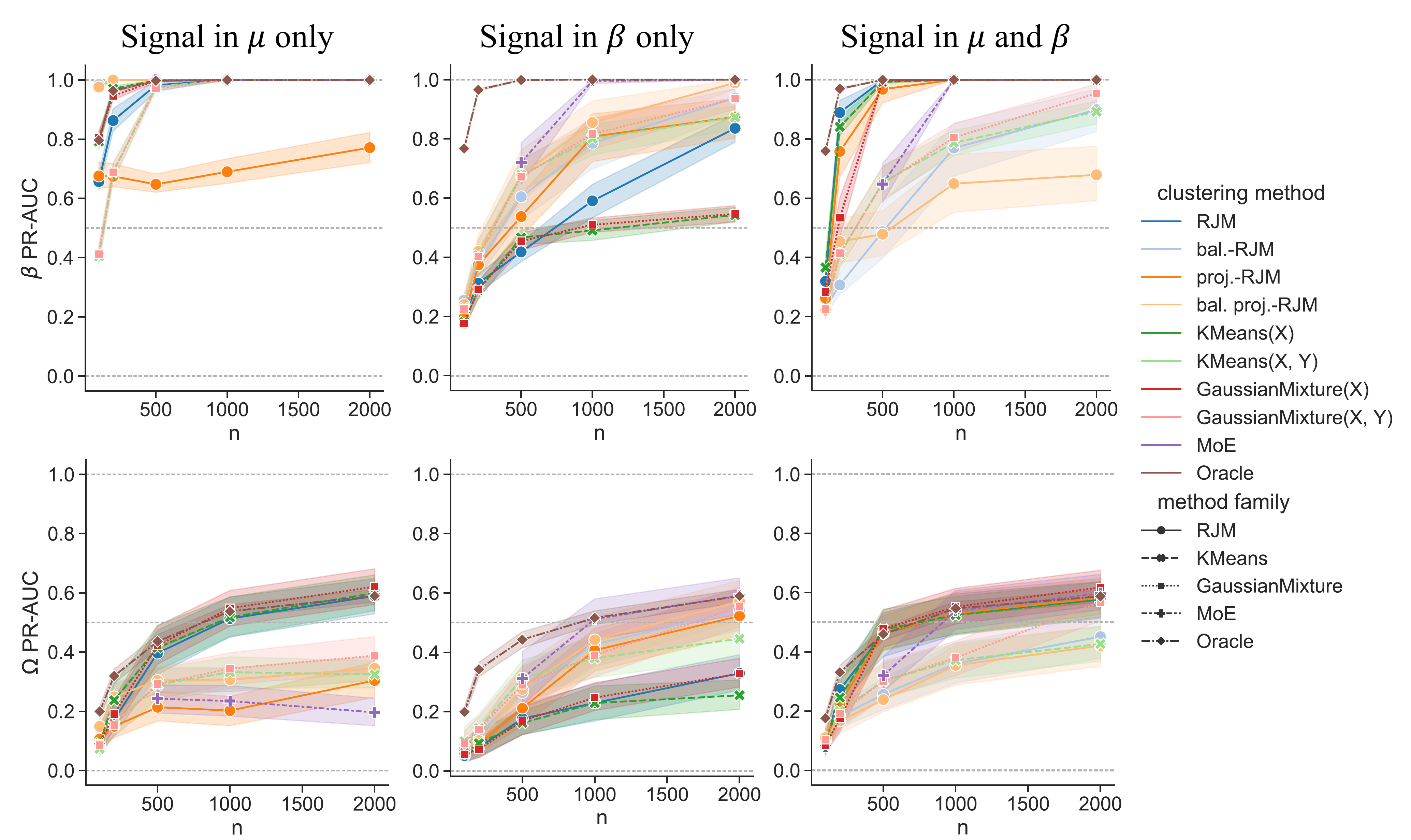}
    \caption{\textbf{Very sparse $\bOmega$}. Precision-Recall (PR) AUC on the recovery of the non zeros in $\bbeta$ (top) or $\bOmega$ (bottom) vs the sample size. Ambient space size $p=100$. (Left) Signal in $\mu$: $\delta_{\mu} = 0.2$. No signal in $\beta$: $\beta_1=\beta_2$. (Middle) Signal in $\beta$: $|\beta| = 5$, with no overlap between the non-zeros of the two latent subgroups. No signal in $\mu$: $\delta_{\mu}=0$ (Right) Signal in both $\mu$ and $\beta$. $\delta_{\mu} = 0.2, |\beta| = 5$. Same experiment as \figurename~\ref{fig:PR_AUC_beta_Omega_vs_n_different_Omegas_more_edges}, with much sparser $\bOmega$. The recovery of $\bOmega$ is much better.}
    \label{fig:AUC_beta_Omega_vs_n_different_Omegas}
\end{figure}

\begin{figure}[tbhp]
    \centering
    \includegraphics[width=0.8\linewidth]{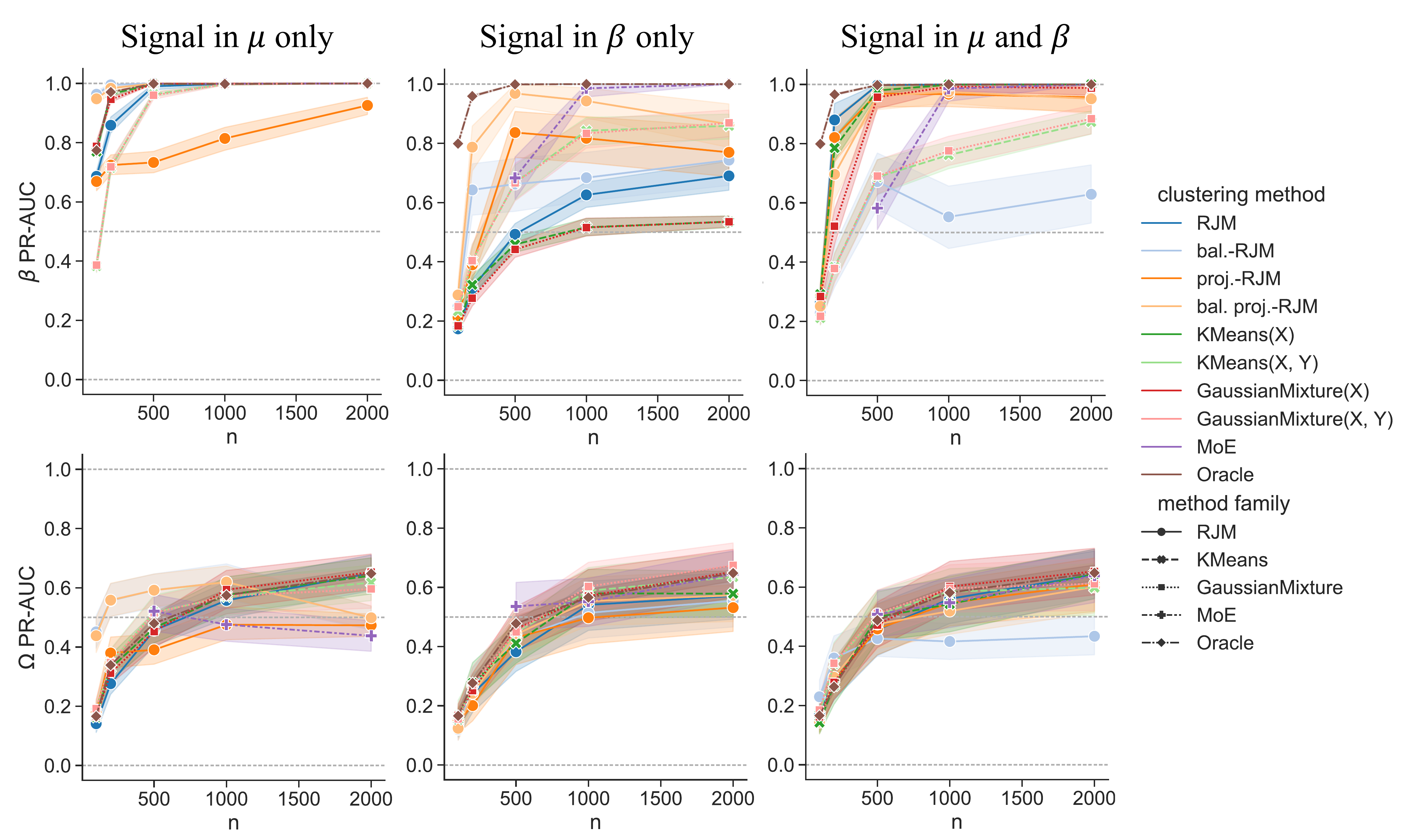}
    \caption{$\boldsymbol{\Omega_1=\Omega_2}$. Precision-Recall (PR) AUC on the recovery of the non zeros in $\bbeta$ (top) or $\bOmega$ (bottom) vs the sample size. Ambient space size $p=100$. (Left) Signal in $\mu$ only. $\delta_{\mu} = 0.2$. $\beta_1=\beta_2$. (Middle) Signal in $\beta$ only. $|\beta| = 5$. No overlap between the positions of non zeros coefficients between the two latent subgroups. (Right) Signal in both $\mu$ and $\beta$. $\delta_{\mu} = 0.2, |\beta| = 5$. Sparsity pattern recovery metrics of the experiment analysed in \figurename~\ref{fig:rand_index_vs_n}.}
    \label{fig:AUC_beta_Omega_vs_n}
\end{figure}

\begin{figure}[tbhp]
    \centering
    \includegraphics[width=0.8\linewidth]{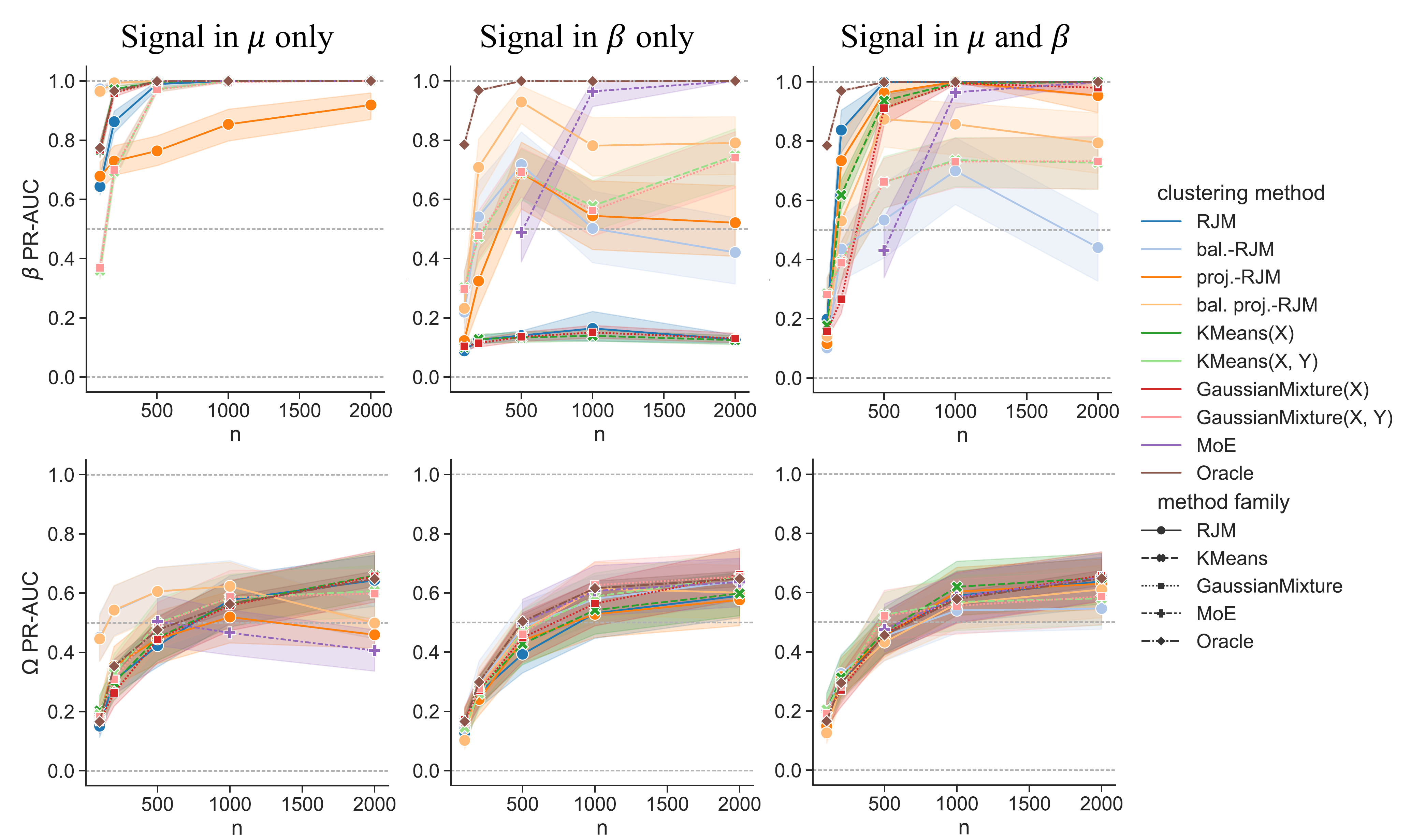}
    \caption{\textbf{$\boldsymbol{\Omega_1=\Omega_2}$, flipped signs in $\boldsymbol{\beta}$}. Precision-Recall (PR) AUC on the recovery of the non zeros in $\beta$ (top) or $\bOmega$ (bottom) vs the sample size. Ambient space size $p=100$. (Left) Signal in $\mu$ only. $\delta_{\mu} = 0.2$. $\beta_1=\beta_2$. (Middle) Signal in $\beta$ only. $|\beta| = 5$. (Right) Signal in both $\mu$ and $\beta$. $\delta_{\mu} = 0.2, |\beta| = 5$. Unlike \figurename~\ref{fig:AUC_beta_Omega_vs_n}, the positions of non zeros coefficients are the same for each of the two latent subgroups, but the coefficients have opposite values between the two latent subgroups: $\beta_1=-\beta_2$. (Right) Signal in both $\mu$ and $\beta$, $\beta_1=-\beta_2$. $\delta_{\mu} = 0.2, |\beta| = 5$. Sparsity pattern recovery metrics of the experiment analysed in \figurename~\ref{fig:rand_index_vs_n_flipped_signs}.}
    \label{fig:AUC_beta_Omega_vs_n_flipped_signs}
\end{figure}

\newpage

\section{Non-Gaussian simulated data.}\label{app:exp_non_gaussian}
In this appendix, we provide additional figures (\ref{fig:student_rand_index} to \ref{fig:mixed_binary_pr_auc}) to expand on the analysis of the Non-Gaussian data experiments (Section \ref{sec:exp_non_gaussian}) of the main paper.

\begin{figure}[tbhp]
    \centering
    \includegraphics[width=\linewidth]{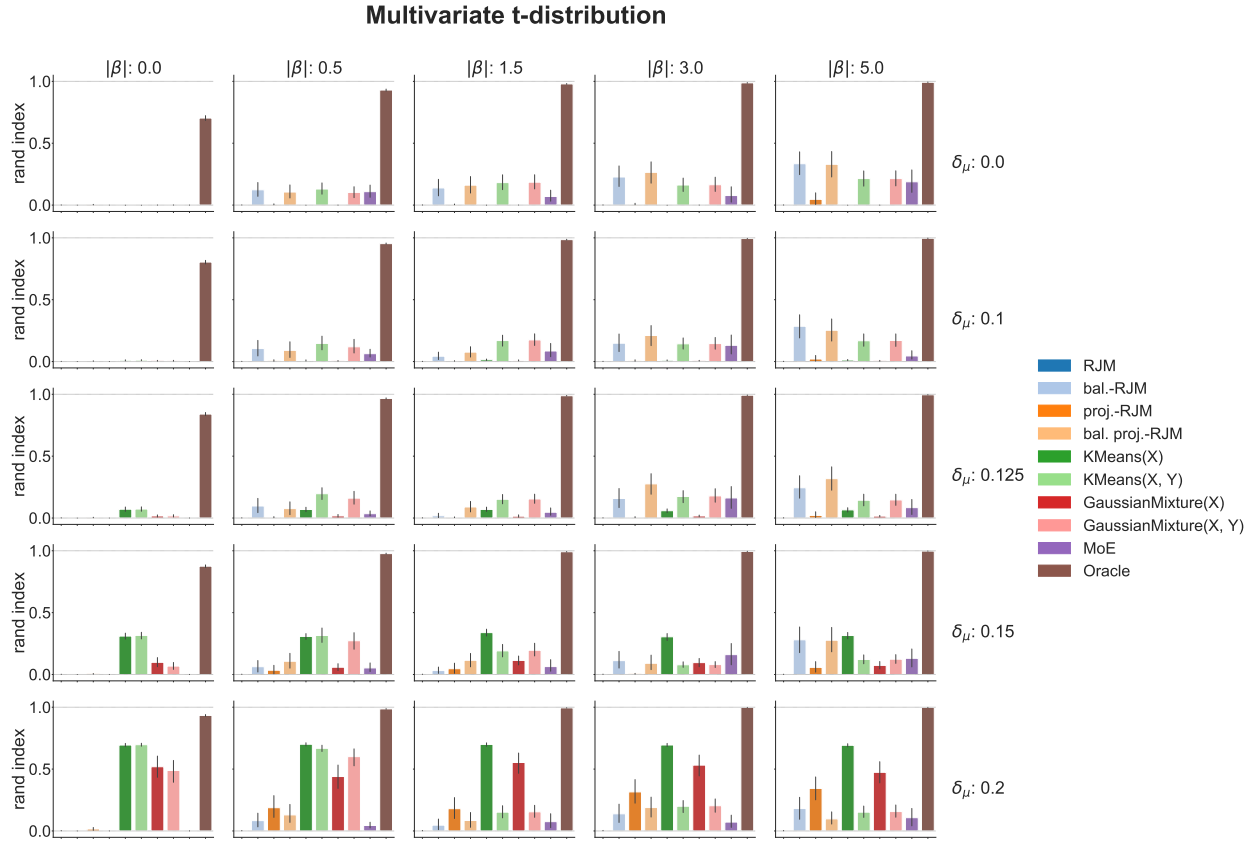}
    \caption{\textbf{t-student Rand Index.} Rand Index for different forms and intensities of signals. The data is simulated as a multivariate t-student with 10 degrees of freedom. The precision matrix are different: $\Omega_1 \neq \Omega_2$. The balanced versions (with or without projection) of RJM are the better choice when the prevalent signal in $\beta$. Whereas KMeans and MoG are preferable when there is a signal in $\mu$.}
    \label{fig:student_rand_index}
\end{figure}

\begin{figure}[tbhp]
    \centering
    \includegraphics[width=\linewidth]{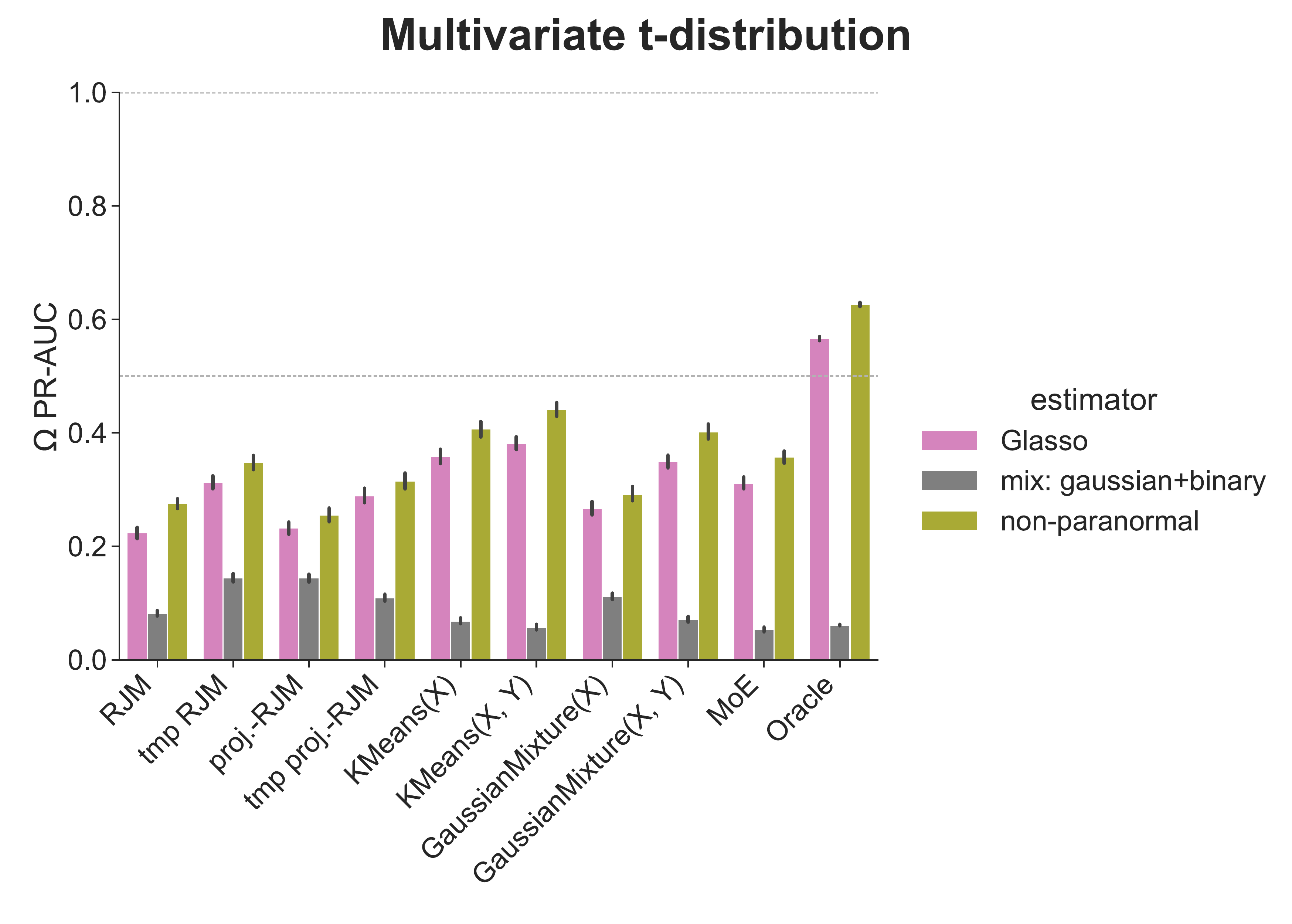}
    \caption{\textbf{t-Student sparsity pattern recovery}. PR AUC of the sparsity pattern recovery in the estimation of precision matrix $\bOmega$. The data is generated as a t-Student from the real $\bOmega$. We compare three different graph estimators always computed from the same (estimated or oracle) group-labels. The Glasso estimator comes from \cite{friedman2008sparse}, the non-paranormal from \cite{liu2009nonparanormal} and the ``mixed data" estimator from \cite{fan2017high}. With this data, the non-paranormal and Glasso estimators perform similarly, and dominate the specialised ``mixed data" estimator.}
    \label{fig:student_pr_auc}
\end{figure}

\begin{figure}[tbhp]
    \centering
    \includegraphics[width=\linewidth]{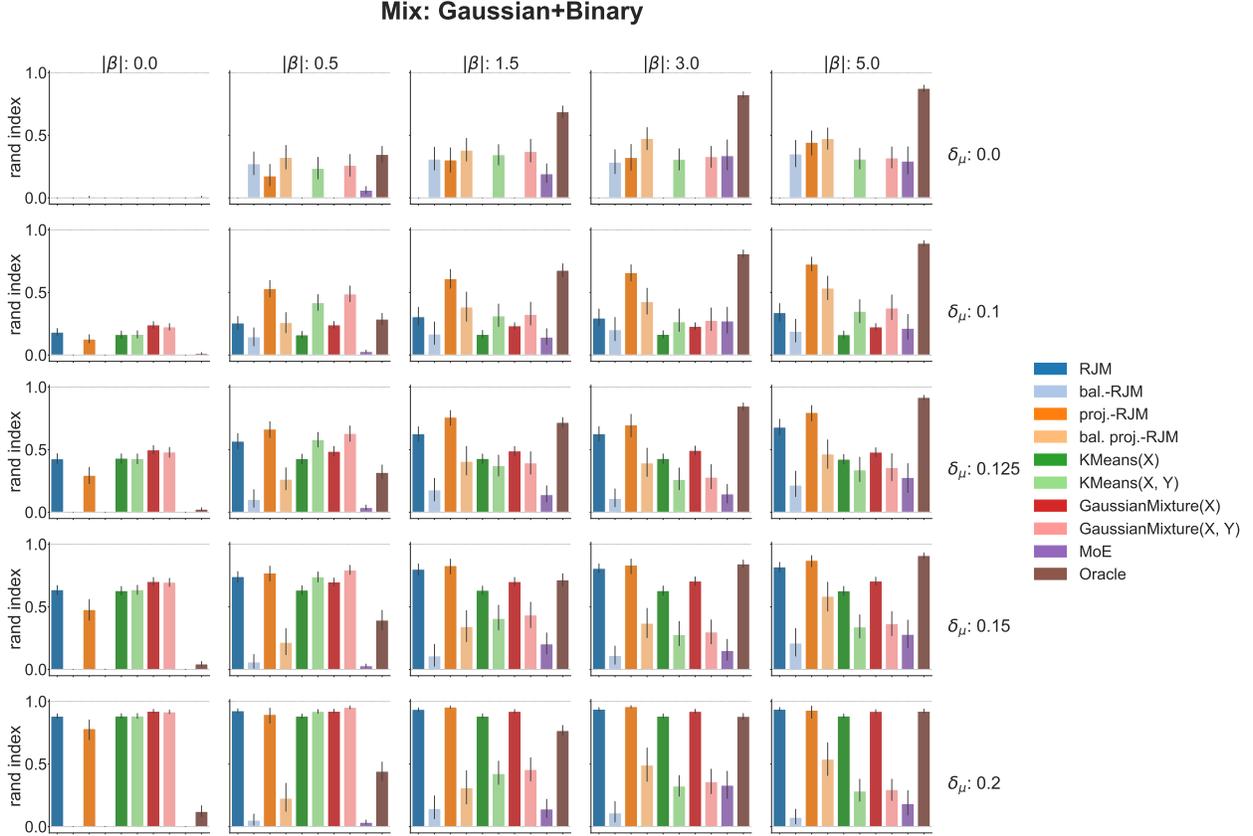}
    \caption{\textbf{Mixed data Rand Index.} Rand Index for different forms and intensities of signals when the data is made of a mix of Gaussian ($20\%$) and binary ($80\%$) variables. Despite their mixed nature, the $p$ features are described by one underlying precision matrix $\Omega_k$. The specific distribution is described in \cite{fan2017high}. The precision matrix are different: $\Omega_1 \neq \Omega_2$. The data $x$ is as a consequence very non Gaussian. This may explain why the Oracle method is under-performing a lot when there is no signal in $y$. Since the Oracle uses the latent $\bOmega$ to build a Gaussian model based classifier. When the proportion of binary features in the mix increases, the actual data becomes more and more different from $\sum_k \pi_k \N(\mu_k, \Omega_k^{-1})$, hence the prior information of the Oracle is less and less relevant. By contrast, the other methods have more flexibility since they can use an actual empirical estimation of the covariance matrices. When the signal in $y$ increases, the prior knowledge of the true $\bbeta$ gets the Oracle a strong advantage, and it becomes dominant again. Regarding the respective performances of the actual clustering methods, they are quite similar to the ones on Gaussian data as seen on \figurename~\ref{fig:grid_signal}. The regularised RJM (bal., proj. and bal. + proj.) are dominant when the signal is mostly in $y$. Regular ambient RJM and the classical KMeans and MoG are best when the signal is in $x$ only. Projected RJM is always good.}
    \label{fig:mixed_binary_rand_index}
\end{figure}

\begin{figure}[tbhp]
    \centering
    \includegraphics[width=\linewidth]{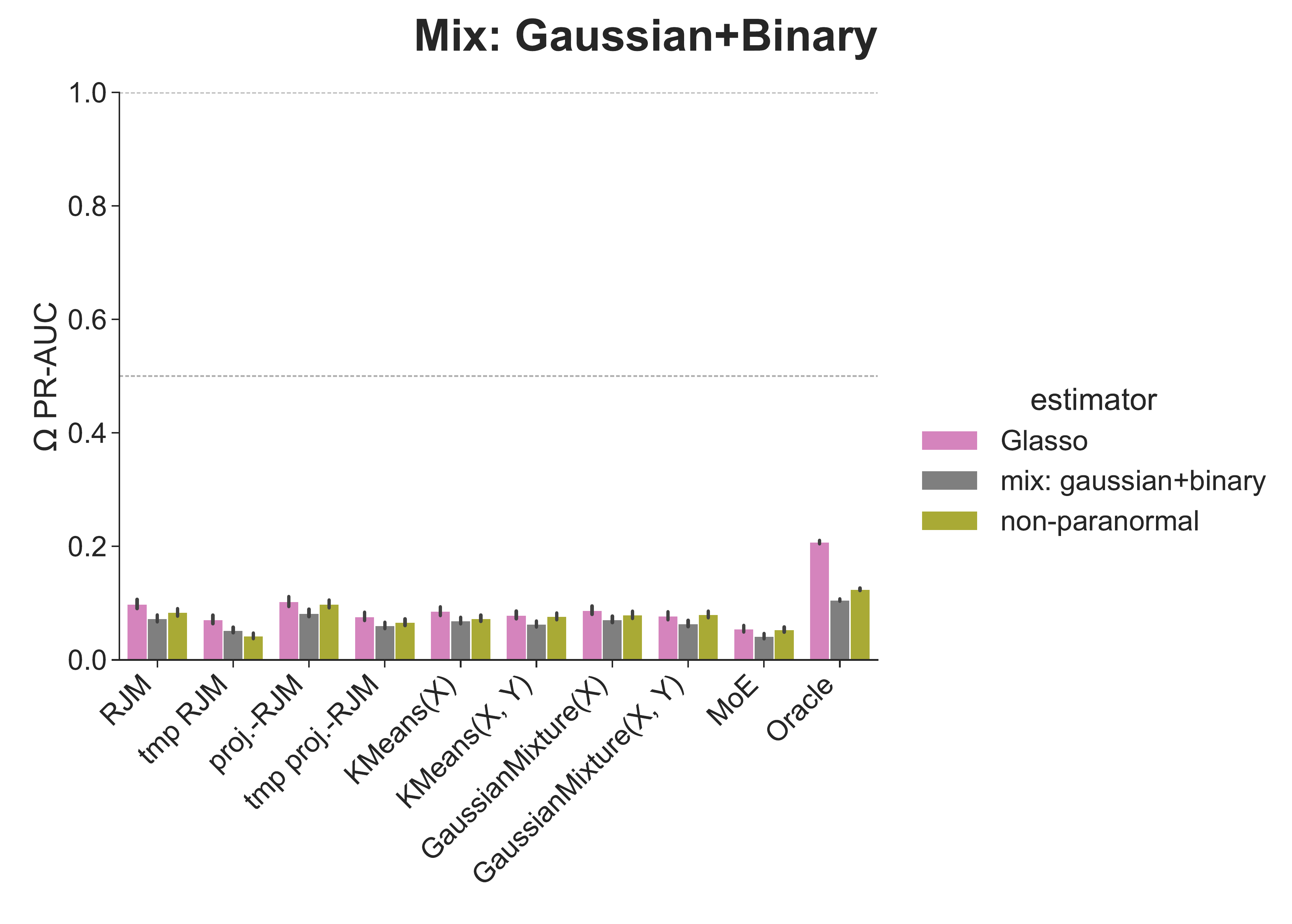}
    \caption{\textbf{Mixed data PR AUC.} PR AUC of the sparsitency in the precision matrix $\bOmega$. For each latent subgroup, the data is generated as a mix of Gaussian and binary data with an underlying matrix $\Omega_k$. We compare three different graph estimators always computed from the same (estimated or oracle) group-labels. The AUc are much worse than on \figurename~\ref{fig:student_pr_auc}. This is because the estimators feature too many False Positives in a unbalanced environment. Indeed, the True $\bOmega$ is very sparse ($\sim 4$ edges in a graph with $p=100$ nodes).}
    \label{fig:mixed_binary_pr_auc}
\end{figure}

\newpage
\section{Non-Gaussian biomedical data}\label{app:exp_tcga}
In this appendix, we present alternative versions of \figurename~\ref{fig:TCGA_vs_n_K4} (Rand Index vs sample size) with different combinations of cell types (latent subgroups) in the dataset (figures \ref{fig:TCGA_vs_n_K2} to \ref{fig:TCGA_vs_n_K3_no_brain}). Then, we explore two alternative settings for the experiment of \figurename~\ref{fig:TCGA_vs_p_K4_one_coeff} (Rand Index vs ambient dimension, figures \ref{fig:TCGA_vs_p_K4} and \ref{fig:TCGA_vs_p_K4_fixed_fraction}).

\begin{figure}[tbhp]
    \centering
    \includegraphics[width=\linewidth]{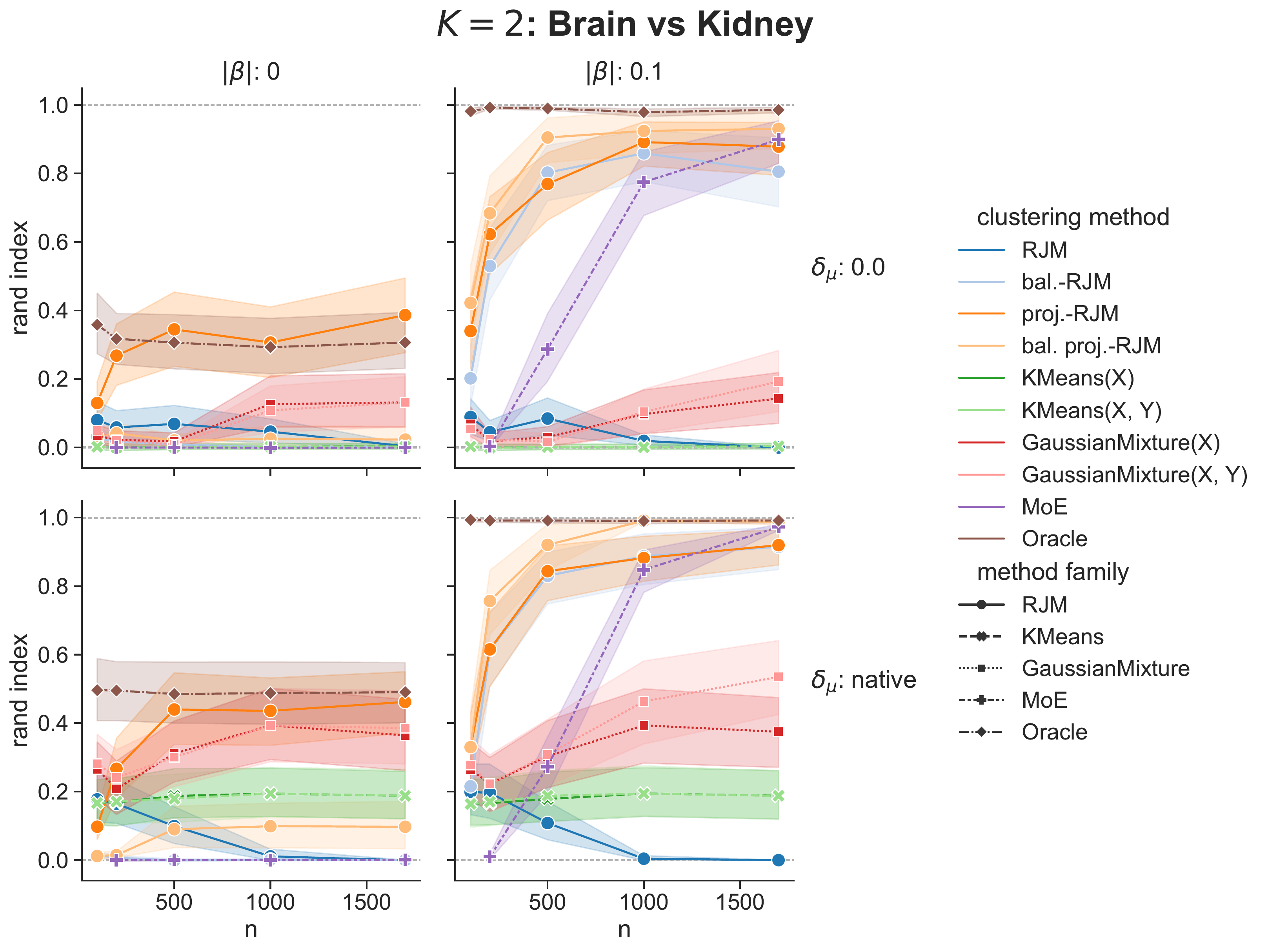}
    \caption{Rand Index against sample size $n$ with real RNA sequencing data from TCGA. Each cell comes from either \textbf{brain or kidney} tissue. The RNA count data plays the part of $x$. In order to assess the strength of covariance signal in $x$, we can remove the mean signal (top row, $\delta_{\mu} = 0$). We also add a synthetic variable $y$ defined from the real $x$ and an artificial $\bbeta$. When $|\beta|=0$, $y$ is pure noise and independent of the latent subgroup. Projected RJM is the best method on the raw data, close to Oracle when $n$ increases. Projected, bal. and bal. proj.-RJM are good when there is group-relevant signal in $\beta$.}
    \label{fig:TCGA_vs_n_K2}
\end{figure}

\begin{figure}[tbhp]
    \centering
    \includegraphics[width=\linewidth]{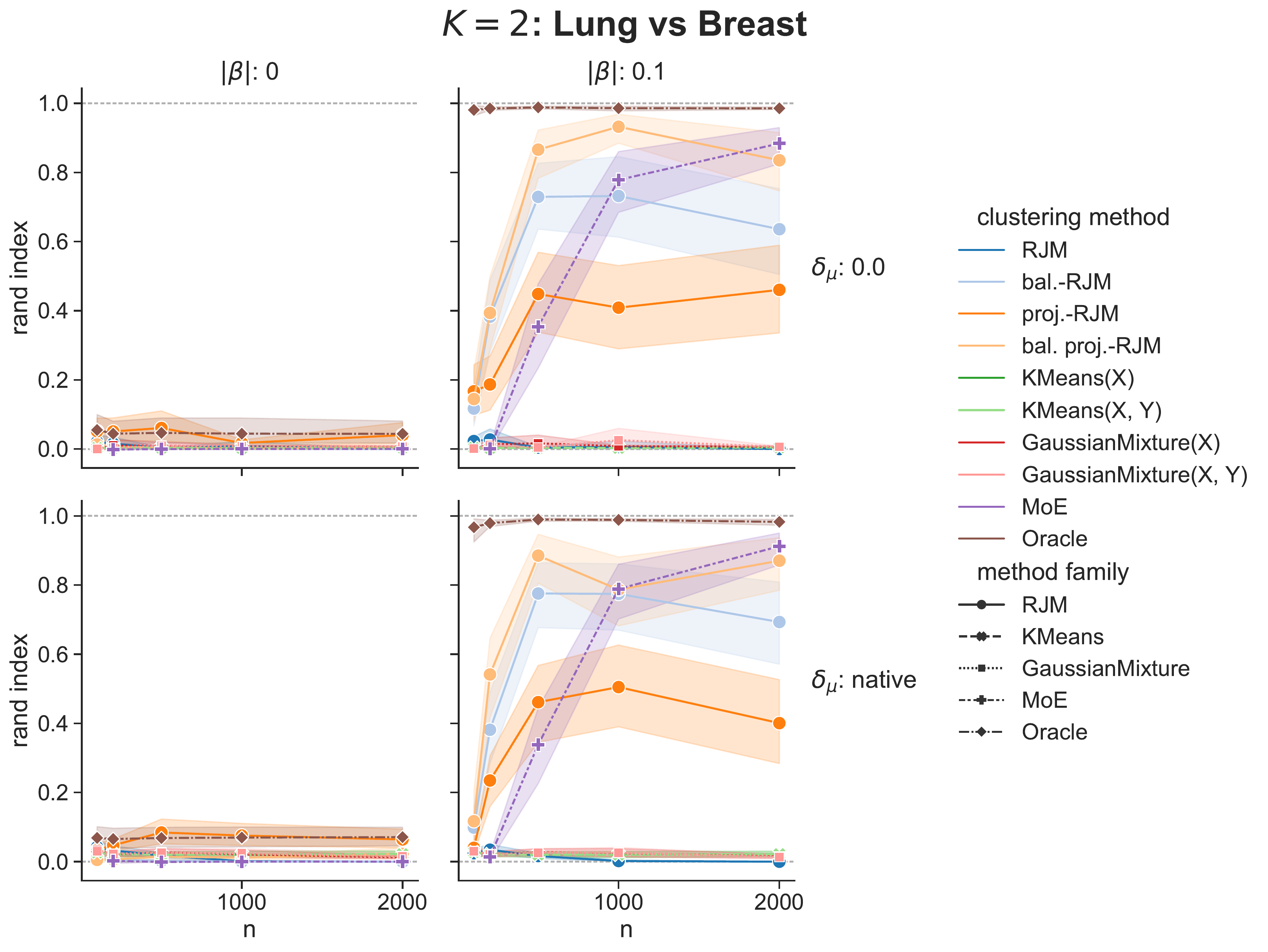}
    \caption{Rand Index against sample size $n$ with real RNA sequencing data from TCGA. Each cell comes from either \textbf{bronchus-lung or breast} tissue. The performances are worse than in \figurename~\ref{fig:TCGA_vs_n_K2} because the Brain tissue cells, the most distinct latent subgroup of the four, are absent in this experiment.}
    \label{fig:TCGA_vs_n_K2_no_brain}
\end{figure}

\begin{figure}[tbhp]
    \centering
    \includegraphics[width=\linewidth]{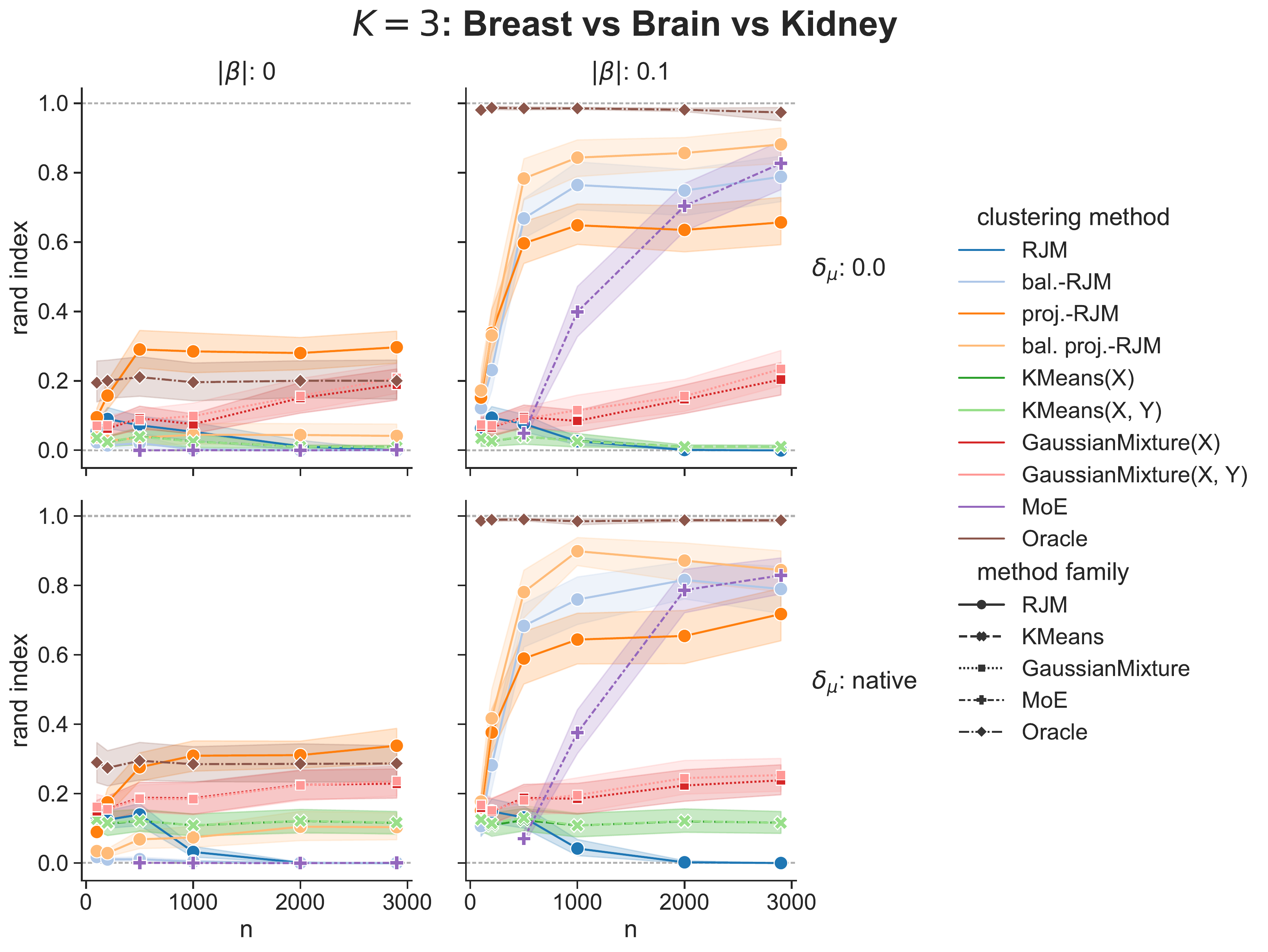}
    \caption{Rand Index against sample size $n$ with real RNA sequencing data from TCGA. Each cell comes from either \textbf{breast, brain or kidney} tissue. The results are similar to \figurename~\ref{fig:TCGA_vs_n_K2}, where $K=2$ and the brain cells are part of the dataset, with overall lower Rand Index, especially when there is no signal in $y$.}
    \label{fig:TCGA_vs_n_K3}
\end{figure}

\begin{figure}[tbhp]
    \centering
    \includegraphics[width=\linewidth]{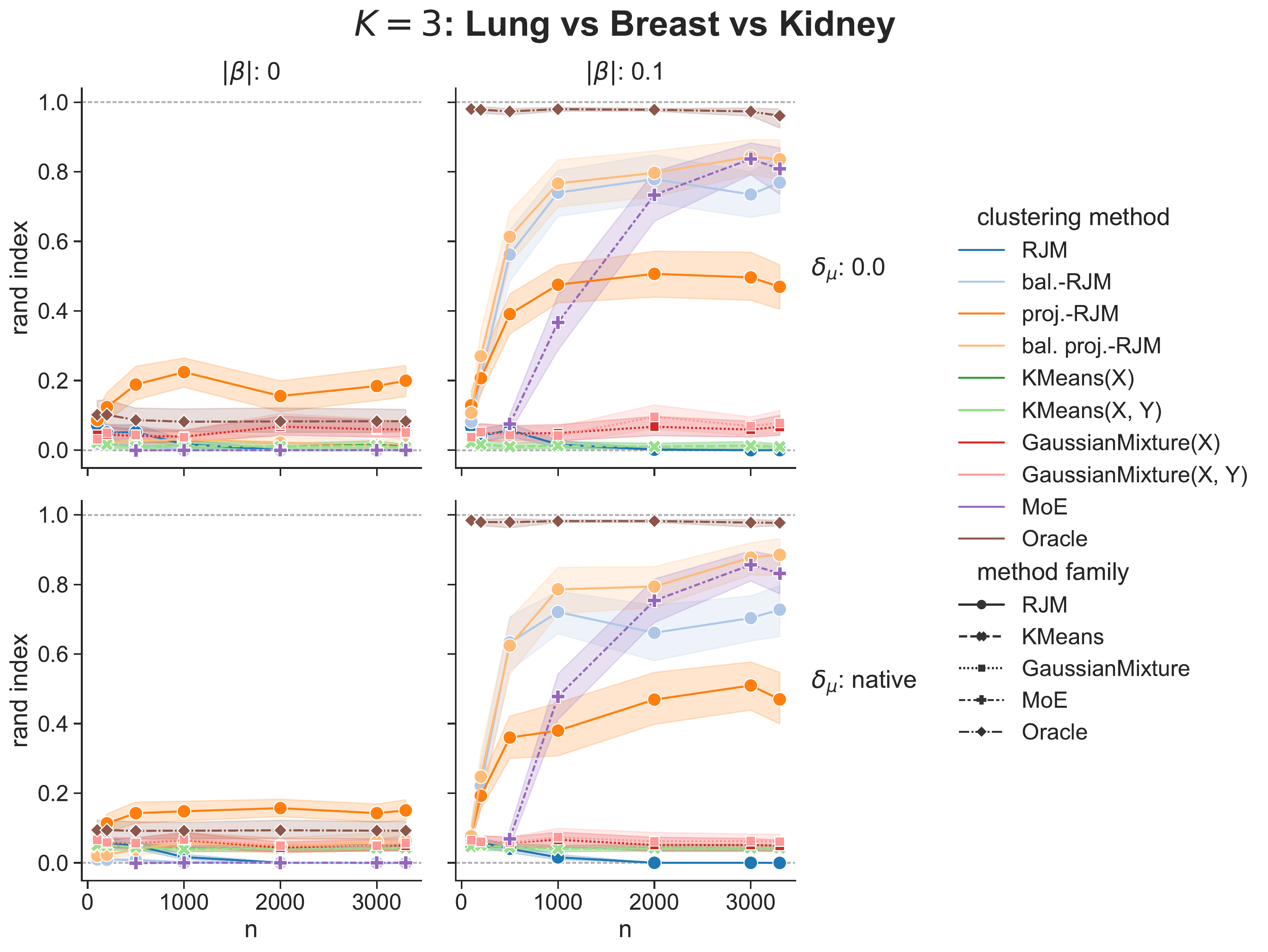}
    \caption{Rand Index against sample size $n$ with real RNA sequencing data from TCGA. Each cell comes from either \textbf{bronchus-lung, breast or kidney} tissue. In this case, proj.-RJM is clearly better than the Oracle when there is no signal in $|\beta|$. When there is signal in $|\beta|$, the two balanced RJM dominate, with proj.-RJM below, and MoE catching up when the sample size gets very large.}
    \label{fig:TCGA_vs_n_K3_no_brain}
\end{figure}

\begin{figure}[tbhp]
    \centering
    \includegraphics[width=\linewidth]{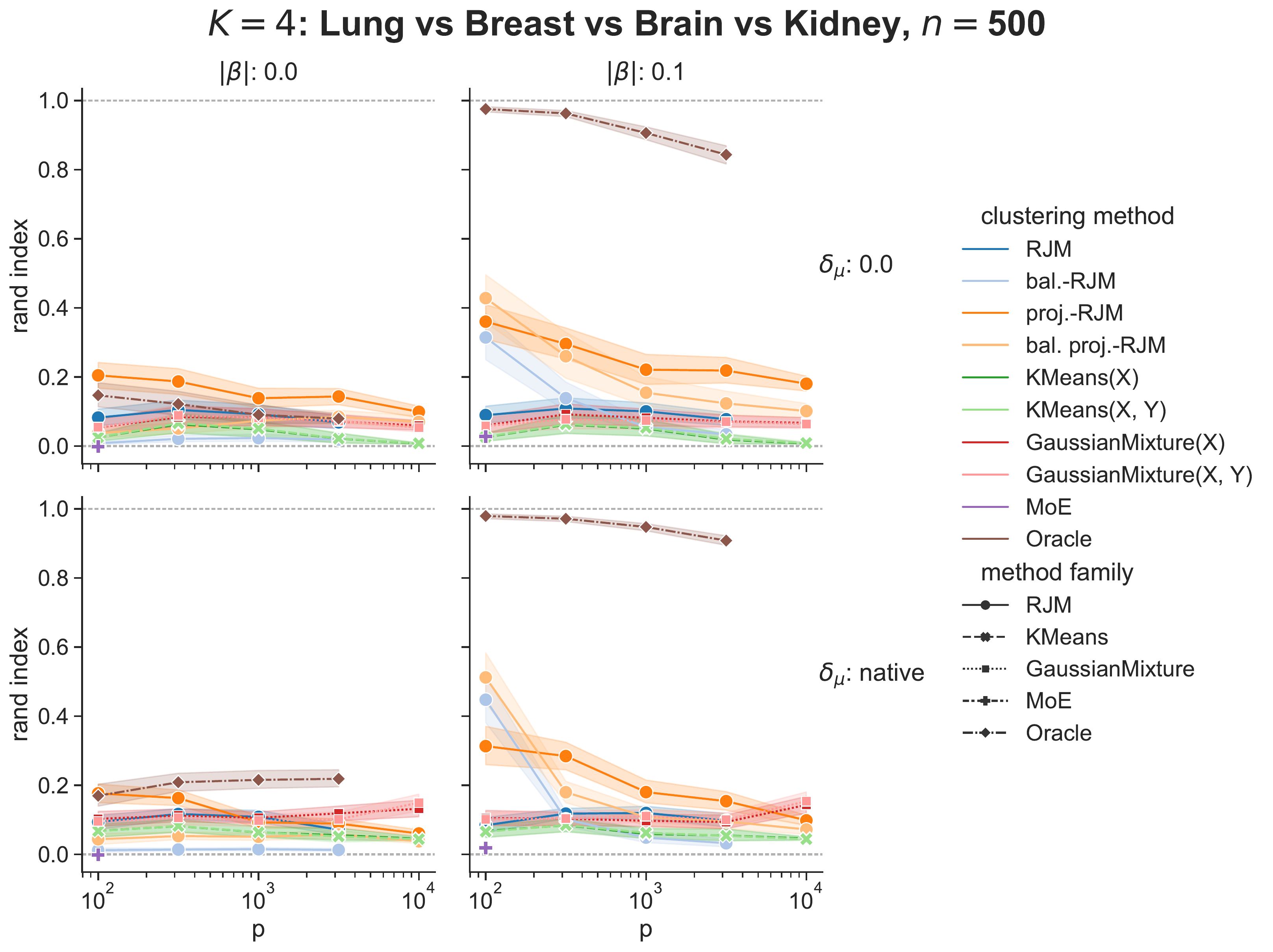}
    \caption{Rand Index against ambient space size $p$ with real RNA sequencing data from TCGA. The x-axis has a logarithmic scale. All four tissues are used. \textbf{For all $p$, only 10 non zero coefficients in $\beta_k$}. The absolute value of the non zero coefficient in $\bbeta$ is $|\beta|/10$. As in \figurename~\ref{fig:TCGA_vs_p_K4_one_coeff}, the regularised RJM methods mostly dominate.}
    \label{fig:TCGA_vs_p_K4}
\end{figure}

\begin{figure}[tbhp]
    \centering
    \includegraphics[width=\linewidth]{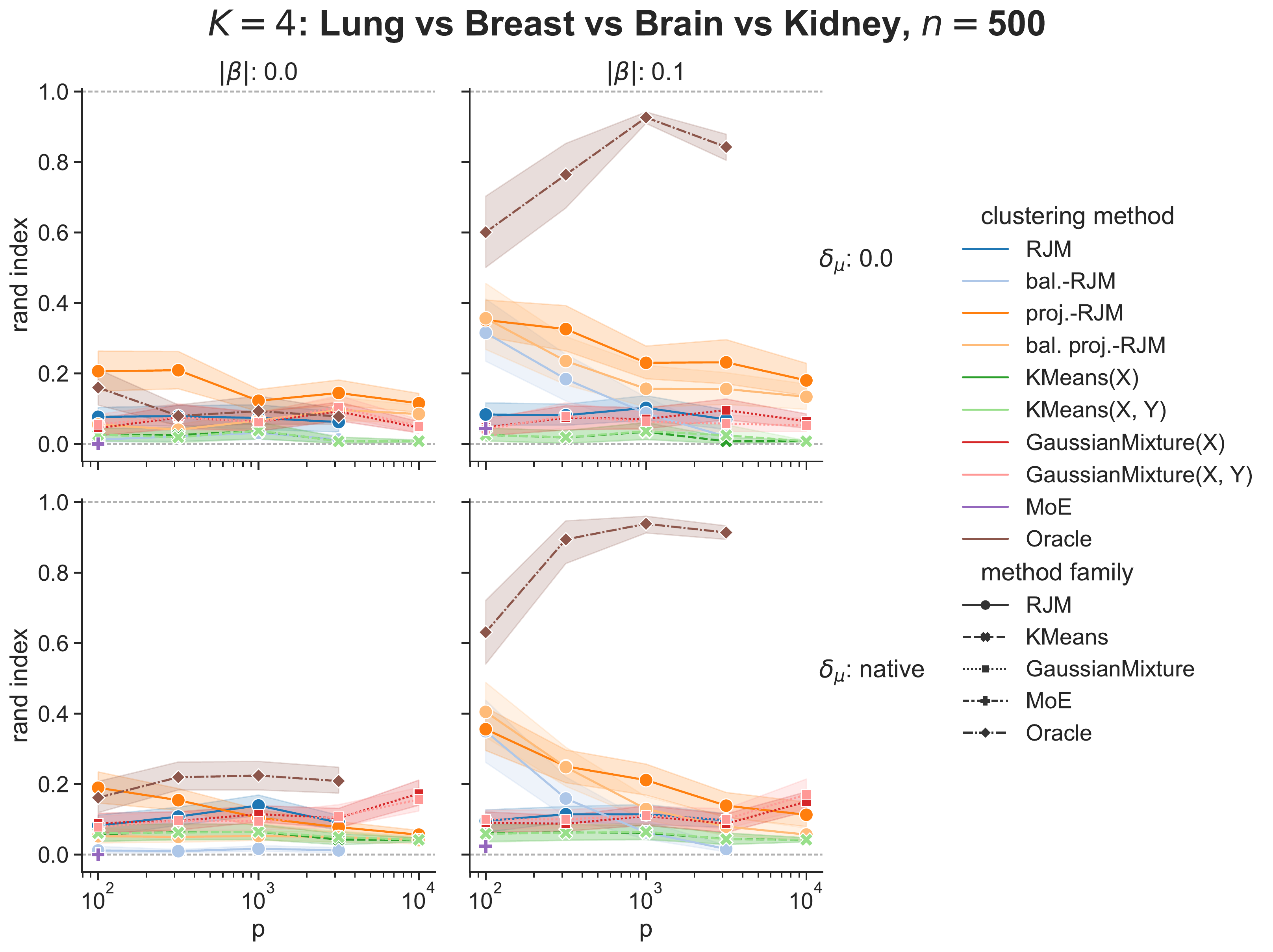}
    \caption{Rand Index against ambient space size $p$ with real RNA sequencing data from TCGA. The x-axis has a logarithmic scale. All four tissues are used. \textbf{For all $p$, $1\% p$ non zero coefficients in $\beta_k$}. The absolute value of the non zero coefficient in $\bbeta$ is $|\beta|/(1\% p)$. As in \figurename~\ref{fig:TCGA_vs_p_K4_one_coeff}, the regularised RJM methods mostly dominate.}
    \label{fig:TCGA_vs_p_K4_fixed_fraction}
\end{figure}

\newpage

\section{Fully adaptive S-RJM}\label{app:exp_mod_sel}
In this appendix, we provide additional figures (\ref{fig:rand_index_vs_q} to \ref{fig:K_q_choice_vs_pen_IC}) to expand on the analysis of the model selection experiments (Section \ref{sec:exp_mod_sel}) of the main paper.

\paragraph{Understanding the effect of $q$ and $T$.}
Before showcasing additional figures from the experiments of Section \ref{sec:exp_mod_sel}, we propose a brief analysis of the effect of the projection parameter $q$ and the balancing parameter $T$ on RJM. We explore grids of values for both these parameters and observe the resulting Rand Indices after running RJM. These experiments are made with synthetic Gaussian data, $K=2$ latent subgroups, where we control the amount of signal in $x$ and $y$ through the mean difference $\delta_{\mu}$ and the intensity $|\beta|$ of the non zero coefficients in $\beta$. Additionally, the precision matrices $\Omega_k$ are different between latent subgroups. On \figurename~\ref{fig:rand_index_vs_q}, we observe the Rand Index as a function of the embedding size $q$. For this experiment, we take a somewhat large ambient space size $p=500$, such that larger embedding sizes could reasonably be considered. The Oracle is defined from the knowledge of the true model parameter hence, like the ambient space RJM, it is independent of $q$. proj.-RJM is not expected to converge towards ambient RJM as $q$ grows since ambient RJM has an $l_2$ regularisation on $\widehat{\Omega}$ at each M-step whereas ambient RJM does not. We can see that there usually is an optimal embedding size $q^*$ different from $p$ or 1, which suggests the existence of a trade-off between regularisation and loss of information in the projection, see \cite{lartigue2022unsupervised} for an exploration of this question. On \figurename~\ref{fig:rand_index_vs_T}, we observe the Rand Index as a function of the balancing exponent $T$. In this example, we take $T = q^{\upsilon}$ and vary the value of $\upsilon$, hence $\upsilon=\frac{\ln T}{\ln q}$ is the natural x-axis. Negative $\upsilon$ means that $x$ weights more than usual in the E-step, positive $T$ means the opposite. Since $x$ is of dimension $p=100$ for RJM and $q=5$ for proj.-RJM, $x$ ``weights" natively more than $y$ when $T=1$ (no re-balancing). The balancing can have a huge impact on the performances. The obvious trend apparent on \figurename~\ref{fig:rand_index_vs_T} is that reducing the weight of $x$ ($T>0$) improves the performances when the prevalent signal is in $y$ and vice versa. Other observations: even though the orange and blue curves share a similar shape overall, they exhibit crucial difference. On the two rightmost figures of the first row (high signal in $y$), with no re-balancing ($T=1$), the ambient RJM has an approximately 0 Rand Index, whereas, even without any re-balancing, the projected RJM is at 0.45 (left figure) or 0.65 (right figure). Then, as $T$ increases, the performance of both RJM get better, with the ambient RJM ``catching up" with its projected version. This suggest that, with prevalent signal in $y$, some form of regularisation is needed for good performances, and that one or the other (projection or balancing) is enough to see a performance improvement. With both yielding the best results as evidenced by the higher Rand Index achieved by the projected RJM for high values of $T$.
\begin{figure}[tbhp]
    \centering
    \includegraphics[width=\linewidth]{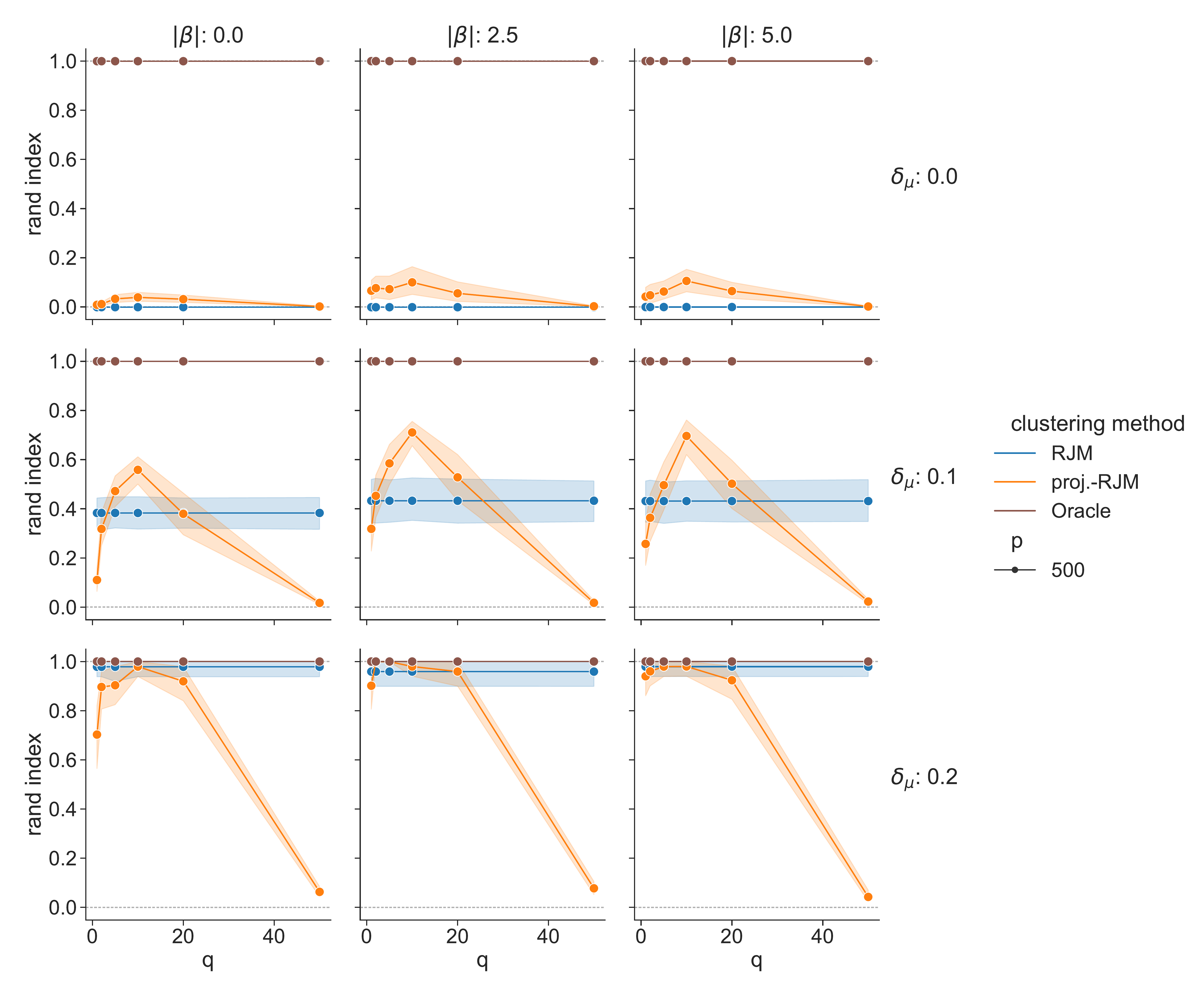}
    \caption{Rand Index vs the embedding space size, over a grid of different signal types and intensities. The true covariance matrices are different in each subgroup. The optimal embedding size $q$ is most often somewhere in between the extremes $q=1$ and $q=p$.}
    \label{fig:rand_index_vs_q}
\end{figure}

\begin{figure}[tbhp]
    \centering
    \includegraphics[width=\linewidth]{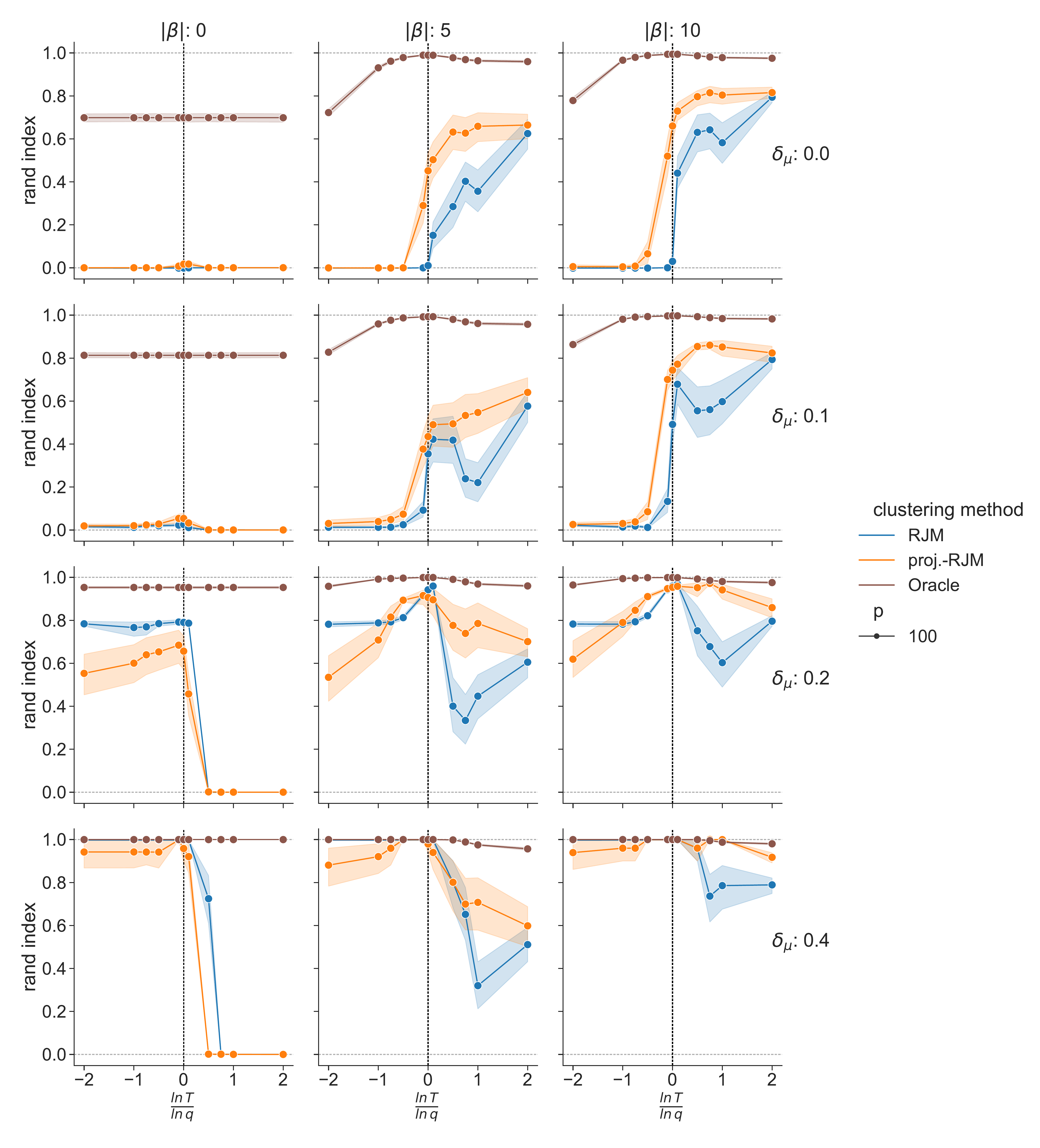}
    \caption{Rand Index vs the balancing of $x$. We take $T=q^{\upsilon}$, and set the exponent $\upsilon$ to be the x-axis. Depending on the nature of the signal, the correct re-balancing of blocs $x$ and $y$ can drastically improve the performances.}
    \label{fig:rand_index_vs_T}
\end{figure}

\newpage
\paragraph{Model selection procedure.} In the following, we provide additional figures from the experiment of Section \ref{sec:exp_mod_sel} to illustrate and analyse our model selection procedure, which sets $(q, K)$ in an adaptive way.

On \figurename~\ref{fig:rand_index_vs_K_q}, we observe the Rand Index of proj.-RJM for each combination of $(q, K)$ over our grid of values. These Rand Indices are averaged over the 50 simulations. This means that, despite the better average Rand Index being obtained with $q=5, K=5$, this combination $q, K$ is not necessarily the best for every simulation. An adaptive method, able to fine tune $q, K$ for each simulation could potentially achieve even higher performances. Of course, an adaptive method does not have access to these oracle Rand Indices, and instead bases its decision on empirical heuristics.

\begin{figure}[tbhp]
    \centering
    \includegraphics[width=\linewidth]{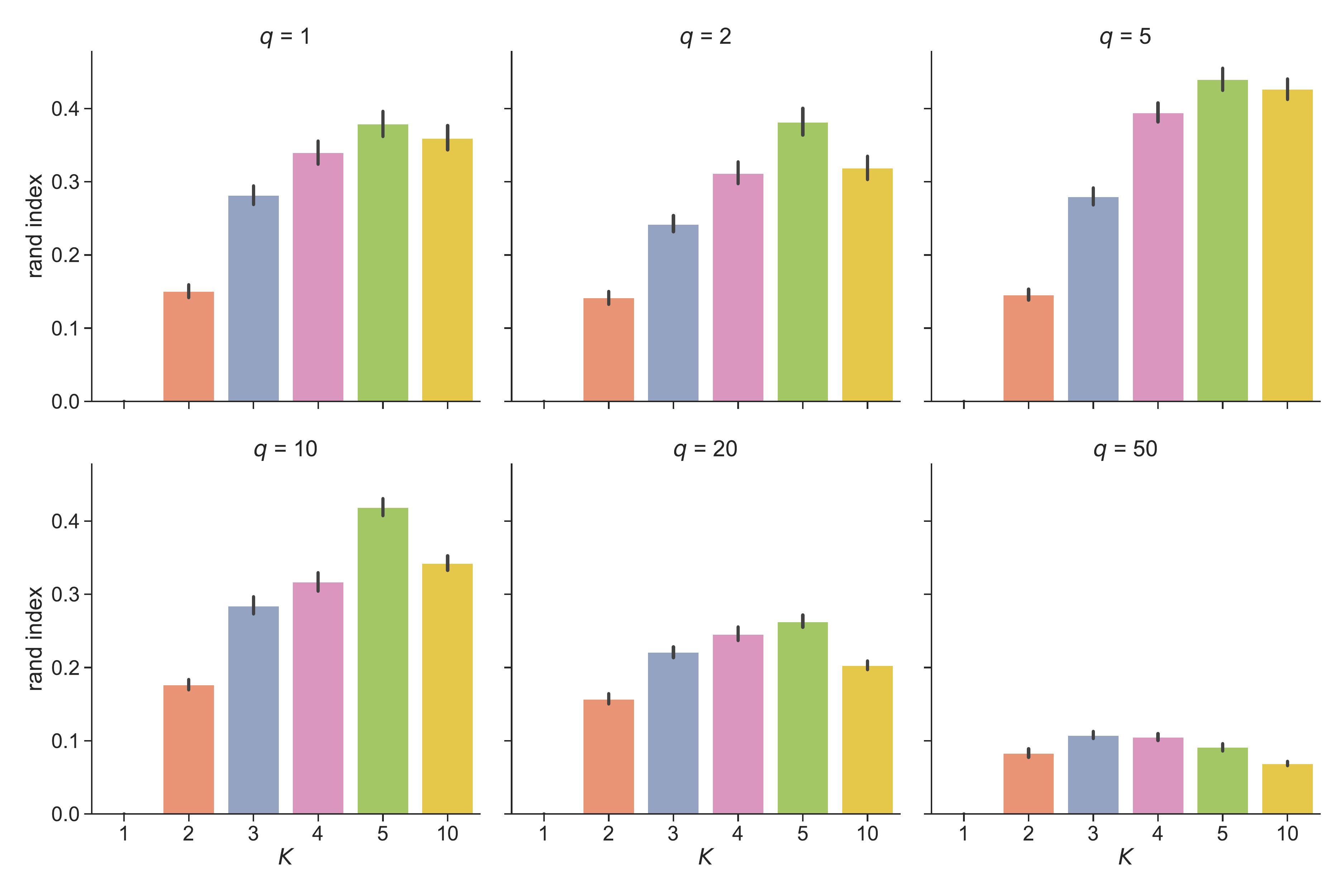}
    \caption{Average Rand Index of proj.-RJM for each combination of $K$ and $q$. We can see that the better average Rand Index is obtained with $q=5, K=5$. Other combinations around this one also have decent average Rand Indices.}
    \label{fig:rand_index_vs_K_q}
\end{figure}

Our procedure first selects a $\widehat{K}(q)$ for each $q$ according to an Information Criterion (IC), here either AIC or BIC. Then choose a pair $(q, \widehat{K}(q)$ according to a stability criterion. On \figurename~\ref{fig:K_choice}, we visualise the first step: the choice of $\widehat{K}(q)$ by the IC for each value of $q$. We display the histogram of these choices over the 50 simulation. Then on \figurename~\ref{fig:K_q_choice}, we visualise the final choice of the pair $(\widehat{q}, \widehat{K}(\widehat{q})$, with the stability criterion taken into account. On the \figurename, we associate to each pair $(q, k)$ the number of times it has been chosen by the model selection procedure, over the 50 simulations. As expected, BIC is more conservative and often picks smaller values of $K$. However, the most common choice with AIC is $(\widehat{q}, \widehat{K}(\widehat{q}) = (5, 5)$, the combination which we know from \figurename~\ref{fig:rand_index_vs_K_q} to have the highest average Rand Index. The other pairs often chosen by AIC also have good average Rand Index. This is good sign for the model selection procedure with AIC.

\begin{figure}[tbhp]
    \centering
    \includegraphics[width=\linewidth]{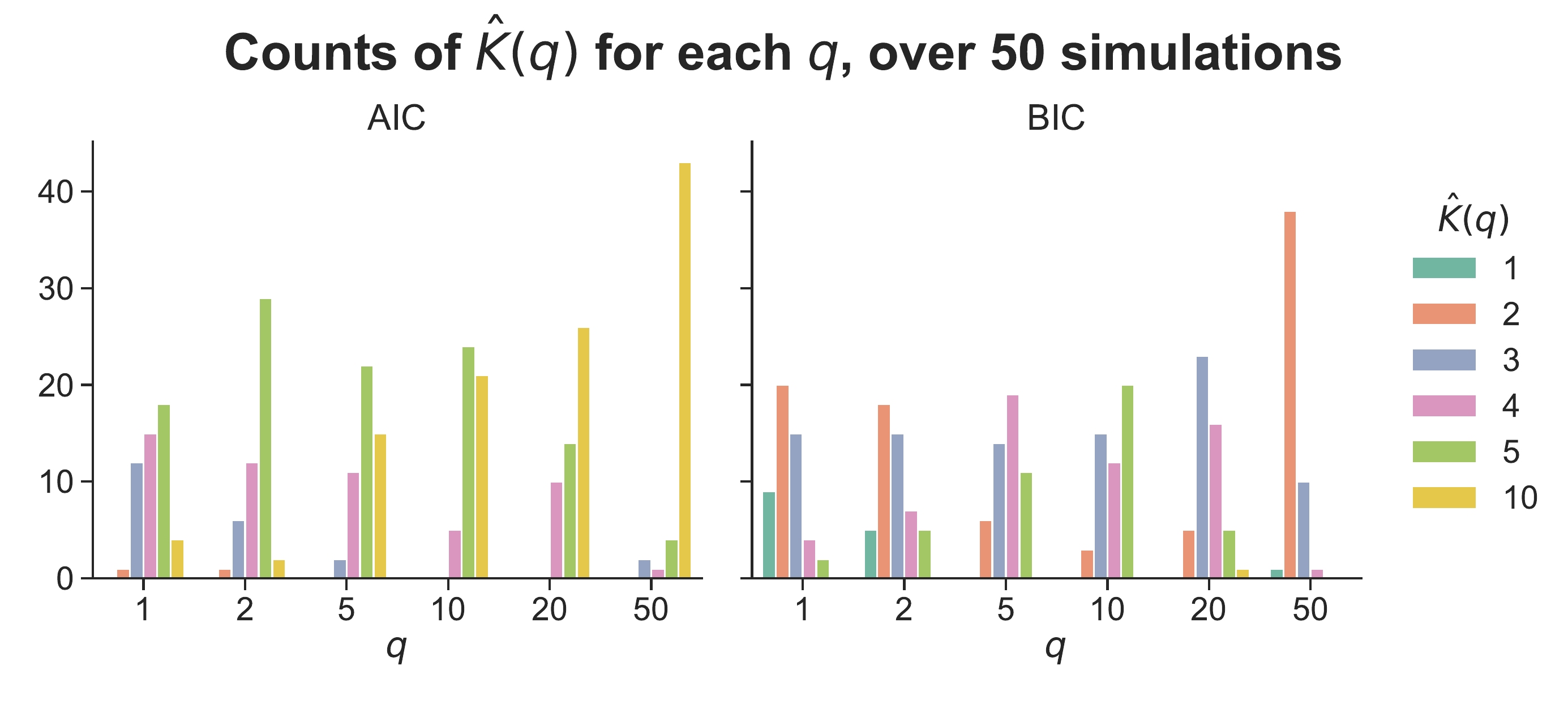}
    \caption{For each value of the embedding space size $q$, number of time each value of $K$ is picked among the candidates by the Information Criterion over the 50 simulations.}
    \label{fig:K_choice}
\end{figure}

\begin{figure}[tbhp]
    \centering
    \includegraphics[width=\linewidth]{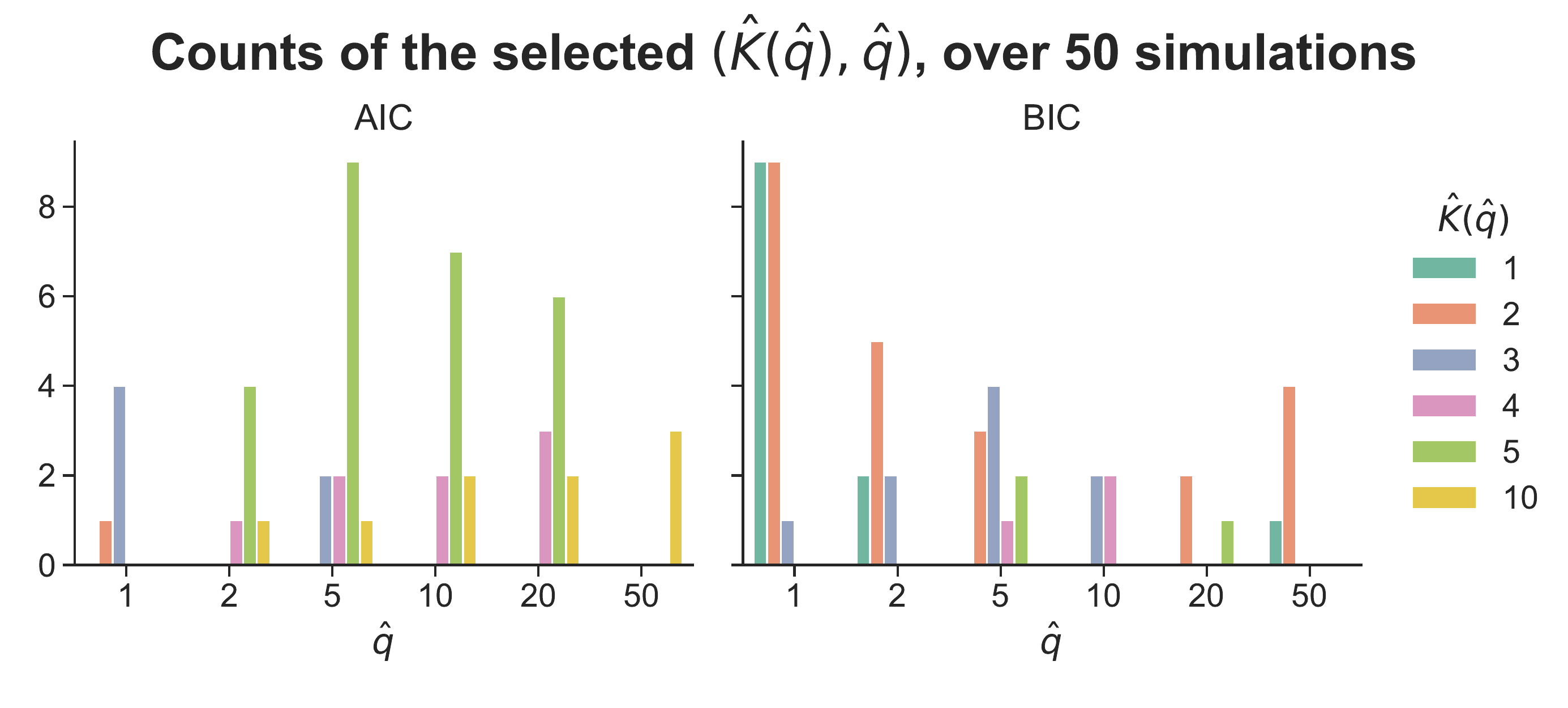}
    \caption{Number of time each pair of $(q, K)$ is picked among the candidates by the Information Criterion over the 50 simulations. BIC is conservative and ofren selects $K=1$. By crossing these results with \figurename~\ref{fig:rand_index_vs_K_q}, we can see that AIC selects models with good average Rand Index.}
    \label{fig:K_q_choice}
\end{figure}

To confirm this, we display on \figurename~\ref{fig:rand_index_vs_pen_IC} the Rand Index realised by projected RJM with $(\widehat{q}, \widehat{K}(\widehat{q}))$ selected adaptively for each simulation. 

In this \figurename, we consider variants of the AIC and BIC where a factor $\eta$ controls the amount of penalisation added by the criterion:
\begin{equation*}
    \text{AIC} = - 2\ln p(\by, \bx ; \widehat{\btheta}) + 2  \eta df  \, ,
\end{equation*}
\begin{equation*}
    \text{BIC} =- 2 \ln p(\by, \bx ; \widehat{\btheta}) + \eta df \ln(n) \, .
\end{equation*}
Up until this point, we have been using the default value $\eta = 1$, corresponding to the theoretical AIC and BIC. The addition of $\eta$ allows the user to exert more control on the selection criterion, at the cost of having a new hyper-parameter to fine-tune. From \figurename~\ref{fig:rand_index_vs_pen_IC}, we conclude that we can confidently keep using the default value $\eta=1$ for AIC.
\begin{figure}[tbhp]
    \centering
    \includegraphics[width=\linewidth]{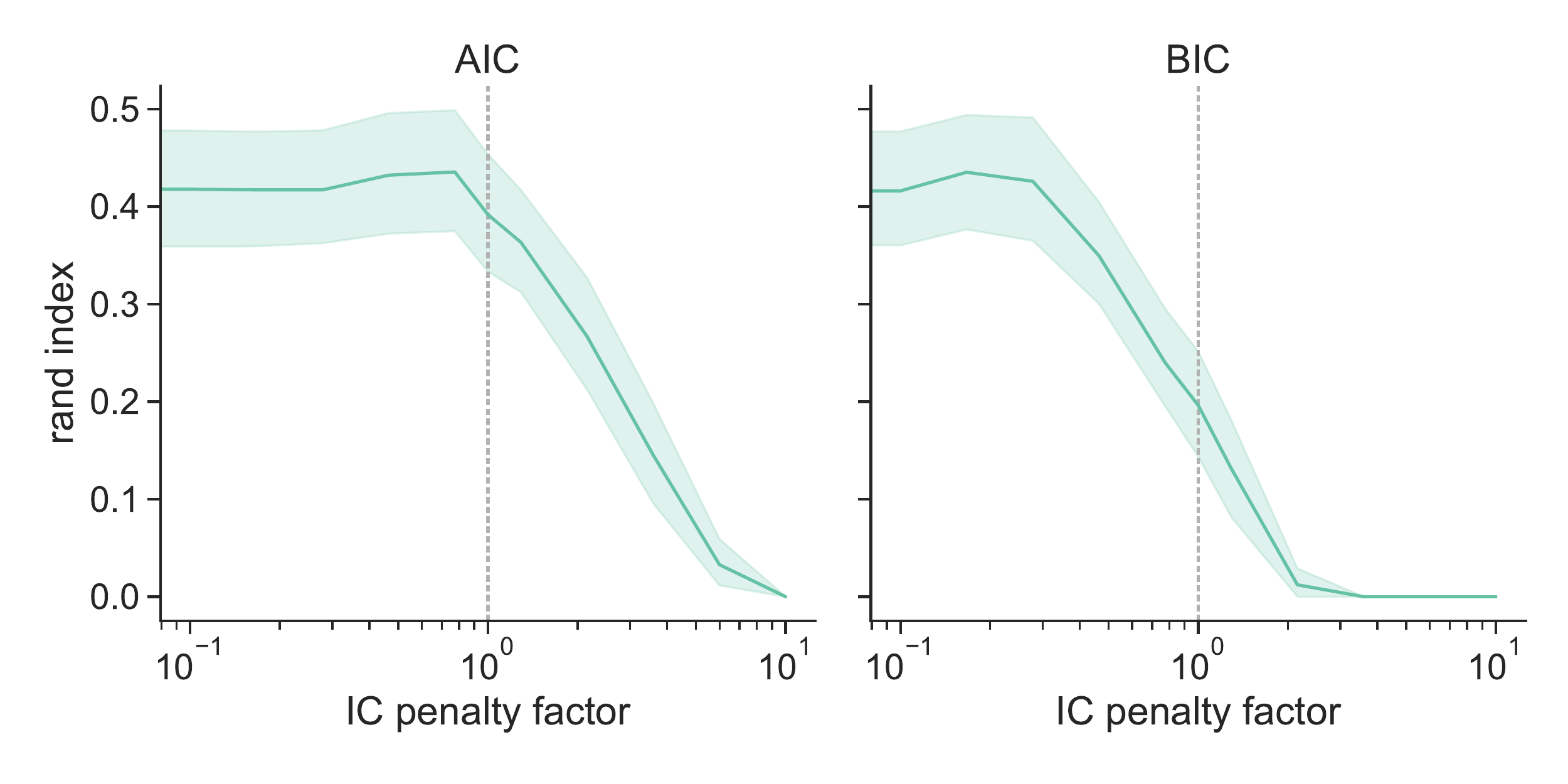}
    \caption{Rand Index of proj.-RJM with adaptive $K, q$ vs the multiplicative factor $\eta$ of the IC.}
    \label{fig:rand_index_vs_pen_IC}
\end{figure}

It may seem from looking at \figurename~\ref{fig:rand_index_vs_pen_IC} that using an IC at all is superfluous since a good Rand Index can be reached with low penalisation. However, we showcase on \figurename~\ref{fig:K_q_choice_vs_pen_IC} how the choice of $(\widehat{q}, \widehat{K}(\widehat{q})$ is affected by the penalty factor. On this \figurename, it is apparent that the presence and intensity of the AIC penalty has an impact on the selected models. In particular, with low AIC penalty, the largest possible $K$ ($K=10$ here) is the most selected. Whereas the selection is more skewed towards $K=3, 4$ or 5 with the regular AIC (factor = 1). This is of course the intended effect of using an information criterion such as AIC in the first place, and serves as a confirmation that, even though the Rand Index is not too different, the model selection is better behaved with AIC than without.

The results of \figurename~\ref{fig:K_q_choice_vs_pen_IC} may also seem paradoxical since, normally in hierarchical models, selection of the number of latent subgroups based on likelihood alone (when the penalty factor $\eta = 0$) should systematically favour the higher number of subgroups possible ($K=10$ here). Yet, this is not what we observe here. This is because the likelihood is non-convex and imperfectly optimised by the EM algorithm. Hence, even with the same data, some runs of the optimiser with lower values of $K$ can sometimes reach higher likelihood values than the run with $K=10$.\\
\\
For the culmination of this experiment, the Rand Index results of all RJM variants with adaptive selection of $(q, K)$, compared to all the other clustering methods, are displayed and discussed on \figurename~\ref{fig:model_selection_full_rand_index}, in the main body of the paper.

\begin{figure}[tbhp]
    \centering
    \includegraphics[width=\linewidth]{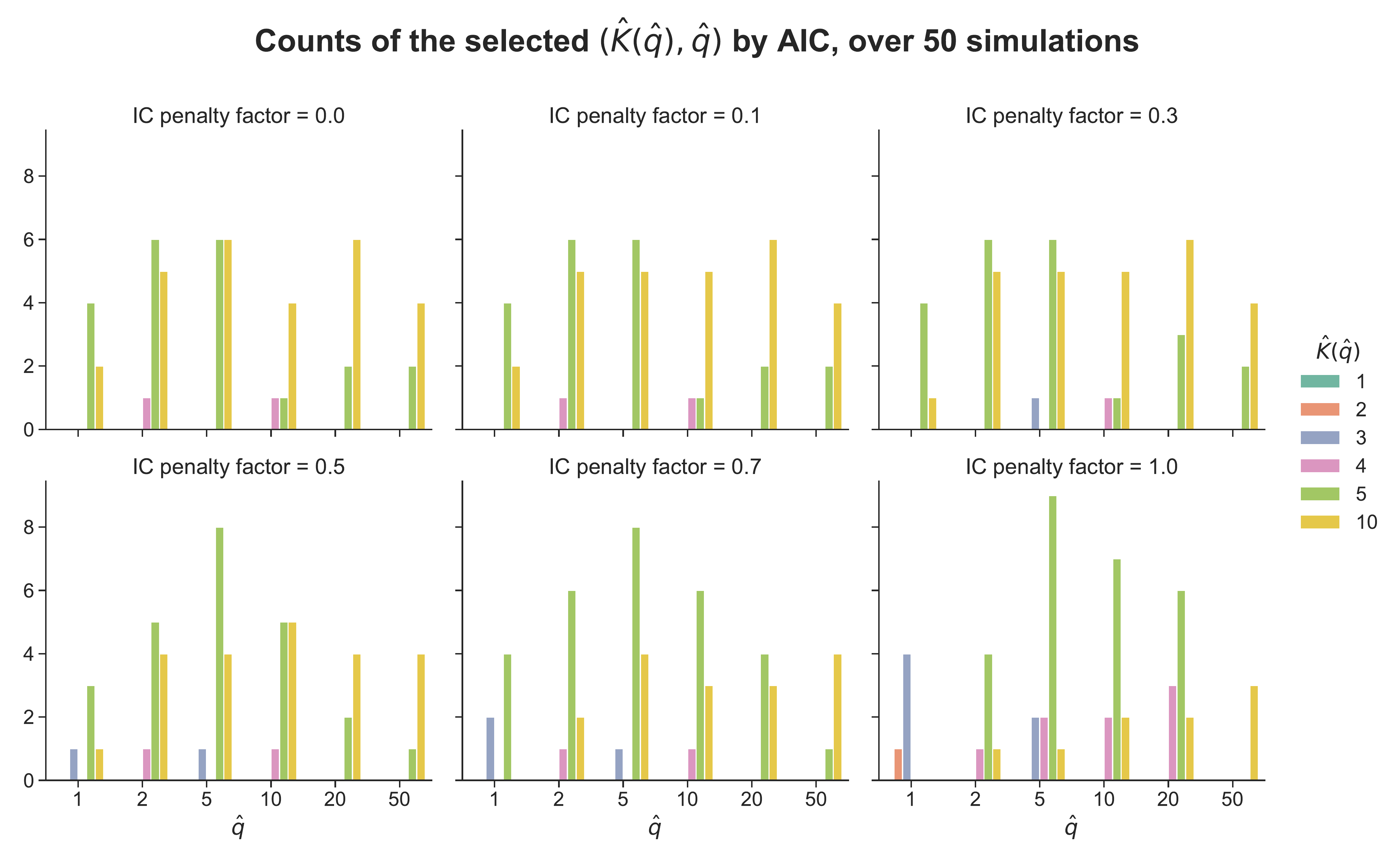}
    \caption{Choice of $K, q$ by the adaptive proj.-RJM vs the multiplicative factor $\eta$ of the IC. A factor $\eta= 1$ is the regular AIC, while a factor $\eta=0$ corresponds to a purely likelihood-based selection of $\widehat{K}(q)$. As we can see, the factor of the AIC penalty affects the selected model. With a factor of 1, the maximum authorised value $K=10$ is picked much less often.}
    \label{fig:K_q_choice_vs_pen_IC}
\end{figure}

\vskip 0.2in
\bibliography{bibliography}

\end{document}